\documentclass{article}



\usepackage[preprint]{neurips_2025}



\usepackage[utf8]{inputenc} 
\usepackage[T1]{fontenc}    
\usepackage{url}            
\usepackage{booktabs}       
\usepackage{amsfonts}       
\usepackage{nicefrac}       
\usepackage{microtype}      
\usepackage{xcolor}         


\usepackage{amsmath,amsfonts,bm}

\newcommand{\domainParam}{\phi}
\newcommand{\domainParamSet}{\Phi}



\usepackage{xspace}

\newcommand{\optimalPi}[0]{\ensuremath{\pi^*}\xspace}
\newcommand{\expectation}[0]{\ensuremath{\mathbb{E}}\xspace}
\newcommand{\dataset}[0]{\ensuremath{\mathcal{D}}\xspace}







\def\eqref#1{equation~\ref{#1}}









\def\1{\bm{1}}










\DeclareMathAlphabet{\mathsfit}{\encodingdefault}{\sfdefault}{m}{sl}
\SetMathAlphabet{\mathsfit}{bold}{\encodingdefault}{\sfdefault}{bx}{n}











\newcommand{\E}{\mathbb{E}}

\newcommand{\R}{\mathbb{R}}

\newcommand{\Cov}{\mathrm{Cov}}


\DeclareMathOperator*{\argmax}{argmax}
\DeclareMathOperator*{\argmin}{argmin}

\usepackage{subfigure}
\usepackage{amsmath}
\usepackage{amssymb}
\usepackage{mathtools}
\usepackage{amsthm}
\usepackage{array}
\usepackage{enumitem}
\usepackage{algorithm}
\usepackage{algpseudocode}
\usepackage{graphicx}
\usepackage{subcaption}
\usepackage{pdflscape}
\usepackage{pdfpages}
\usepackage{wrapfig}
\usepackage{makecell}
\usepackage{mine}
\usepackage[para]{footmisc}

\title{Safe Domain Randomization via Uncertainty-Aware Out-of-Distribution Detection and Policy Adaptation}

%

\author{%
  Mohamad H.~Danesh\textsuperscript{1}\thanks{Correspondence: \texttt{mohamad.danesh@mail.mcgill.ca}\\}
  \And
  Maxime Wabartha\textsuperscript{1}%
  \And
  Stanley Wu\textsuperscript{1}%
  \And
  Joelle Pineau\textsuperscript{1,2}%
  \And
  Hsiu-Chin Lin\textsuperscript{1}%
}

\begin{document}

\maketitle

\begingroup
  \renewcommand\thefootnote{\arabic{footnote}}
  \footnotetext[1]{McGill University, Mila - Quebec AI Institute}
  \footnotetext[2]{Meta AI}
\endgroup

\begin{abstract}
Deploying reinforcement learning (RL) policies in real-world involves significant challenges, including distribution shifts, safety concerns, and the impracticality of direct interactions during policy refinement. Existing methods, such as domain randomization (DR) and off-dynamics RL, enhance policy robustness by direct interaction with the target domain, an inherently unsafe practice. 
We propose Uncertainty-Aware RL (\ourMethod), a novel framework that prioritizes safety during training by addressing Out-Of-Distribution (OOD) detection and policy adaptation \textit{without} requiring direct interactions in target domain. \ourMethod employs an ensemble of critics to quantify policy uncertainty and incorporates progressive environmental randomization to prepare the policy for diverse real-world conditions. By iteratively refining over high-uncertainty regions of the state space in simulated environments, \ourMethod enhances robust generalization to the target domain without explicitly training on it. We evaluate \ourMethod on MuJoCo benchmarks 
and a quadrupedal robot, 
demonstrating its effectiveness in reliable OOD detection, improved performance, and enhanced sample efficiency compared to baselines.
\end{abstract}


\section{Introduction}\label{sec:intro}
Deploying RL policies in real-world poses substantial safety concerns. 
The online nature of RL often proves impractical due to the risks and costs associated with the continuous interaction with the environment~\citep{sutton2018reinforcement, levine2020offline}. 
The adoption of RL in real-world applications, such as robotics \citep{kober2013reinforcement} and industrial control \citep{spielberg2019toward}, has highlighted the importance of addressing this challenge. 
A widely adopted approach is to train a policy in a simulated environment before deploying it in a real-world setting.
However, real-world environments often deviate from simulations; events such as sensor noise or unmodeled environmental changes can significantly degrade policy performance \citep{zhao2020sim, danesh2021out}.


To mitigate these challenges, recent research in off-dynamics RL focuses on aligning source domain dynamics with those of the target domain. These methods penalize transitions that are overly reliant on source domain characteristics, thereby encouraging policies to generalize better to target domain \citep{eysenbach2021offdynamics}. Furthermore, robust control strategies have been proposed to optimize policies for worst-case scenarios \citep{iyengar2005robust}, though they often face difficulties arising from distributional shifts between the data collection policy and the learned policy, especially during fine-tuning \citep{lee2022offline, zheng2023adaptive}. 
Moreover, domain randomization (DR) enhances policy robustness by training in the source domain with variability to improve transferability to the target domain \citep{tobin2017domain}. This approach has gained traction as it allows training on pre-collected static datasets \citep{levine2020offline}, generated from perturbed environments in simulation to reflect potential target domain conditions. However, designing simulated environments that accurately capture the variability and dynamics of the target domain remains a significant challenge. Furthermore, repetitively testing policies trained through DR on physical hardware introduces safety concerns, as the policies may exhibit unexpected and potentially unsafe behaviors \citep{pmlr-v100-mehta20a}.




\begin{figure*}[t]
    \centering
    \includegraphics[width=0.7\linewidth]{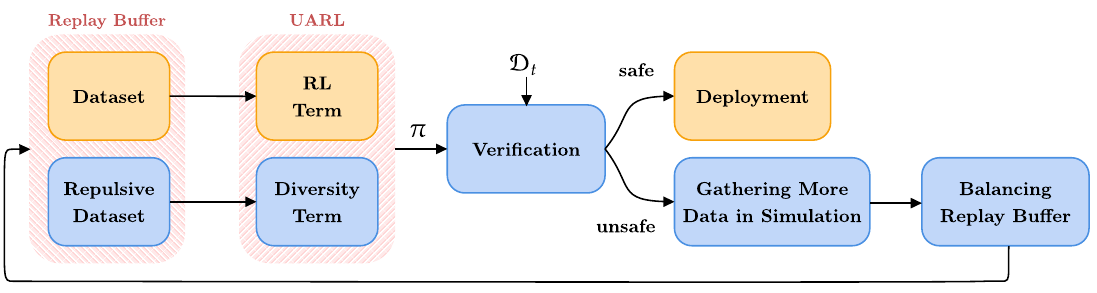}
    \caption{
    Overview of \ourMethod. \colorbox{sbBlue025}{Blue} shows our contributions while \colorbox{sbOrange025}{orange} is adapted from existing RL methods. Agent processes nominal and repulsive datasets through RL and diversity terms. The verification module assesses policy safety in the target domain to avoid deploying policies in OOD scenarios.
    }
    \label{fig:overview}
\end{figure*}

In this paper, we introduce \textbf{U}ncertainty-\textbf{A}ware \textbf{RL} (\ourMethod), a novel approach that encourages safe policy training and deployment under distribution shift \textbf{without direct training on the target domain}. 
\ourMethod quantifies policy uncertainty by measuring the diversity of an ensemble of critics and iteratively refining over high-uncertainty regions of the state space. 
\ourMethod integrates training data with components designed to promote predictive diversity in OOD scenarios and a verification module for safe adaptation (\autoref{fig:overview}).
We validate \ourMethod on MuJoCo benchmark \citep{todorov2012mujoco} and quadruped locomotion \citep{hutter2018towards} assessing safety and sample efficiency. 



\textbf{Contributions}. Our key contribution is an RL framework that \textbf{quantifies policy uncertainty} in designated randomized environments and introduces a \textbf{verification module} that automatically determines when to terminate the progressive randomization process \textit{without} target domain interactions. This mechanism prevents over-randomization and ensures policies meet calculated uncertainty thresholds before deployment, fundamentally differentiating from conventional DR approaches. 
\footnote{Source code: \href{https://github.com/modanesh/UARL}{https://github.com/modanesh/UARL}}

\section{Related Work}\label{sec:related}

\textbf{Offline RL} learns from pre-collected datasets when online interaction is impractical \citep{levine2020offline, kostrikov2022offline}, in contrast to traditional RL's free environment interaction \citep{sutton2018reinforcement}. A key challenge is \textit{distributional shift}, where training data poorly represents target domain. Prior work addresses this through behavior policy alignment \citep{jaques2019way, fujimoto2021minimalist}, conservative critic training \citep{NEURIPS2020_0d2b2061}, and diverse critic ensembles \citep{NEURIPS2021_3d3d286a}. Recent approaches like iterative behavior regularization \citep{ma2023iteratively} and model-based OOD correction \citep{mao2024offline} show promise but require direct target domain interaction or high computational costs. \ourMethod instead combines uncertainty-aware adaptation with simulated fine-tuning for safe policy refinement without target domain interaction. 
Furthermore, methods such as MOBILE \citep{pmlr-v202-sun23q}, PBRL \citep{bai2022pessimistic}, and RORL \citep{yang2022rorl} enhance robustness by penalizing OOD actions during training. 
While existing methods optimize for performance in OOD conditions through avoidance, our work focuses on a complementary objective: \textit{explicit OOD detection and generalization } before deployment. 

\textbf{Off-Dynamics RL} adapts RL policies to new domains by penalizing transitions associated with source domain, promoting transferability to target domain \citep{eysenbach2021offdynamics}. Existing approaches \citep{niu2022trust, xu2023crossdomain, lyu2024crossdomain} rely on target domain interactions for policy refinement, introducing safety risks in high-stakes applications. While conservative regularization mitigates some risks, these methods lack explicit mechanisms for detecting OOD scenarios, leaving them vulnerable to unforeseen shifts \citep{hendrycks2021unsolved}. Our approach addresses these limitations by avoiding target domain interactions during training. 

\textbf{Domain randomization} is a widely used robustness strategy, trains agents in randomized environments to improve generalization \citep{tobin2017domain, andrychowicz2020learning, Lee2020Network, mozian2020learning}. However, there is no clear guidance on how the randomization should be designed and it relies on the risky assumption that policies can be safely validated in the target domain. 
In \autoref{sec:mainalgorithm}, we demonstrate how \ourMethod extends DR by providing a validation module that indicates whether the policy has been exposed to sufficient randomization. 


Further discussion on related work is provided in \aref{app:related}.

\section{Background}
\subsection{Reinforcement Learning}
RL problems commonly model the world as a Markov Decision Process $M = \langle S, A, T, R, \gamma \rangle$, where $S$ is state space, $A$ is action space, $T(s'|s,a)$ is the state transition function, $R(s,a)$ is the reward function and $\gamma \in [0,1)$ is discount factor \citep{bellman_markovian_1957, sutton2018reinforcement}. The objective is to find the optimal policy that maximizes the expected cumulative return: $\optimalPi = \argmax_{\pi} \expectation_{s;a\sim \pi(\cdot|s)}\left[\sum_{t=0}^\infty\gamma^t R(s, a)\right]$ with $\pi(a|s)$ representing the probability of taking $a \in A$ in $s \in S$ under policy $\pi$. 
In off-dynamics RL, the policy is learned using a static, pre-collected dataset $\dataset = \{(s,a,s',r)\}$ obtained by a behavior policy $\pi_b$ under a source dynamics model $T_{\mathrm{src}}$, where $r$ is the immediate reward received after taking $a$ in $s$ and transitioning to $s'$\citep{eysenbach2021offdynamics}. Thus, off-dynamics RL is inherently off-policy and must cope with mismatch between the source dynamics of $\dataset$ and the target dynamics encountered at deployment.
Given $\dataset$, the Q-function update rule is defined as:
\begin{align}\label{eq:update_q}
    Q^{\pi}_{k+1} \leftarrow \argmin_{Q}\expectation_{(s,a,r,s')\sim\dataset} \bigl(Q(s,a) - (r + \gamma\expectation_{a'\sim\pi_k(\cdot|s')}[Q^\pi_k(s',a')])\bigr)^2 && \textcolor{sbBlueEq}{\text{\scalebox{.8}{policy evaluation}}}
\end{align}
and the policy improvement step remains:
\begin{align}\label{eq:policy_improvement}
    \pi_{k+1}(\cdot|s) \leftarrow \argmax_{\pi}\expectation_{s\sim\dataset,a\sim\pi_k(\cdot|s)}[Q_{k+1}(s,a)] && \textcolor{sbBlueEq}{\text{\scalebox{.8}{policy improvement}}}
\end{align}
By iteratively evaluating and improving the policy, with appropriate assumptions, actor-critic RL converges to a near-optimal policy \citep{eysenbach2021offdynamics, levine2020offline}. However, a key challenge in off-dynamics RL is the distributional shift between the source dynamics used to collect $\dataset$ and the true target dynamics under which the learned policy will operate.

\subsection{Domain Randomization} 
DR trains policies over randomized domain parameters $\domainParam \in \domainParamSet$. The domain parameters could include physical properties (mass, center-of-mass, and inertia tensor), sensory noise (from cameras, joint encoders, and force-torque sensors), and/or environmental factors (lighting conditions, texture, and color).
Each set of domain parameters directly affects the state transition function $T_\domainParam(s'|s, a)$.

The training process starts by sampling domain parameters $\domainParam \in \domainParamSet$ at the beginning of each episode, running standard RL using the state transition based on $\domainParam$, and training until convergence. Since the policy and the critics are trained over multiple random parameters, the Q-function that generalizes over all possible environmental parameters will be:
\begin{equation}
\mu(s,a) \approx \expectation_{\domainParam \sim \domainParamSet}[Q^\pi(s,a;\domainParam)]
\end{equation}
As a result, the trained Q-function captures common structures across different domains, reducing overfitting to a single environment. 
This process is often used to train a policy for an unknown target domain $E_t$ with unknown domain parameter $\domainParam_t$. 
If the choice randomization covers the true dynamics of the target domain, $\domainParam_t \in \domainParamSet$, the policy will be able to perform in the target domain.

\subsection{Ensemble Diversification}
\label{sec:background}
Ensembles benefits from an extra loss term to enhance diversity beyond initial randomization (\autoref{sec:related}). To do so, DENN incorporates a diversity term, $\mathcal{L}_{\text{div}}$, into the loss function and leverages \textbf{repulsive locations}, strategically chosen data points near the training distribution boundaries, to encourage ensemble disagreement in high-uncertainty regions \citep{ijcai2020p296}.

Let $\mathcal{X} = \{x_1, x_2, \dots, x_n\}$ denote the inputs and $Y$ be the corresponding output space of the nominal dataset. One approach to defining repulsive locations is to add noise to the input data: $\mathcal{X}' = \{x + \epsilon : x \in \mathcal{X}\}$, where $\epsilon$ represents noise. 
By introducing these repulsive locations, uncertainty is enforced at the boundary of the training distribution and effectively propagates into OOD regions, thereby enhancing OOD detection, which is crucial for robust model performance. To leverage these repulsive locations, DENN employs an ensemble of models, each denoted as $f_i: \mathcal{X} \rightarrow Y$ that are constrained to differ from a reference function $g: \mathcal{X} \rightarrow Y$, which is trained once on the nominal dataset and serves as a consistent baseline for diversity promotion. The conventional supervised learning loss function is then augmented by:
\begin{align}\label{eq:denn_div}
    \mathcal{L}(f_i, g, \mathcal{X}, \mathcal{X}') = \frac{1}{|\mathcal{X}|} \sum_{x,y \in \mathcal{X}} (f_i(x) - y)^2 \color{sbBlueEq} \underbrace{+ \frac{\lambda}{|\mathcal{X}'|} \sum_{x \in \mathcal{X}'} \exp(-|| f_i(x) - g(x) ||^2 / 2 \delta^2)}_{\textcolor{sbBlueEq}{\text{diversity term $\mathcal{L}_{\text{div}}$}}}
\end{align}
where $\lambda$ is the diversity coefficient and $\delta$ controls the diversity between two models at data point $x \in \mathcal{X}'$. The diversity term $\mathcal{L}_{\text{div}}$ penalizes the similarity between $f_i$ and the reference function $g$, which leads to different predictions at inputs $\mathcal{X}'$, thus making $f_i$ diverse with respect to $g$.

In \autoref{sec:training}, we extend the ensemble diversification method into the training regime. Our goal is to strategically choose the state-space so that $f_i$ are diverse when the environment dynamic is OOD.

\textbf{Notation and Lipschitz Assumptions.}
Let $\domainParam\in\domainParamSet$ be the domain-parameter vector (e.g. mass, inertia, etc.), and let $Q^\pi_\domainParam(s,a)$ be the fixed-point Q-function under dynamics $T_\domainParam$. We assume for all $(s,a),(s',a')\in S\times A$:
\begin{equation*}
  |R(s,a)-R(s',a')|\le L_R\|(s,a)-(s',a')\|\quad
  W_1\bigl(T_\domainParam(\cdot\mid s,a),\,T_\domainParam(\cdot\mid s',a')\bigr)\le L_T\|(s,a)-(s',a')\|
\end{equation*}
where $L_R$ is the reward's Lipschitz constant and $L_T$ the transition kernel's (w.r.t. the 1-Wasserstein metric).

\begin{wrapfigure}{r}{0.2\linewidth}
    \centering
    \includegraphics[width=\linewidth]{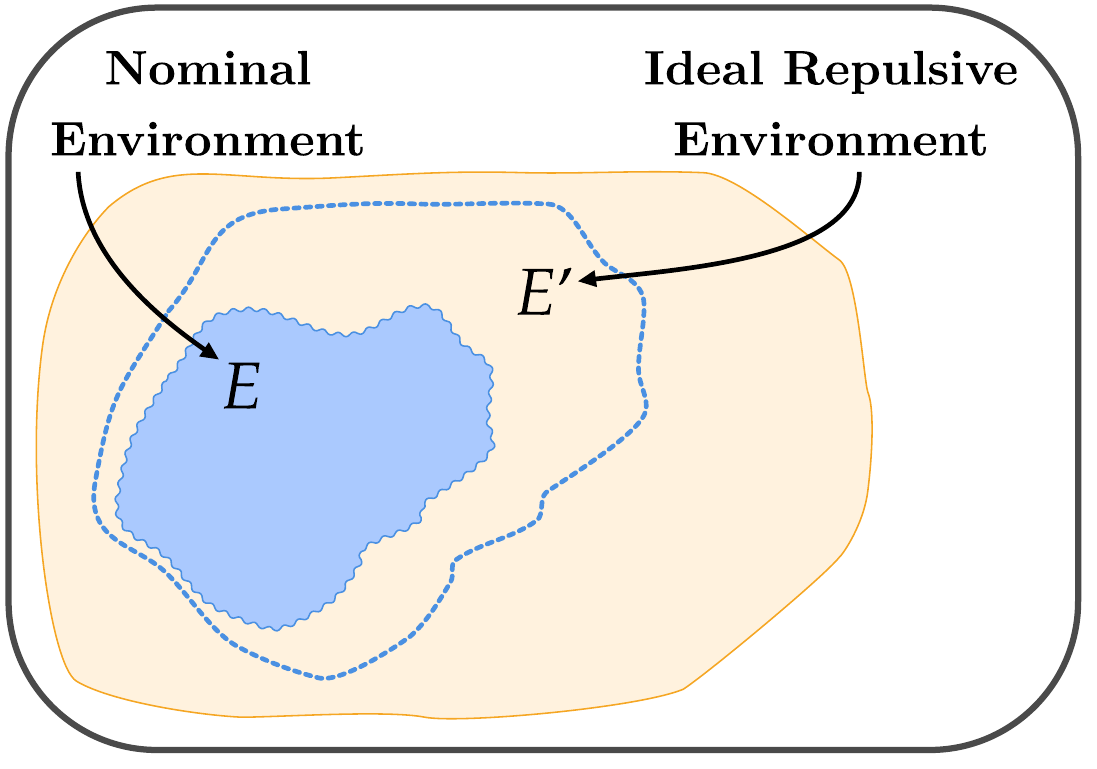}
    \caption{Conceptual visualization of nominal $E$ and repulsive $E'$ environments.}
    \label{fig:nominal-repulsive}
\end{wrapfigure}

\section{Uncertainty-Aware RL}\label{sec:uarl}
Our goal is to train a policy that can be deployed in an unknown target domain $E_t$ without directly refining or validating in the target domain. 
We assume that we have access to a limited target domain dataset $\dataset_t$, where $\dataset_t$ could act as a proxy for the target domain $E_t$, allowing us to evaluate policy performance on $E_t$ without the risk of unsafe deployments.

\begin{definition}[Nominal and Repulsive Environments]
We adapt the terminology of DENN to our problem.
We define the \textit{nominal environment} $E$ as the environment where the policy can be trained and deployed, and the \textit{repulsive environment} $E'$ as the regions where variabilities are injected into the dynamics of $E$ via parameter randomization as illustrated in \autoref{fig:nominal-repulsive}. 
We assume access to datasets $\dataset$ and $\dataset'$, collected by the same behavior policy from $E$ and $E'$, respectively.
\end{definition}
\begin{definition}[Out-of-Distribution]\label{def:ood}
A target domain $E_t$ with parameter $\domainParam_t$ is \textbf{ID} if
$\domainParam_t\in\mathcal{C}$; otherwise it is \textbf{OOD}.
\end{definition}
In practice, we define the coverage set as $\mathcal{C}=\{\domainParam\mid \text{KL}(\hat p(\domainParam)\|p_\domainParamSet)\le\tau_{\text{KL}}\}$, i.e. the subset of parameter space where the empirical density $\hat p(\domainParam)$ (estimated from the repulsive dataset $\dataset'$) deviates from the intended sampling density $p_{\domainParamSet}$ by at most $\tau_{\mathrm{KL}}>0$. Here, $\tau_{\mathrm{KL}}$ is the maximum KL-divergence we tolerate to call $\domainParam$ ``in-distribution,'' and $\tau$ denotes the critic-variance threshold used for deployment gating (more details in \autoref{sec:exp_uncertainty}).

\begin{proposition}[Critic gap under OOD]\label{proposition:critic}
If the target parameter $\domainParam_t\notin\domainParamSet$ (i.e. OOD), then
\begin{equation}
  \|Q^\pi_{\domainParam_t} - Q^\pi_{\domainParam}\|_\infty
  \ge\frac{\gamma\bigl(L_R + L_T\|Q^\pi_{\domainParam_t}\|_{\mathrm{Lip}}\bigr)}{1+\gamma}
         \|\domainParam_t - \domainParam\|
\end{equation}
In particular, this gap is strictly positive whenever $\domainParam_t\neq\domainParam$.
\end{proposition}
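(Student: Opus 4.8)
The plan is to derive the bound directly from the two Bellman fixed-point identities, isolating the effect of the dynamics mismatch at the worst-case state--action pair. Writing $V^\pi_\domainParam(s)=\expectation_{a\sim\pi(\cdot\mid s)}[Q^\pi_\domainParam(s,a)]$, the two fixed-point equations share the common reward $R(s,a)$, so for every $(s,a)$,
\[
Q^\pi_{\domainParam_t}(s,a)-Q^\pi_{\domainParam}(s,a)
=\gamma\Bigl(\expectation_{s'\sim T_{\domainParam_t}(\cdot\mid s,a)}[V^\pi_{\domainParam_t}(s')]-\expectation_{s'\sim T_{\domainParam}(\cdot\mid s,a)}[V^\pi_{\domainParam}(s')]\Bigr).
\]
First I would add and subtract $\expectation_{s'\sim T_{\domainParam}}[V^\pi_{\domainParam_t}(s')]$ to split the right-hand side into a \emph{propagation} term $\expectation_{s'\sim T_{\domainParam}}[V^\pi_{\domainParam_t}(s')-V^\pi_{\domainParam}(s')]$, whose magnitude is at most $\|Q^\pi_{\domainParam_t}-Q^\pi_{\domainParam}\|_\infty$, and a pure \emph{dynamics-mismatch} term $\Delta(s,a):=\bigl(\expectation_{s'\sim T_{\domainParam_t}}-\expectation_{s'\sim T_{\domainParam}}\bigr)[V^\pi_{\domainParam_t}(s')]$. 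Choosing this particular split (keeping $V^\pi_{\domainParam_t}$ inside the mismatch) is what makes the target-side Lipschitz constant $\|Q^\pi_{\domainParam_t}\|_{\mathrm{Lip}}$ the relevant one, matching the statement.

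Next, set $D:=\|Q^\pi_{\domainParam_t}-Q^\pi_{\domainParam}\|_\infty$ and pick $(s^\star,a^\star)$ attaining (or approaching) this supremum. Applying the reverse triangle inequality to the decomposition at $(s^\star,a^\star)$ yields $D\ge\gamma\bigl|\Delta(s^\star,a^\star)\bigr|-\gamma D$, hence
\[
D\;\ge\;\frac{\gamma}{1+\gamma}\,\bigl|\Delta(s^\star,a^\star)\bigr|.
\]
This is precisely where the unusual $\gamma/(1+\gamma)$ prefactor comes from: it is the reverse-triangle bookkeeping that absorbs the propagation term, rather than the usual $1/(1-\gamma)$ contraction factor. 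It then remains to lower-bound $\bigl|\Delta(s^\star,a^\star)\bigr|$ by $\bigl(L_R+L_T\|Q^\pi_{\domainParam_t}\|_{\mathrm{Lip}}\bigr)\|\domainParam_t-\domainParam\|$. The Lipschitz modulus itself I would justify by a one-step Bellman estimate: under the stated assumptions $\|Q^\pi_\domainParam\|_{\mathrm{Lip}}\le L_R+\gamma L_T\|V^\pi_\domainParam\|_{\mathrm{Lip}}$ and $\|V^\pi_\domainParam\|_{\mathrm{Lip}}\le\|Q^\pi_\domainParam\|_{\mathrm{Lip}}$, so $L_R+L_T\|Q^\pi_{\domainParam_t}\|_{\mathrm{Lip}}$ is the natural modulus governing how a single back-up responds to a perturbation of the kernel (the additive $L_R$ surfacing once one also lets the reward depend on $\domainParam$).

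The hard part --- indeed the genuine crux --- is the \emph{direction} of this last inequality. Every hypothesis we are given is an \emph{upper} Lipschitz bound, and through Kantorovich--Rubinstein duality these only deliver an upper estimate $|\Delta(s,a)|\le\|V^\pi_{\domainParam_t}\|_{\mathrm{Lip}}\,W_1\bigl(T_{\domainParam_t}(\cdot\mid s,a),T_{\domainParam}(\cdot\mid s,a)\bigr)$, which is the opposite of what a lower bound on $D$ needs. To close the argument I would have to add an explicit non-degeneracy (identifiability) hypothesis asserting that moving $\domainParam$ genuinely shifts the dynamics in a value-sensitive direction --- concretely a reverse bound $|\Delta(s^\star,a^\star)|\ge(L_R+L_T\|Q^\pi_{\domainParam_t}\|_{\mathrm{Lip}})\|\domainParam_t-\domainParam\|$, or equivalently a two-sided (bi-Lipschitz) link between $W_1(T_{\domainParam_t},T_{\domainParam})$ and $\|\domainParam_t-\domainParam\|$ together with a guarantee that $V^\pi_{\domainParam_t}$ is non-constant on the transported mass. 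Without such a condition the claimed strict positivity can fail outright: if the parameterization is locally flat along some direction, or if $V^\pi_{\domainParam_t}$ is constant on the relevant support, the two critics coincide even though $\domainParam_t\neq\domainParam$. I would therefore state the proposition under this non-degeneracy assumption and make clear that the given upper-Lipschitz constants enter only as the \emph{modulus} of the reverse bound, not as its driver; supplying (or hypothesizing) that reverse inequality is the step I expect to carry the real weight of the proof.
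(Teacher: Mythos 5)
Your skeleton coincides with the paper's own argument up to the decisive step: the paper also starts from the two fixed-point identities, writes $\|Q^\pi_{\domainParam}-Q^\pi_{\domainParam_t}\|_\infty=\|\mathcal{T}^\pi_{\domainParam}Q^\pi_{\domainParam}-\mathcal{T}^\pi_{\domainParam_t}Q^\pi_{\domainParam_t}\|_\infty$, and uses the triangle inequality together with the $\gamma$-contraction to absorb the propagation term, arriving at
\begin{equation*}
(1+\gamma)\,\|Q^\pi_{\domainParam}-Q^\pi_{\domainParam_t}\|_\infty\;\ge\;\bigl\|\mathcal{T}^\pi_{\domainParam}Q^\pi_{\domainParam_t}-\mathcal{T}^\pi_{\domainParam_t}Q^\pi_{\domainParam_t}\bigr\|_\infty,
\end{equation*}
which is your $D\ge\tfrac{\gamma}{1+\gamma}\lvert\Delta(s^\star,a^\star)\rvert$ in operator form; the unusual $\gamma/(1+\gamma)$ prefactor arises in the paper for exactly the bookkeeping reason you give.

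The gap you flag is genuine, and it is present in the paper's own proof. To finish, the paper invokes its operator-perturbation lemma, which asserts $\|\mathcal{T}^\pi_{\domainParam}Q-\mathcal{T}^\pi_{\domainParam_t}Q\|_\infty\le\gamma\bigl(L_R+L_T\|Q\|_{\mathrm{Lip}}\bigr)\|\domainParam-\domainParam_t\|$ --- an \emph{upper} bound, obtained precisely as you describe via Kantorovich--Rubinstein duality from the Wasserstein--Lipschitz hypotheses. Chaining ``$(1+\gamma)D\ge\|\mathcal{T}^\pi_{\domainParam}Q^\pi_{\domainParam_t}-\mathcal{T}^\pi_{\domainParam_t}Q^\pi_{\domainParam_t}\|_\infty$'' with ``$\|\mathcal{T}^\pi_{\domainParam}Q^\pi_{\domainParam_t}-\mathcal{T}^\pi_{\domainParam_t}Q^\pi_{\domainParam_t}\|_\infty\le\gamma(\cdots)\|\domainParam-\domainParam_t\|$'' proves nothing, since both inequalities bound the middle quantity from the same side; the stated conclusion does not follow. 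Your counterexamples (a locally flat parameterization of $T_\domainParam$, or $V^\pi_{\domainParam_t}$ constant on the transported mass) show the proposition is in fact false as stated, and your proposed fix --- a reverse, non-degeneracy lower bound on the operator mismatch --- is exactly the missing hypothesis. The one refinement I would add: under such a hypothesis the constant appearing in the conclusion must be the non-degeneracy constant itself, not $\gamma\bigl(L_R+L_T\|Q^\pi_{\domainParam_t}\|_{\mathrm{Lip}}\bigr)$, which can only ever serve as an upper modulus (and whose $L_R$ term is vacuous here anyway, since the reward does not depend on $\domainParam$, as the lemma's own proof observes).
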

Thus, any nonzero shift $\|\domainParam_t-\domainParam\|$ forces at least a proportional change in the critic's fixed point, which our ensemble variance will reliably detect. Proof is provided at \aref{app:theory_effects} (\proofref{proof:critic}).

\subsection{Offline RL with Diverse Critics}\label{sec:training}

In this section, we extend the diversity term of DENN for differentiating the nominal and the repulsive environments defined in \autoref{sec:uarl}.
We address two critical departures from the supervised learning setting: the absence of labeled data, and the temporal structure of value estimation. Crucially, our approach eliminates DENN's requirement for a pretrained reference function by leveraging the \textit{Bellman target} as both the prediction objective and implicit OOD reference. 
We employ an ensemble of Q-functions $\{Q_i\}_{i=1}^N$ and
 modify $\mathcal{L}_{\text{div}}$ in \autoref{eq:denn_div} to the following:
\begin{align}\label{eq:denn_rl}
    \mathcal{L}^{\text{RL}}_{\text{div}} = \sum_{i} \exp \left( - || Q_i(s,a) - (r + \gamma Q_i(s',\pi(a'|s'))) ||^2 / 2 \delta^2 \right)
    & ; (s,a,s',r) \sim \dataset'
\end{align} 
Based on temporal difference learning, we can consider $r+\gamma Q_i(s',\pi(a'|s'))$ as the learner's target value, and compare that against the predicted value of $Q_i(s,a)$. 
By combining \autoref{eq:update_q} and \autoref{eq:denn_rl}, our overall policy evaluation step for each $Q_i$ in the ensemble will be:
\begin{align}\label{eq:our_loss}
    & Q^{\pi}_{i,k+1} \leftarrow \argmin_{Q_i} \expectation_{s,a,s',r \sim \dataset;a'\sim \pi(\cdot|s')} \left[ ( Q_i(s,a) - (r + \gamma Q^\pi_i (s', a'))^2 \right] \nonumber \\
    & \hspace{1.4cm} \begin{aligned}[t]
        & \color{sbBlueEq} \underbrace{\textcolor{sbBlueEq}{+ \lambda (\expectation_{s,a,s',r \sim \dataset';a'\sim \pi(\cdot|s')} \left[\sum_i \exp(-|| Q_i(s,a) - (r+\gamma Q_i(s',a')) ||^2 / 2 \delta^2)\right])}}_{\textcolor{sbBlueEq}{\text{diversity term $\mathcal{L}^{\text{RL}}_{\text{div}}$}}}
    \end{aligned}
\end{align}
\autoref{eq:denn_rl} and \autoref{eq:our_loss} form the core of our work, promoting diversity among the Q-functions in our ensemble. The diversity term $\mathcal{L}^{\text{RL}}_{\text{div}}$ in \autoref{eq:our_loss} encourages each $Q_i$ to diverge from its own Bellman target on the repulsive dataset $\dataset'$, while the first term ensures accurate Q-value estimation on the nominal dataset $\dataset$, 
providing a clearer separation between  $\dataset$ and $\dataset'$.



\begin{assumption}[Well-trained critic ensemble]
\label{assump:trained_ensemble}
Let $\{Q_i^\pi\}_{i=1}^N$ be an ensemble produced by optimizing \autoref{eq:our_loss} until the training loss stabilises.
\end{assumption}

Under \assumref{assump:trained_ensemble} we know that the policy will perform well in the nominal environment $E$ but not necessarily in the repulsive environment $E'$. As enforced by the repulsive term in our objective, the ensemble $Q^{\pi}$ exhibits low variance on state-action pairs $(s, a)$ drawn from $E$ and high variance for those from $E'$. Our objective is to determine whether the policy can generalize to a target domain $E_t$ given limited samples from it.


Further theoretical discussion can be found in \aref{app:theory}.

\subsection{Fine-tuning Policy With Balancing Replay Buffer}\label{sec:finetuning}

When training an RL policy with data from multiple domains, such as source and target domains in off-dynamics RL, the differences in their distributions can hinder effective policy improvement \citep{nakamoto2023calql}. To address this, we weight each sample by the critic's uncertainty, stabilizing fine-tuning and accelerating convergence. Specifically, each sample $(s,a)$ is assigned a weight based on:
\begin{equation}
w(s, a) = 
\begin{cases}
    \sigma^{-2}(\{Q_i(s, a)\}_{i=1}^N) & \text{if } (s, a) \in \dataset' \\
    \sigma^{2}(\{Q_i(s, a)\}_{i=1}^N) & \text{if } (s, a) \in \dataset
\end{cases}
\end{equation}
where $\sigma^2(\cdot)$ denotes the variance of critics' predictions. Our method exclusively utilizes source domain simulations, avoiding target domain fine-tuning. The balancing replay buffer explicitly weights transitions by their ensemble disagreement: increasing sampling probability for high-uncertainty states from diverse simulations while downweighting low-uncertainty regions. This dynamic weighting maintains policy stability through familiar scenarios while prioritizing adaptation to challenging OOD conditions encountered across simulations.

\tikzexternaldisable
\begin{algorithm*}[t]
\small
\caption{\ourMethod}\label{alg:overall}
\begin{algorithmic}[1]
    \BeginBox[fill=gray!20]
    \State {\bfseries Require:} 
    \Comment{Before training}
    \Statex \quad target domain dataset $\dataset_t$, behavior policy $\pi_b$, original environment $E_0$, threshold $\tau$, ensemble size $N$.
    \EndBox
    \BeginBox[fill=gray!10]
    \State $\dataset_0 \gets$ rollouts of $\pi_b$ over $E_0$ \Comment{\autoref{sec:uarl}}
    \State $E_1 \gets$ expanding $E_0$ by increasing the parameter randomization range
    \State $\dataset_1 \gets$ rollouts of $\pi_b$ over $E_1$
    \EndBox
    \BeginBox[fill=gray!20]
    \State Train policy $\pi_0$ and critics $\{Q_i(s,a)\}_{i=1}^N$ with \autoref{eq:our_loss} and \autoref{eq:policy_improvement} using nominal\\ and repulsive datasets: $(\dataset_0, \dataset_1)$ \Comment{\autoref{sec:training}}
    \State $\sigma^2$ $\gets$ variance of the critics over $\dataset_t$, calculated as $\frac{1}{N} \sum_{j=0}^{N-1} \left( Q_{0}^{(j)} - \frac{1}{N} \sum_{k=0}^{N-1} Q_{0}^{(k)} \right)^2$
    \State $i \gets 0$
    \EndBox
    \BeginBox[fill=gray!10]
    \While{$\sigma^2$} $> \tau$ \Comment{Continue until policy is safe}
        \State $E_{i+2} \gets$ expanding $E_{i+1}$ by increasing the parameter randomization range \Comment{\autoref{sec:uarl}}
        \State $\dataset_{i+2} \gets$ rollouts of $\pi_i$ over $E_{i+2}$
        \State $\dataset_{\text{nom}} \gets$ balanced replay buffer with $\dataset_0, \dataset_1, \dots, \dataset_{i+1}$ \Comment{\autoref{sec:finetuning}}
        \State Fine-tune $\pi_{i+1}$ and $Q_{i+1}$ with \autoref{eq:our_loss} and \autoref{eq:policy_improvement} using nominal\\ and repulsive datasets: $(\dataset_{\text{nom}}, \dataset_{i+2})$ \Comment{\autoref{sec:training}}
        \State $\sigma^2$ $\gets$ variance of the critics over $\dataset_t$, calculated as $\frac{1}{N} \sum_{j=0}^{N-1} \left( Q_{i+1}^{(j)} - \frac{1}{N} \sum_{k=0}^{N-1} Q_{i+1}^{(k)} \right)^2$
        \State $i \gets i + 1$
    \EndWhile
    \State Deploy policy $\pi_i$
    \EndBox
    \Statex \BoxedString[fill=gray!20]{\textcolor{gray}{$\triangleright$ Detailed hyperparameter explanations provided in the \aref{app:hyperparams}}}
\end{algorithmic}
\end{algorithm*}
\tikzexternalenable

\subsection{\ourMethod Algorithm}
\label{sec:mainalgorithm}


\ourMethod builds upon a curriculum of DR.
Starting from an initial environment $E_0$, we introduce a repulsive environment $E_1$ through parameter randomization and gradually expand the randomization range. 
This progression nudges the agent's exploration towards the target environment $E_t$ (\autoref{fig:exploration}). Importantly, $E_{t'}$ represents a subset of $E_t$ sufficient for effective policy performance. Focusing on $E_{t'}$ balances comprehensive scenario coverage and computational efficiency, avoiding the exhaustive exploration of DR.

Initially, we execute a behavior policy for $n$ episodes in an unaltered simulation environment $E_0$, generating an offline dataset $\dataset_0$. We then introduce slight randomization to a single parameter in the environment to reach to $E_1$, collecting a repulsive dataset $\dataset_1$ using the same behavior policy. These two datasets, $\dataset_0$ (nominal) and $\dataset_1$ (repulsive), are used in our loss function in \autoref{eq:our_loss}: $\dataset_0$ contributes to the standard RL loss, while $\dataset_1$ informs the diversity term $\mathcal{L}^{\text{RL}}_{\text{div}}$.

We detect OOD scenario through the critics' variance on $\dataset_t$. 
Low ensemble variance indicates $\dataset_t$ is aligned with the nominal dataset ($\dataset_0$) and high variance suggests a deviation from it. 
When the uncertainty measure falls below a threshold $\tau$ (\autoref{sec:exp_uncertainty}), $\dataset_t$ is considered sufficiently aligned with $E_0$, and the policy is deemed ready for deployment.

\begin{wrapfigure}{}{0.25\linewidth}
    \centering
    \includegraphics[width=\linewidth]{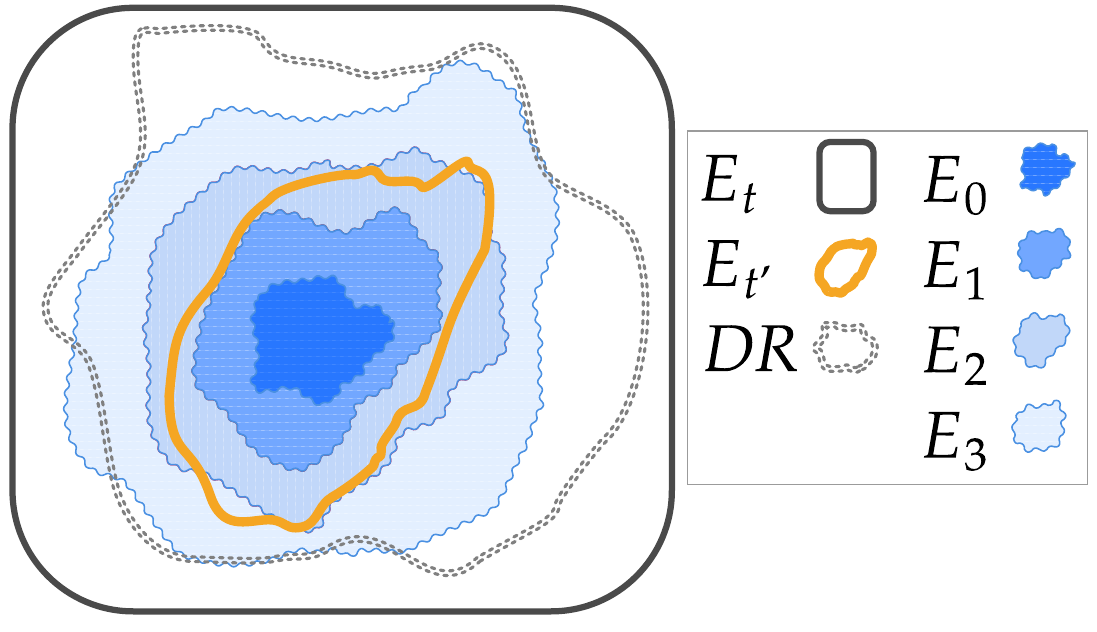}
    \caption{Conceptual visualization of state space expansion from $E_0$ (initial) to $E_1, \ldots,E_3$ by increasing randomization. \$E\_t\$ denotes the theoretical target environment; \$E\_{t'}\$ is where the agent performs effectively.}
    \vspace{-1.em}
    \label{fig:exploration}
\end{wrapfigure}


If the disagreement of the critics is above $\tau$, we expand the randomized parameter to reach to $E_2$ from $E_1$ and collect the new repulsive dataset $\dataset_2$. We combine $\dataset_0$ and $\dataset_1$ as the nominal datasets using our balancing replay buffer method (\autoref{sec:finetuning}) and fine-tune the policy. This process continues iteratively until the uncertainty criterion is satisfied for deployment. \autoref{alg:overall} outlines the iterative fine-tuning process, leading to a policy with sufficient certainty for deployment. 

If the condition in line $8$ of \autoref{alg:overall} is never violated, it suggests ineffective DR. High uncertainty despite varied training environments indicates that the agent has not generalized to target domain conditions, possibly due to inadequate randomization or a mismatch with target domain data. In such cases, training should be stopped, and the DR or target domain data needs revisiting. Rather than a shortcoming of \ourMethod, this is a limitation of the DR or data quality.

In contrast to prior work on DR, our approach does not require direct policy validation in the target domain. Similarly, unlike existing methods in off-dynamics RL, our method avoids direct policy refinement in the target domain. Both characteristics are critical for ensuring safety in real-world deployment.
Moreover, the formulation presented in \autoref{eq:our_loss} introduces only a ``diversity term'' into the policy evaluation step; thus, \ourMethod seamlessly integrates into any offline RL algorithm. In \autoref{sec:exp}, we incorporate \ourMethod into CQL \citep{NEURIPS2020_0d2b2061}, AWAC \citep{nair2020awac}, and TD3BC \citep{fujimoto2021minimalist}.


\section{Experiments}\label{sec:exp}
We compare \ourMethod~ against several state-of-the-art RL methods to address the following key questions: 
(1) How effectively does \ourMethod differentiate between ID and OOD samples? (\autoref{sec:exp_uncertainty})
(2) How effective is the validation module in reducing policy uncertainty while enhancing performance as a deployment gatekeeper? (\autoref{sec:exp_gatekeeper})
(3) How \ourMethod can be applied to real-world settings? (\autoref{sec:anymal}) Further ablation studies are provided in \aref{app:ablation}.

\textbf{Experimental setups}. 
We validate our work with MuJoCo benchmarks. We introduce systematic randomization to three key parameters in each environment: initial noise scale, friction coefficient, and the agent's mass. By isolating the effect of each parameter, we assess the method's adaptability to dynamic conditions. Specifically, the noise scale is multiplied by $10^2$, while both the friction coefficient and the agent's mass are increased proportionately per iteration (see \aref{app:random_params} for details). To ensure robust results, we aggregate the outcomes of $5$ random seeds, reporting the mean and a $95\%$ confidence interval (CI). 
 
\textbf{Dataset}.
While D4RL \citep{fu2020d4rl} is standard for offline RL, it lacks behavior policy checkpoints needed for \ourMethod's repulsive dataset generation. We created a new dataset using MuJoCo \texttt{v4} with Gymnasium \citep{towers_gymnasium_2023}, extending beyond D4RL's environments to include \ant and \swimmer. Using converged SAC \citep{haarnoja2018soft} as the behavior policy, we generated $999$ rollouts per environment. Our datasets' metrics align with D4RL's, suggesting generalizable findings. Additionally, we created $\dataset_t$ by running the behavior policy for $99$ episodes with randomized parameters to simulate target domain variability.

\begin{figure*}[t]
    \centering
    \includegraphics[width=\textwidth]{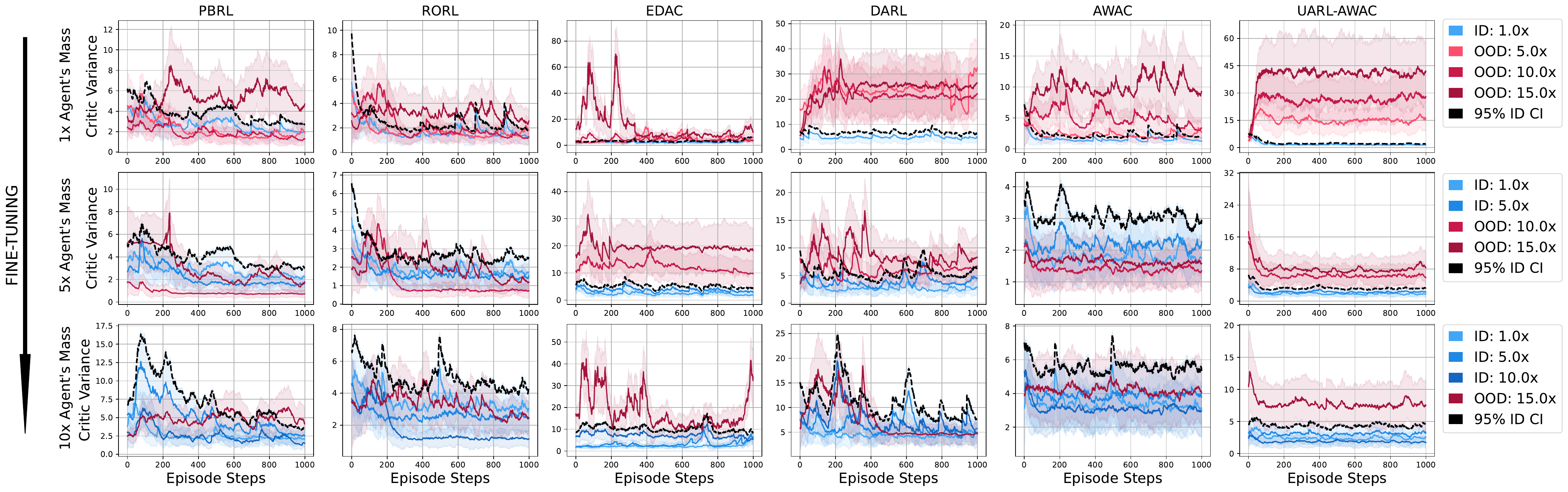}
    \caption{Critic variance across $100$ rollouts in the \Ant environment for PBRL, RORL, EDAC, DARL, and AWAC-based methods. The randomized hyperparameter is agent mass. Each column represents an algorithm, and each row represents a fine-tuning iteration with an expanded ID range by multiplying the agent's mass vector by a constant: $1\text{x} \rightarrow 5\text{x} \rightarrow 10\text{x}$. The black line indicates the $95\%$ CI of critic variances for ID samples, serving as an OOD detection threshold. 
    Clear ``OOD-awareness'' places blue curves below and red curves above the black curve; \ourMethod-AWAC does so consistently, unlike the baselines.}
    \vspace{-1.0em}
    \label{fig:uncertainty_baselines}
\end{figure*}

\textbf{Baselines}.
We benchmark \ourMethod against three method categories: \textbf{Offline and Offline-to-Online Baselines:} CQL \citep{NEURIPS2020_0d2b2061}, which learns conservative value estimations to address overestimation issues; AWAC \citep{nair2020awac}, which enforces policy imitation with high advantage estimates; TD3BC \citep{fujimoto2021minimalist}, an offline RL method that combines TD3 \citep{fujimoto2018addressing} with behavioral cloning; and EDAC \citep{NEURIPS2021_3d3d286a}, which learns an ensemble of diverse critics by minimizing the cosine similarity between their gradients. \ourMethod is applied to CQL, AWAC, and TD3BC, but not to EDAC due to its inherent diversity mechanism. These baselines are implemented based on the CORL framework \citep{tarasov2022corl} without additional hyperparameter tuning.
\textbf{Off-Dynamics Baselines:} We compare against H2O \citep{niu2022trust}, VGDF \citep{xu2023crossdomain}, and a robust algorithm leveraging offline data, RLPD \citep{ball2023efficient}. These methods were selected for their ensemble-based critics and focus on robust policy learning. However, they differ fundamentally from \ourMethod in their assumptions about target domain accessibility and data usage. These methods are implemented based on the ODRL benchmark \citep{lyu2024odrlabenchmark}. 
\textbf{Uncertainty-Aware Baselines:} We evaluate \ourMethod against PBRL \citep{bai2022pessimistic} and RORL \citep{yang2022rorl}, designed to enhance robustness, as well as EDAC \citep{NEURIPS2021_3d3d286a} and DARL \citep{zhang2023darl}, which focus on improving uncertainty estimation. Please refer to \aref{app:related} for details of the baseline methods.

\textbf{Evaluation Criteria}.
We evaluate the baselines and \ourMethod based on: (1) OOD accuracy, defined as the model's ability to differentiate ID and OOD environments; (2) Cumulative return on evaluation environments during training, which measures overall performance; (3) Sample efficiency, defined as the number of samples required to converge to a reasonable performance \citep{huang2024open}.

\textbf{Diversity Loss Hyperparameters}.
The few hyperparameters $\lambda$ and $\delta$ in $\mathcal{L}^{\text{RL}}_{\text{div}}$ (\autoref{eq:our_loss}) balance the RL objective with critic diversity and are mostly adaptive and set intuitively. We set $\lambda$ adaptively to maintain $\mathcal{L}^{\text{RL}}_{\text{div}}$ at $\approx10\%$ of the total loss, ensuring meaningful diversity without overwhelming the primary objective. It is noteworthy that when $\lambda = 0$, \autoref{eq:our_loss} reduces to the standard RL loss. 
Stochastic initialization causes critics to diverge in underrepresented state-action regions.
Following DENN, we selected $\delta=10^{-2}$ from the recommended range $[10^{-3}, 10^{-1/2}]$ \cite{ijcai2020p296}. These choices provide a strong baseline, with potential for further improvement through fine-tuning (\autoref{app:hyperparams_ablation}).

\subsection{OOD Detection}\label{sec:exp_uncertainty}

A key objective of \ourMethod is to enhance uncertainty awareness in RL policies. To assess OOD detection, we set a threshold based on the \nth{5} percentile of the variances observed in $100$ ID rollouts, which refer to rollouts generated in environments with variations the agent has encountered during training. During deployment, data points exceeding this threshold are classified as OOD, allowing us to evaluate the performance on recognizing OOD situations.


\autoref{fig:uncertainty_baselines} demonstrates \ourMethod-AWAC's superior OOD detection capabilities in the \Ant environment with randomized agent mass, outperforming standard AWAC and baselines. 
The $95\%$ CI of critic variances on ID samples serves as a $95\%$ hypothesis-testing threshold.
This is unlike PBRL or RORL, which conflate ID and OOD dynamics, leading to overconfident policies in uncertain regions. Moreover, while EDAC's and DARL's OOD detection performance is incosistent, \ourMethod maintains consistent critic variance distinctions between ID and OOD states. 
For example, EDAC achieves strong separation in the \nth{1} fine-tuning iteration (\nth{2} row) but rarely seprates ID from OOD by the \nth{2} iteration (\nth{3} row), and DARL excels only in offline training (\nth{1} row). In contrast, \ourMethod adapts dynamically as the ID mass range expands across three fine-tuning stages. This adaptability ensures robust OOD detection even as environmental knowledge grows, sustaining high critic variance for OOD states while refining uncertainty estimates. Such reliability in adversarial or distributionally shifted scenarios underscores \ourMethod's advantages for safe real-world deployment, where baseline methods exhibit either inconsistent performance (EDAC/DARL) or systemic failures (PBRL/RORL) in distinguishing novel dynamics. Further investigation is provided in \autoref{sec:exp_gatekeeper}. 

We analyze the OOD detection capabilities of \ourMethod in depth in \aref{app:uncertainty}, showing the improved performance of our method across various environments and randomized parameters.

\subsection{Validation Module as a Deployment Gatekeeper}\label{sec:exp_gatekeeper}

\begin{wrapfigure}{}{0.55\linewidth}
\centering
    \includegraphics[width=\linewidth]{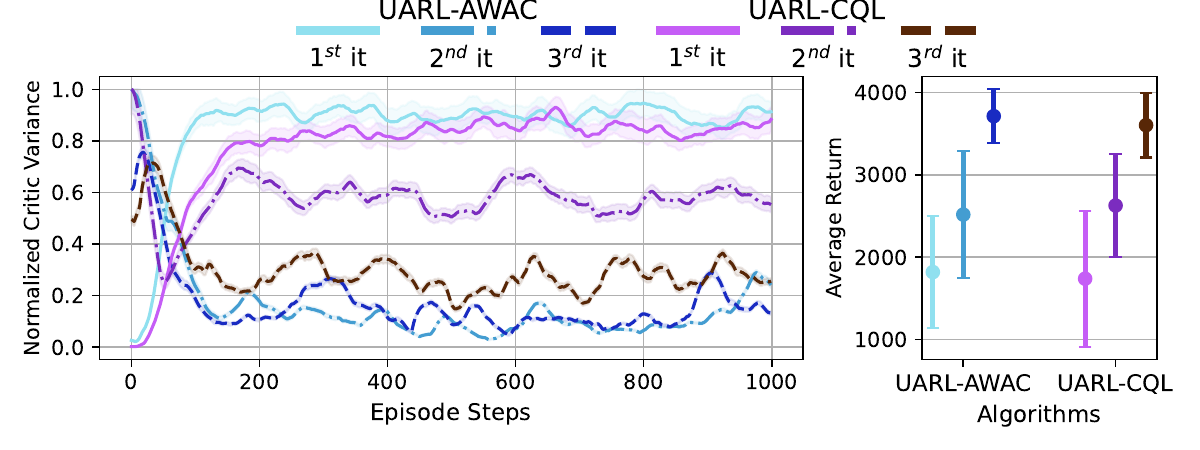}
    \caption{
    Normalized critic variance (left) and deployment return (right) over $100$ rollouts for $3$ fine-tuning phases of (\ourMethod-) AWAC and CQL under \Ant mass randomization; lighter shades = earlier phases, darker = later phases.
}
\label{fig:gatekeeper}
\end{wrapfigure}

A limitation of DR is its reliance on the assumption that policies can be safely deployed in the target domain for validation. In \autoref{sec:mainalgorithm}, \ourMethod extends DR by providing a validation module that indicates whether the policy has been exposed to sufficient randomization.

To demonstrate the validation module's role in preventing unsafe deployment under OOD conditions, we analyze the relationship between critic variance on $\dataset_t$ and the performance of policy deployed in the target environment $E_t$. We train \ourMethod with AWAC and CQL backbones across three fine-tuning phases, recording the critic variance on $\dataset_t$ after each phase, computed as the mean epistemic uncertainty across all ensemble critics; and the average return by executing the current policy in the target environment $E_t$ \textit{without} applying our validation module's OOD checks.

\autoref{fig:gatekeeper} reveals a strong negative correlation between critic variance on $\dataset_t$ and direct deployment returns in $E_t$. As fine-tuning progresses and the environment variations expand closer to $E_t$, we observe decreasing critic variance alongside improving performance, an expected pattern given the policy's gradual adaptation to target-domain characteristics. These results verify our validation module's design: critic variance on $\dataset_t$ serves as an effective proxy for policy performance in $E_t$, enabling safety-critical deployment decisions. By blocking deployment when critic variance exceeds thresholds, we prevent catastrophic policy failures.

This experiment complements our OOD detection analysis in \autoref{sec:exp_uncertainty} by quantifying the \textit{consequences} of ignoring uncertainty signals. The sharp performance cliff at the variance threshold underscores why our validation module is essential. even small increases in perceived uncertainty beyond the ID distribution correlate with drastic performance degradation in practice. Consistent correlations across AWAC and CQL validate our method.

\subsection{Zero-Shot Sim-to-Real Transfer}\label{sec:anymal}
To assess real-world applicability, we validate on ANYmal-D \citep{7758092}, a 12-DoF quadruped subject to motor-current, thermal, and payload constraints. Policies are trained in Isaac Gym \citep{rudin2022learning} with $4096$ parallel sims using proprioceptive observations (base velocities, gravity vector, command history, joint states) and a learned actuator network for torque control \citep{hwangbo2019learning}. Before hardware deployment, we enforce both performance and uncertainty criteria using only offline data. First, we collect $10$ roll-outs of a safe default (behavior) policy on the real robot (target domain dataset $\dataset_t$), compute the critic-ensemble variance offline, and apply our deployment gate as described in \autoref{sec:uarl}.

\textbf{Phase 1: Initial Evaluation.}
With the nominal added base mass range $[-1,1]\mathrm{kg}$, the target-domain variance trace (\autoref{fig:anymal_real}, left) exceeds the threshold $\tau$. Although simulation roll-outs show stable walking, this elevated variance correctly vetoes deployment, flagging hidden risks such as motor over-current, thermal throttling, or gait instabilities under real loads.

\textbf{Phase 2: Fine-Tuning and Re-evaluation.}
We expand the nominal mass interval to $[-3,3]\mathrm{kg}$, fine-tune for $10^3$ environment steps, and recompute variance. All target-domain traces now lie below the updated $\tau$ (\autoref{fig:anymal_real}, middle), certifying the policy as safe. Crucially, \ourMethod selects this tighter interval instead of the naive $[-5,5]\mathrm{kg}$ used by default in Isaac Gym, reducing parameter space, accelerating convergence, and avoiding overly conservative exploration.

We similarly expand ground friction's range until it passes the variance gate.

\textbf{Zero-Shot Hardware Transfer.}
Once both performance and uncertainty conditions are met, we deploy the policy zero-shot on ANYmal-D for $60$-second trials on three surfaces, concrete, carpet, and epoxy-resin-coated terrazzo, each with distinct friction characteristics. To validate added base mass robustness, we attach a $3\mathrm{kg}$ payload. In every trial, the robot tracks velocity commands faithfully, exhibits no motor over-currents or thermal events, and completes all runs without failure. Recordings of real-world experiments are available at \url{https://sites.google.com/view/uarl/home}.

\begin{figure}[t]
  \centering
  \includegraphics[width=\textwidth]{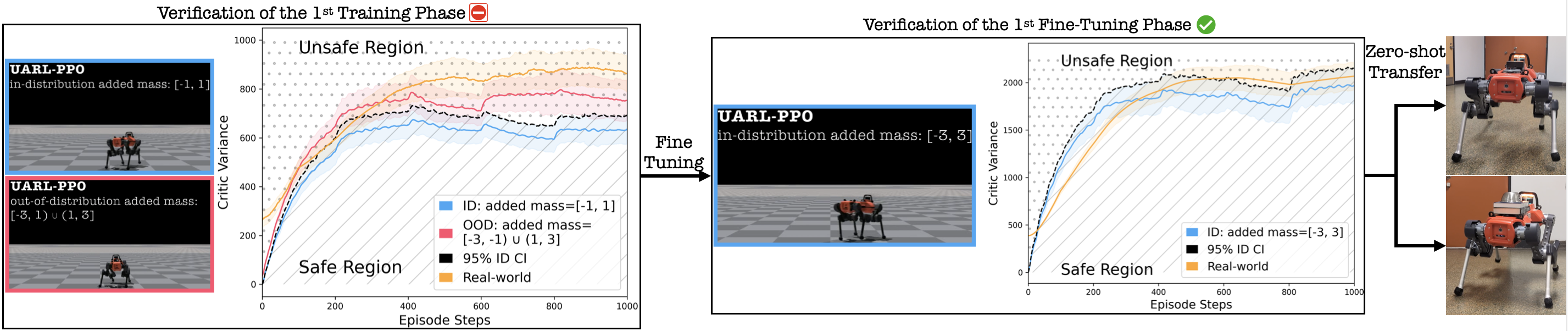}
  \caption{\small
    Critic ensemble variance for \colorbox{sbBlue025}{nominal}, \colorbox{sbRed025}{repulsive} and \colorbox{sbOrange025}{target domain} trajectories before (left) and after (middle) one fine-tuning step. The uncertainty gate (black dashed line) moves to accommodate the new nominal interval. \textbf{Right:} ANYmal-D executing the zero-shot transfer trial certified by our variance gate.}
    \vspace{-1.0em}
  \label{fig:anymal_real}
\end{figure}

Additional experiments on the quadruped robot is available at \aref{app:anymal_exps}.

\section{Conclusion \& Limitations}\label{sec:conc_limit}
\ourMethod addresses real-world RL deployment challenges through targeted, iterative adaptation in simulation. It prevents trial and error via a representative dataset from the target domain, deploying only when confident. \ourMethod improves efficiency and robustness with precise OOD detection, balanced data sampling, and gradual environment variation, without extensive randomization. Its uncertainty estimation is key for detecting OOD scenarios, particularly in robotics, where agents must navigate unexpected variations.


While \ourMethod shows clear benefits, it currently randomizes only one parameter at a time, extending to multi-parameter or adaptive schemes could improve its practicality. It also relies on a representative target domain dataset to end the curriculum; incorporating online data collection would help maintain proper confidence calibration.





\bibliography{neurips_2025}

\begin{thebibliography}{10}

\bibitem{sutton2018reinforcement}
Richard~S Sutton and Andrew~G Barto.
\newblock {\em Reinforcement learning: An introduction}.
\newblock MIT press, 2018.

\bibitem{levine2020offline}
Sergey Levine, Aviral Kumar, George Tucker, and Justin Fu.
\newblock Offline reinforcement learning: Tutorial, review, and perspectives on open problems.
\newblock {\em arXiv preprint arXiv:2005.01643}, 2020.

\bibitem{kober2013reinforcement}
Jens Kober, J~Andrew Bagnell, and Jan Peters.
\newblock Reinforcement learning in robotics: A survey.
\newblock {\em The International Journal of Robotics Research}, 32(11):1238--1274, 2013.

\bibitem{spielberg2019toward}
Steven Spielberg, Aditya Tulsyan, Nathan~P Lawrence, Philip~D Loewen, and R~Bhushan~Gopaluni.
\newblock Toward self-driving processes: A deep reinforcement learning approach to control.
\newblock {\em AIChE journal}, 65(10):e16689, 2019.

\bibitem{zhao2020sim}
Wenshuai Zhao, Jorge~Pe{\~n}a Queralta, and Tomi Westerlund.
\newblock Sim-to-real transfer in deep reinforcement learning for robotics: a survey.
\newblock In {\em 2020 IEEE symposium series on computational intelligence (SSCI)}, pages 737--744. IEEE, 2020.

\bibitem{danesh2021out}
Mohamad~H Danesh and Alan Fern.
\newblock Out-of-distribution dynamics detection: Rl-relevant benchmarks and results.
\newblock {\em arXiv preprint arXiv:2107.04982}, 2021.

\bibitem{eysenbach2021offdynamics}
Benjamin Eysenbach, Shreyas Chaudhari, Swapnil Asawa, Sergey Levine, and Ruslan Salakhutdinov.
\newblock Off-dynamics reinforcement learning: Training for transfer with domain classifiers.
\newblock In {\em International Conference on Learning Representations}, 2021.

\bibitem{iyengar2005robust}
Garud~N Iyengar.
\newblock Robust dynamic programming.
\newblock {\em Mathematics of Operations Research}, 30(2):257--280, 2005.

\bibitem{lee2022offline}
Seunghyun Lee, Younggyo Seo, Kimin Lee, Pieter Abbeel, and Jinwoo Shin.
\newblock Offline-to-online reinforcement learning via balanced replay and pessimistic q-ensemble.
\newblock In {\em Conference on Robot Learning}, pages 1702--1712. PMLR, 2022.

\bibitem{zheng2023adaptive}
Han Zheng, Xufang Luo, Pengfei Wei, Xuan Song, Dongsheng Li, and Jing Jiang.
\newblock Adaptive policy learning for offline-to-online reinforcement learning.
\newblock In {\em Proceedings of the AAAI Conference on Artificial Intelligence}, volume~37, pages 11372--11380, 2023.

\bibitem{tobin2017domain}
Josh Tobin, Rachel Fong, Alex Ray, Jonas Schneider, Wojciech Zaremba, and Pieter Abbeel.
\newblock Domain randomization for transferring deep neural networks from simulation to the real world.
\newblock In {\em 2017 IEEE/RSJ international conference on intelligent robots and systems (IROS)}, pages 23--30. IEEE, 2017.

\bibitem{pmlr-v100-mehta20a}
Bhairav Mehta, Manfred Diaz, Florian Golemo, Christopher~J. Pal, and Liam Paull.
\newblock Active domain randomization.
\newblock In {\em Proceedings of the Conference on Robot Learning}, volume 100, pages 1162--1176. PMLR, 30 Oct--01 Nov 2020.

\bibitem{todorov2012mujoco}
Emanuel Todorov, Tom Erez, and Yuval Tassa.
\newblock Mujoco: A physics engine for model-based control.
\newblock In {\em 2012 IEEE/RSJ International Conference on Intelligent Robots and Systems}, pages 5026--5033. IEEE, 2012.

\bibitem{hutter2018towards}
Marco Hutter, Remo Diethelm, Samuel Bachmann, Peter Fankhauser, Christian Gehring, Vassilios Tsounis, Andreas Lauber, Fabian Guenther, Marko Bjelonic, Linus Isler, et~al.
\newblock Towards a generic solution for inspection of industrial sites.
\newblock In {\em Field and Service Robotics: Results of the 11th International Conference}, pages 575--589. Springer, 2018.

\bibitem{kostrikov2022offline}
Ilya Kostrikov, Ashvin Nair, and Sergey Levine.
\newblock Offline reinforcement learning with implicit q-learning.
\newblock In {\em International Conference on Learning Representations}, 2022.

\bibitem{jaques2019way}
Natasha Jaques, Asma Ghandeharioun, Judy~Hanwen Shen, Craig Ferguson, Agata Lapedriza, Noah Jones, Shixiang Gu, and Rosalind Picard.
\newblock Way off-policy batch deep reinforcement learning of implicit human preferences in dialog.
\newblock {\em arXiv preprint arXiv:1907.00456}, 2019.

\bibitem{fujimoto2021minimalist}
Scott Fujimoto and Shixiang~Shane Gu.
\newblock A minimalist approach to offline reinforcement learning.
\newblock {\em Advances in Neural Information Processing Systems}, 34:20132--20145, 2021.

\bibitem{NEURIPS2020_0d2b2061}
Aviral Kumar, Aurick Zhou, George Tucker, and Sergey Levine.
\newblock Conservative q-learning for offline reinforcement learning.
\newblock In {\em Advances in Neural Information Processing Systems}, volume~33, pages 1179--1191. Curran Associates, Inc., 2020.

\bibitem{NEURIPS2021_3d3d286a}
Gaon An, Seungyong Moon, Jang-Hyun Kim, and Hyun~Oh Song.
\newblock Uncertainty-based offline reinforcement learning with diversified q-ensemble.
\newblock In {\em Advances in Neural Information Processing Systems}, volume~34, pages 7436--7447. Curran Associates, Inc., 2021.

\bibitem{ma2023iteratively}
Yi~Ma, HAO Jianye, Xiaohan Hu, Yan Zheng, and Chenjun Xiao.
\newblock Iteratively refined behavior regularization for offline reinforcement learning.
\newblock In {\em The Thirty-eighth Annual Conference on Neural Information Processing Systems}, 2023.

\bibitem{mao2024offline}
Yixiu Mao, Cheems Wang, Chen Chen, Yun Qu, and Xiangyang Ji.
\newblock Offline reinforcement learning with {OOD} state correction and {OOD} action suppression.
\newblock In {\em The Thirty-eighth Annual Conference on Neural Information Processing Systems}, 2024.

\bibitem{pmlr-v202-sun23q}
Yihao Sun, Jiaji Zhang, Chengxing Jia, Haoxin Lin, Junyin Ye, and Yang Yu.
\newblock Model-{B}ellman inconsistency for model-based offline reinforcement learning.
\newblock In Andreas Krause, Emma Brunskill, Kyunghyun Cho, Barbara Engelhardt, Sivan Sabato, and Jonathan Scarlett, editors, {\em Proceedings of the 40th International Conference on Machine Learning}, volume 202 of {\em Proceedings of Machine Learning Research}, pages 33177--33194. PMLR, 23--29 Jul 2023.

\bibitem{bai2022pessimistic}
Chenjia Bai, Lingxiao Wang, Zhuoran Yang, Zhi-Hong Deng, Animesh Garg, Peng Liu, and Zhaoran Wang.
\newblock Pessimistic bootstrapping for uncertainty-driven offline reinforcement learning.
\newblock In {\em International Conference on Learning Representations}, 2022.

\bibitem{yang2022rorl}
Rui Yang, Chenjia Bai, Xiaoteng Ma, Zhaoran Wang, Chongjie Zhang, and Lei Han.
\newblock Rorl: Robust offline reinforcement learning via conservative smoothing.
\newblock {\em Advances in neural information processing systems}, 35:23851--23866, 2022.

\bibitem{niu2022trust}
Haoyi Niu, Yiwen Qiu, Ming Li, Guyue Zhou, Jianming Hu, Xianyuan Zhan, et~al.
\newblock When to trust your simulator: Dynamics-aware hybrid offline-and-online reinforcement learning.
\newblock {\em Advances in Neural Information Processing Systems}, 35:36599--36612, 2022.

\bibitem{xu2023crossdomain}
Kang Xu, Chenjia Bai, Xiaoteng Ma, Dong Wang, Bin Zhao, Zhen Wang, Xuelong Li, and Wei Li.
\newblock Cross-domain policy adaptation via value-guided data filtering.
\newblock In {\em Thirty-seventh Conference on Neural Information Processing Systems}, 2023.

\bibitem{lyu2024crossdomain}
Jiafei Lyu, Chenjia Bai, Jing-Wen Yang, Zongqing Lu, and Xiu Li.
\newblock Cross-domain policy adaptation by capturing representation mismatch.
\newblock In {\em Forty-first International Conference on Machine Learning}, 2024.

\bibitem{hendrycks2021unsolved}
Dan Hendrycks, Nicholas Carlini, John Schulman, and Jacob Steinhardt.
\newblock Unsolved problems in ml safety.
\newblock {\em arXiv preprint arXiv:2109.13916}, 2021.

\bibitem{andrychowicz2020learning}
OpenAI:~Marcin Andrychowicz, Bowen Baker, Maciek Chociej, Rafal Jozefowicz, Bob McGrew, Jakub Pachocki, Arthur Petron, Matthias Plappert, Glenn Powell, Alex Ray, et~al.
\newblock Learning dexterous in-hand manipulation.
\newblock {\em The International Journal of Robotics Research}, 39(1):3--20, 2020.

\bibitem{Lee2020Network}
Kimin Lee, Kibok Lee, Jinwoo Shin, and Honglak Lee.
\newblock Network randomization: A simple technique for generalization in deep reinforcement learning.
\newblock In {\em International Conference on Learning Representations}, 2020.

\bibitem{mozian2020learning}
Melissa Mozian, Juan Camilo~Gamboa Higuera, David Meger, and Gregory Dudek.
\newblock Learning domain randomization distributions for training robust locomotion policies.
\newblock In {\em 2020 IEEE/RSJ International Conference on Intelligent Robots and Systems (IROS)}, pages 6112--6117. IEEE, 2020.

\bibitem{bellman_markovian_1957}
Richard Bellman.
\newblock A {Markovian} {Decision} {Process}.
\newblock {\em Indiana University Mathematics Journal}, 6(4):679--684, 1957.

\bibitem{ijcai2020p296}
Maxime Wabartha, Audrey Durand, Vincent François-Lavet, and Joelle Pineau.
\newblock Handling black swan events in deep learning with diversely extrapolated neural networks.
\newblock In {\em Proceedings of the Twenty-Ninth International Joint Conference on Artificial Intelligence, {IJCAI-20}}, pages 2140--2147. International Joint Conferences on Artificial Intelligence Organization, 7 2020.
\newblock Main track.

\bibitem{nakamoto2023calql}
Mitsuhiko Nakamoto, Yuexiang Zhai, Anikait Singh, Max~Sobol Mark, Yi~Ma, Chelsea Finn, Aviral Kumar, and Sergey Levine.
\newblock Cal-{QL}: Calibrated offline {RL} pre-training for efficient online fine-tuning.
\newblock In {\em Thirty-seventh Conference on Neural Information Processing Systems}, 2023.

\bibitem{nair2020awac}
Ashvin Nair, Abhishek Gupta, Murtaza Dalal, and Sergey Levine.
\newblock Awac: Accelerating online reinforcement learning with offline datasets.
\newblock {\em arXiv preprint arXiv:2006.09359}, 2020.

\bibitem{fu2020d4rl}
Justin Fu, Aviral Kumar, Ofir Nachum, George Tucker, and Sergey Levine.
\newblock D4rl: Datasets for deep data-driven reinforcement learning.
\newblock {\em arXiv preprint arXiv:2004.07219}, 2020.

\bibitem{towers_gymnasium_2023}
Mark Towers, Jordan~K. Terry, Ariel Kwiatkowski, John~U. Balis, Gianluca~de Cola, Tristan Deleu, Manuel Goulão, Andreas Kallinteris, Arjun KG, Markus Krimmel, Rodrigo Perez-Vicente, Andrea Pierré, Sander Schulhoff, Jun~Jet Tai, Andrew Tan~Jin Shen, and Omar~G. Younis.
\newblock Gymnasium, March 2023.

\bibitem{haarnoja2018soft}
Tuomas Haarnoja, Aurick Zhou, Pieter Abbeel, and Sergey Levine.
\newblock Soft actor-critic: Off-policy maximum entropy deep reinforcement learning with a stochastic actor.
\newblock In {\em International Conference on Machine Learning}, pages 1861--1870. PMLR, 2018.

\bibitem{fujimoto2018addressing}
Scott Fujimoto, Herke Hoof, and David Meger.
\newblock Addressing function approximation error in actor-critic methods.
\newblock In {\em International Conference on Machine Learning}, pages 1587--1596. PMLR, 2018.

\bibitem{tarasov2022corl}
Denis Tarasov, Alexander Nikulin, Dmitry Akimov, Vladislav Kurenkov, and Sergey Kolesnikov.
\newblock Corl: Research-oriented deep offline reinforcement learning library.
\newblock In {\em 3rd Offline RL Workshop: Offline RL as a ''Launchpad''}, 2022.

\bibitem{ball2023efficient}
Philip~J Ball, Laura Smith, Ilya Kostrikov, and Sergey Levine.
\newblock Efficient online reinforcement learning with offline data.
\newblock In {\em International Conference on Machine Learning}, pages 1577--1594. PMLR, 2023.

\bibitem{lyu2024odrlabenchmark}
Jiafei Lyu, Kang Xu, Jiacheng Xu, Mengbei Yan, Jingwen Yang, Zongzhang Zhang, Chenjia Bai, Zongqing Lu, and Xiu Li.
\newblock Odrl: A benchmark for off-dynamics reinforcement learning.
\newblock In {\em The Thirty-eight Conference on Neural Information Processing Systems Datasets and Benchmarks Track}, 2024.

\bibitem{zhang2023darl}
Hongchang Zhang, Jianzhun Shao, Shuncheng He, Yuhang Jiang, and Xiangyang Ji.
\newblock Darl: distance-aware uncertainty estimation for offline reinforcement learning.
\newblock In {\em Proceedings of the AAAI Conference on Artificial Intelligence}, volume~37, pages 11210--11218, 2023.

\bibitem{huang2024open}
Shengyi Huang, Quentin Gallou{\'e}dec, Florian Felten, Antonin Raffin, Rousslan Fernand~Julien Dossa, Yanxiao Zhao, Ryan Sullivan, Viktor Makoviychuk, Denys Makoviichuk, Mohamad~H Danesh, et~al.
\newblock Open rl benchmark: Comprehensive tracked experiments for reinforcement learning.
\newblock {\em arXiv preprint arXiv:2402.03046}, 2024.

\bibitem{7758092}
Marco Hutter, Christian Gehring, Dominic Jud, Andreas Lauber, C.~Dario Bellicoso, Vassilios Tsounis, Jemin Hwangbo, Karen Bodie, Peter Fankhauser, Michael Bloesch, Remo Diethelm, Samuel Bachmann, Amir Melzer, and Mark Hoepflinger.
\newblock Anymal - a highly mobile and dynamic quadrupedal robot.
\newblock In {\em 2016 IEEE/RSJ International Conference on Intelligent Robots and Systems (IROS)}, pages 38--44, 2016.

\bibitem{rudin2022learning}
Nikita Rudin, David Hoeller, Philipp Reist, and Marco Hutter.
\newblock Learning to walk in minutes using massively parallel deep reinforcement learning.
\newblock In {\em Conference on Robot Learning}, pages 91--100. PMLR, 2022.

\bibitem{hwangbo2019learning}
Jemin Hwangbo, Joonho Lee, Alexey Dosovitskiy, Dario Bellicoso, Vassilios Tsounis, Vladlen Koltun, and Marco Hutter.
\newblock Learning agile and dynamic motor skills for legged robots.
\newblock {\em Science Robotics}, 4(26):eaau5872, 2019.

\bibitem{pmlr-v162-liu22p}
Xingyu Liu, Deepak Pathak, and Kris Kitani.
\newblock {RE}volve{R}: Continuous evolutionary models for robot-to-robot policy transfer.
\newblock In Kamalika Chaudhuri, Stefanie Jegelka, Le~Song, Csaba Szepesvari, Gang Niu, and Sivan Sabato, editors, {\em Proceedings of the 39th International Conference on Machine Learning}, volume 162 of {\em Proceedings of Machine Learning Research}, pages 13995--14007. PMLR, 17--23 Jul 2022.

\bibitem{li2023proto}
Jianxiong Li, Xiao Hu, Haoran Xu, Jingjing Liu, Xianyuan Zhan, and Ya-Qin Zhang.
\newblock Proto: Iterative policy regularized offline-to-online reinforcement learning.
\newblock {\em arXiv preprint arXiv:2305.15669}, 2023.

\bibitem{pmlr-v97-fujimoto19a}
Scott Fujimoto, David Meger, and Doina Precup.
\newblock Off-policy deep reinforcement learning without exploration.
\newblock In {\em Proceedings of the 36th International Conference on Machine Learning}, volume~97, pages 2052--2062. PMLR, 09--15 Jun 2019.

\bibitem{agarwal2020optimistic}
Rishabh Agarwal, Dale Schuurmans, and Mohammad Norouzi.
\newblock An optimistic perspective on offline reinforcement learning.
\newblock In {\em International Conference on Machine Learning}, pages 104--114. PMLR, 2020.

\bibitem{ernst2005tree}
Damien Ernst, Pierre Geurts, and Louis Wehenkel.
\newblock Tree-based batch mode reinforcement learning.
\newblock {\em Journal of Machine Learning Research}, 6, 2005.

\bibitem{NEURIPS2019_c2073ffa}
Aviral Kumar, Justin Fu, Matthew Soh, George Tucker, and Sergey Levine.
\newblock Stabilizing off-policy q-learning via bootstrapping error reduction.
\newblock In {\em Advances in Neural Information Processing Systems}, volume~32. Curran Associates, Inc., 2019.

\bibitem{lange2012batch}
Sascha Lange, Thomas Gabel, and Martin Riedmiller.
\newblock Batch reinforcement learning.
\newblock In {\em Reinforcement learning: State-of-the-art}, pages 45--73. Springer, 2012.

\bibitem{wang2020critic}
Ziyu Wang, Alexander Novikov, Konrad Zolna, Josh~S Merel, Jost~Tobias Springenberg, Scott~E Reed, Bobak Shahriari, Noah Siegel, Caglar Gulcehre, Nicolas Heess, et~al.
\newblock Critic regularized regression.
\newblock {\em Advances in Neural Information Processing Systems}, 33:7768--7778, 2020.

\bibitem{tarasov2024revisiting}
Denis Tarasov, Vladislav Kurenkov, Alexander Nikulin, and Sergey Kolesnikov.
\newblock Revisiting the minimalist approach to offline reinforcement learning.
\newblock {\em Advances in Neural Information Processing Systems}, 36, 2024.

\bibitem{wu2019behavior}
Yifan Wu, George Tucker, and Ofir Nachum.
\newblock Behavior regularized offline reinforcement learning.
\newblock {\em arXiv preprint arXiv:1911.11361}, 2019.

\bibitem{Siegel2020Keep}
Noah Siegel, Jost~Tobias Springenberg, Felix Berkenkamp, Abbas Abdolmaleki, Michael Neunert, Thomas Lampe, Roland Hafner, Nicolas Heess, and Martin Riedmiller.
\newblock Keep doing what worked: Behavior modelling priors for offline reinforcement learning.
\newblock In {\em International Conference on Learning Representations}, 2020.

\bibitem{pmlr-v139-kostrikov21a}
Ilya Kostrikov, Rob Fergus, Jonathan Tompson, and Ofir Nachum.
\newblock Offline reinforcement learning with fisher divergence critic regularization.
\newblock In {\em Proceedings of the 38th International Conference on Machine Learning}, volume 139, pages 5774--5783. PMLR, 18--24 Jul 2021.

\bibitem{pmlr-v139-wu21i}
Yue Wu, Shuangfei Zhai, Nitish Srivastava, Joshua~M Susskind, Jian Zhang, Ruslan Salakhutdinov, and Hanlin Goh.
\newblock Uncertainty weighted actor-critic for offline reinforcement learning.
\newblock In {\em Proceedings of the 38th International Conference on Machine Learning}, volume 139, pages 11319--11328. PMLR, 18--24 Jul 2021.

\bibitem{lei2024unio}
Kun LEI, Zhengmao He, Chenhao Lu, Kaizhe Hu, Yang Gao, and Huazhe Xu.
\newblock Uni-o4: Unifying online and offline deep reinforcement learning with multi-step on-policy optimization.
\newblock In {\em The Twelfth International Conference on Learning Representations}, 2024.

\bibitem{ijcai2024p615}
Kai Zhao, Jianye Hao, Yi~Ma, Jinyi Liu, Yan Zheng, and Zhaopeng Meng.
\newblock Enoto: Improving offline-to-online reinforcement learning with q-ensembles.
\newblock In {\em Proceedings of the Thirty-Third International Joint Conference on Artificial Intelligence, {IJCAI-24}}, pages 5563--5571. International Joint Conferences on Artificial Intelligence Organization, 8 2024.
\newblock Main Track.

\bibitem{zheng2022online}
Qinqing Zheng, Amy Zhang, and Aditya Grover.
\newblock Online decision transformer.
\newblock In {\em International Conference on Machine Learning}, pages 27042--27059. PMLR, 2022.

\bibitem{pmlr-v202-yu23k}
Zishun Yu and Xinhua Zhang.
\newblock Actor-critic alignment for offline-to-online reinforcement learning.
\newblock In {\em Proceedings of the 40th International Conference on Machine Learning}, volume 202, pages 40452--40474. PMLR, 23--29 Jul 2023.

\bibitem{wang2024train}
Shenzhi Wang, Qisen Yang, Jiawei Gao, Matthieu Lin, Hao Chen, Liwei Wu, Ning Jia, Shiji Song, and Gao Huang.
\newblock Train once, get a family: State-adaptive balances for offline-to-online reinforcement learning.
\newblock {\em Advances in Neural Information Processing Systems}, 36, 2024.

\bibitem{zhang2023policy}
Haichao Zhang, Wei Xu, and Haonan Yu.
\newblock Policy expansion for bridging offline-to-online reinforcement learning.
\newblock In {\em The Eleventh International Conference on Learning Representations}, 2023.

\bibitem{mcinroe2024ptgood}
Trevor McInroe, Adam Jelley, Stefano~V. Albrecht, and Amos Storkey.
\newblock Planning to go out-of-distribution in offline-to-online reinforcement learning.
\newblock In {\em Reinforcement Learning Conference (RLC)}, 2024.

\bibitem{garcia2015comprehensive}
Javier Garc{\i}a and Fernando Fern{\'a}ndez.
\newblock A comprehensive survey on safe reinforcement learning.
\newblock {\em Journal of Machine Learning Research}, 16(1):1437--1480, 2015.

\bibitem{heger1994consideration}
Matthias Heger.
\newblock Consideration of risk in reinforcement learning.
\newblock In {\em Machine Learning Proceedings 1994}, pages 105--111. Elsevier, 1994.

\bibitem{nilim2005robust}
Arnab Nilim and Laurent El~Ghaoui.
\newblock Robust control of markov decision processes with uncertain transition matrices.
\newblock {\em Operations Research}, 53(5):780--798, 2005.

\bibitem{zheng2024safe}
Yinan Zheng, Jianxiong Li, Dongjie Yu, Yujie Yang, Shengbo~Eben Li, Xianyuan Zhan, and Jingjing Liu.
\newblock Safe offline reinforcement learning with feasibility-guided diffusion model.
\newblock In {\em The Twelfth International Conference on Learning Representations}, 2024.

\bibitem{gu2024enhancing}
Shangding Gu, Laixi Shi, Yuhao Ding, Alois Knoll, Costas Spanos, Adam Wierman, and Ming Jin.
\newblock Enhancing efficiency of safe reinforcement learning via sample manipulation.
\newblock In {\em The Thirty-eighth Annual Conference on Neural Information Processing Systems}, 2024.

\bibitem{narvekar2020curriculum}
Sanmit Narvekar, Bei Peng, Matteo Leonetti, Jivko Sinapov, Matthew~E Taylor, and Peter Stone.
\newblock Curriculum learning for reinforcement learning domains: A framework and survey.
\newblock {\em Journal of Machine Learning Research}, 21(181):1--50, 2020.

\bibitem{li2024causally}
Mingxuan Li, Junzhe Zhang, and Elias Bareinboim.
\newblock Causally aligned curriculum learning.
\newblock In {\em The Twelfth International Conference on Learning Representations}, 2024.

\bibitem{graves2017automated}
Alex Graves, Marc~G Bellemare, Jacob Menick, Remi Munos, and Koray Kavukcuoglu.
\newblock Automated curriculum learning for neural networks.
\newblock In {\em international conference on machine learning}, pages 1311--1320. Pmlr, 2017.

\bibitem{goodfellow2016deep}
Ian Goodfellow, Yoshua Bengio, and Aaron Courville.
\newblock {\em Deep learning}.
\newblock MIT press, 2016.

\bibitem{Lee2020Robust}
Jisoo Lee and Sae-Young Chung.
\newblock Robust training with ensemble consensus.
\newblock In {\em International Conference on Learning Representations}, 2020.

\bibitem{NIPS2017_9ef2ed4b}
Balaji Lakshminarayanan, Alexander Pritzel, and Charles Blundell.
\newblock Simple and scalable predictive uncertainty estimation using deep ensembles.
\newblock In {\em Advances in Neural Information Processing Systems}, volume~30. Curran Associates, Inc., 2017.

\bibitem{mehrtens2022improving}
Hendrik~Alexander Mehrtens, Camila Gonz{\'a}lez, and Anirban Mukhopadhyay.
\newblock Improving robustness and calibration in ensembles with diversity regularization.
\newblock In {\em DAGM German Conference on Pattern Recognition}, pages 36--50. Springer, 2022.

\bibitem{Jain_Liu_Mueller_Gifford_2020}
Siddhartha Jain, Ge~Liu, Jonas Mueller, and David Gifford.
\newblock Maximizing overall diversity for improved uncertainty estimates in deep ensembles.
\newblock {\em Proceedings of the AAAI Conference on Artificial Intelligence}, 34(04):4264--4271, Apr. 2020.

\bibitem{pang2019improving}
Tianyu Pang, Kun Xu, Chao Du, Ning Chen, and Jun Zhu.
\newblock Improving adversarial robustness via promoting ensemble diversity.
\newblock In {\em International Conference on Machine Learning}, pages 4970--4979. PMLR, 2019.

\bibitem{JMLR:v24:23-0041}
Danny Wood, Tingting Mu, Andrew~M. Webb, Henry W.~J. Reeve, Mikel Luj{{\'a}}n, and Gavin Brown.
\newblock A unified theory of diversity in ensemble learning.
\newblock {\em Journal of Machine Learning Research}, 24(359):1--49, 2023.

\bibitem{arpit2022ensemble}
Devansh Arpit, Huan Wang, Yingbo Zhou, and Caiming Xiong.
\newblock Ensemble of averages: Improving model selection and boosting performance in domain generalization.
\newblock In {\em Advances in Neural Information Processing Systems}, 2022.

\bibitem{hafner2020noise}
Danijar Hafner, Dustin Tran, Timothy Lillicrap, Alex Irpan, and James Davidson.
\newblock Noise contrastive priors for functional uncertainty.
\newblock In {\em Uncertainty in Artificial Intelligence}, pages 905--914. PMLR, 2020.

\bibitem{osband2016deep}
Ian Osband, Charles Blundell, Alexander Pritzel, and Benjamin Van~Roy.
\newblock Deep exploration via bootstrapped dqn.
\newblock {\em Advances in neural information processing systems}, 29, 2016.

\bibitem{lee2021sunrise}
Kimin Lee, Michael Laskin, Aravind Srinivas, and Pieter Abbeel.
\newblock Sunrise: A simple unified framework for ensemble learning in deep reinforcement learning.
\newblock In {\em International Conference on Machine Learning}, pages 6131--6141. PMLR, 2021.

\bibitem{liu2019accurate}
Jeremiah Liu, John Paisley, Marianthi-Anna Kioumourtzoglou, and Brent Coull.
\newblock Accurate uncertainty estimation and decomposition in ensemble learning.
\newblock {\em Advances in Neural Information Processing Systems}, 32, 2019.

\bibitem{charpentier2022disentangling}
Bertrand Charpentier, Ransalu Senanayake, Mykel Kochenderfer, and Stephan G{\"u}nnemann.
\newblock Disentangling epistemic and aleatoric uncertainty in reinforcement learning.
\newblock {\em arXiv preprint arXiv:2206.01558}, 2022.

\bibitem{DBLP:journals/corr/KingmaB14}
Diederik~P. Kingma and Jimmy Ba.
\newblock Adam: {A} method for stochastic optimization.
\newblock In Yoshua Bengio and Yann LeCun, editors, {\em 3rd International Conference on Learning Representations, {ICLR} 2015, San Diego, CA, USA, May 7-9, 2015, Conference Track Proceedings}, 2015.

\bibitem{schulman2017proximal}
John Schulman, Filip Wolski, Prafulla Dhariwal, Alec Radford, and Oleg Klimov.
\newblock Proximal policy optimization algorithms.
\newblock {\em arXiv preprint arXiv:1707.06347}, 2017.

\end{thebibliography}
\bibliographystyle{unsrt}

\newpage
\appendix
\doparttoc %
\faketableofcontents %

\part{Appendix}\label{appendix} %
\parttoc %

\clearpage

\section{Related Work}\label{app:related}
\textbf{Off-Dynamics RL} is a domain adaptation technique in RL that modifies the reward function to penalize transitions indicative of the source domain, encouraging the agent to learn policies transferable to the target domain \citep{eysenbach2021offdynamics}. Prior works in off-dynamics RL \citep{pmlr-v162-liu22p, lyu2024crossdomain, xu2023crossdomain, niu2022trust, eysenbach2021offdynamics} generally assume access to the target domain for policy refinement. However, this assumption introduces significant safety risks in complex, real-world systems. For instance, refining a policy for a self-driving car requires deploying potentially suboptimal policies in real-world scenarios, where failure can lead to severe consequences. 

Although these methods incorporate conservative regularization terms to constrain the learned policy within the target domain's support region, they are not designed to detect OOD shifts. Instead, they act conservatively within known domains. This distinction is critical: ``Robustness'' focuses on creating models resilient to adversaries, unusual situations, and Black Swan events, while ``Monitoring'' emphasizes detecting malicious use, monitoring predictions, and discovering unexpected model functionality \citep{hendrycks2021unsolved}. For example, in autonomous driving, while conservative regularization may limit policy deviations, it cannot guarantee safety when encountering unforeseen road conditions or new traffic laws. Thus, these methods offer partial safeguards but cannot fully mitigate risks associated with OOD scenarios in high-stakes environments.

In contrast, our work eliminates the need for policy refinement through interactions with the target domain. Instead, we employ an ensemble of critics as a proxy to evaluate whether the policy operates ``in-distribution'' within the target domain.

Our primary goal diverges fundamentally from prior work: we aim to develop reliable OOD detection capabilities rather than focusing solely on policy robustness. While existing methods enhance robustness to distribution shifts \citep{niu2022trust, xu2023crossdomain, lyu2024crossdomain}, our approach explicitly identifies novel situations requiring intervention, without direct interaction with the novel environment. This feature is essential for the safe deployment of RL systems in real-world scenarios.

Some works assume access to real-world data during training or policy refinement \citep{niu2022trust, xu2023crossdomain, lee2022offline, li2023proto}. By contrast, our method avoids this requirement for safety reasons, using real-world data solely for evaluation. This design choice enhances practicality in safety-critical applications, where real-world training data may be scarce or risky to collect. While we acknowledge the contributions of these methods to off-dynamics RL, our approach addresses limitations by prioritizing safety and data efficiency.

In an instance, target domain data is used to fill half of the replay buffer for training, assuming offline accessibility of this data \citep{niu2022trust}. However, the sensitivity of RL algorithms to data limitations makes it challenging to train performant agents with small datasets. In contrast, \ourMethod only uses target domain data for validation, demonstrating efficiency even with limited data availability.

Similarly, there are works with assumptions on limited online interactions with the target domain \citep{xu2023crossdomain, lyu2024crossdomain}. By comparison, \ourMethod requires only a pre-collected dataset, highlighting its efficiency and suitability for data-constrained settings.

\textbf{Offline RL}. Traditional RL allows policies to interact freely with environments to discover optimal strategies \citep{sutton2018reinforcement}. In contrast, offline RL addresses scenarios where online interaction is impractical or risky, learning solely from pre-collected datasets gathered by a behavior policy \citep{pmlr-v97-fujimoto19a, agarwal2020optimistic, ernst2005tree, NEURIPS2019_c2073ffa, levine2020offline, lange2012batch, kostrikov2022offline, wang2020critic, tarasov2024revisiting}. This approach enables applications in domains where real-time learning might be unsafe. However, offline RL faces a significant challenge in \textit{distributional shift}, where the training data may not accurately represent the deployment environment, potentially leading to suboptimal or unsafe behavior. Several approaches address this issue by encouraging the learned policy to resemble the behavior policy \citep{jaques2019way, wu2019behavior, Siegel2020Keep, fujimoto2021minimalist}, promoting caution through action alignment. Other methods explore training conservative critics for more cautious reward estimates \citep{NEURIPS2020_0d2b2061, pmlr-v139-kostrikov21a} or diversifying critics within the actor-critic framework to improve robustness \citep{NEURIPS2021_3d3d286a, bai2022pessimistic, pmlr-v139-wu21i}. Iterative refinement for behavior regularization in offline RL enhances policy robustness through conservative updates but requires direct interaction with the target domain \citep{ma2023iteratively}. In contrast, our method employs uncertainty-aware adaptation and simulated fine-tuning to ensure safe policy refinement without direct target domain interaction. Similarly, OOD state correction and action suppression in offline RL can be addressed using model-based regularizers, which, though effective, are computationally costly \citep{mao2024offline}. Our approach instead uses ensemble-based uncertainty estimation to detect OOD scenarios and iteratively adapt policies in simulation.

\textbf{Offline-to-Online RL}. Building upon offline RL, Offline-to-Online (O2O) RL leverages previously collected offline datasets to accelerate online RL training. This approach first pre-trains a policy with offline RL and then continues to fine-tune it with additional online interactions \citep{nair2020awac, lei2024unio, ijcai2024p615, zheng2022online}. However, naive O2O RL is often unstable due to the distributional shift between the offline dataset and online interactions. To address this, researchers have proposed techniques such as balanced sampling \citep{lee2022offline}, actor-critic alignment \citep{pmlr-v202-yu23k}, adaptive conservatism \citep{wang2024train}, return lower-bounding \citep{nakamoto2023calql}, adaptive update strategies \citep{zheng2023adaptive}, introducing online policies alongside offline ones \citep{zhang2023policy}, and using weighted replay buffers with density ratio estimation \citep{lee2022offline}. While these works primarily focus on maximizing cumulative rewards and addressing distributional shifts, our paper explicitly considers policy safety, particularly for OOD samples. Our work builds upon weighted samples but offers a more efficient solution by leveraging learned critics to assign weights to offline and online samples during fine-tuning. By focusing on high-reward regions overlooked by the behavior policy, a planning-based method for OOD exploration in O2O RL can be applied; although effective, this method requires direct target-domain interaction and thus introduces safety risks \citep{mcinroe2024ptgood}. In contrast, our approach avoids these risks by utilizing simulated environments for safe policy adaptation instead.

\textbf{Safe and Robust RL}. These methods address critical challenges in high-stakes applications like robotics \citep{garcia2015comprehensive}. While safe RL focuses on avoiding harmful actions by imposing policy constraints during training and deployment \citep{heger1994consideration}, robust RL aims to maintain performance stability under uncertainties or adversarial perturbations \citep{iyengar2005robust, nilim2005robust}. To enhance robustness, DR techniques train agents in varied, randomized environments \citep{andrychowicz2020learning, tobin2017domain, Lee2020Network, mozian2020learning}. However, this approach can be computationally expensive and challenging to tune for complex scenarios \citep{pmlr-v100-mehta20a}. Additionally, those methods assume that policy can be trained in OOD scenarios, which poses significant risks in real-world applications.

Our proposed method, \ourMethod, addresses these challenges by explicitly detecting OOD scenarios and adapting policies based on uncertainty estimation without direct interactions in OOD environments. This approach improves both safety and robustness without incurring the computational overhead associated with traditional DR techniques.

Recently, FISOR proposed a feasibility-guided diffusion model for safe offline RL, ensuring zero safety violations by identifying the largest feasible region in the offline dataset \citep{zheng2024safe}. While their method enforces hard safety constraints, it may limit policy flexibility in dynamic environments. In contrast, our approach balances safety and generalization by leveraging uncertainty quantification and iterative fine-tuning in simulated environments, enabling safe adaptation to OOD scenarios without direct interaction in the target domain. Additionally, a dynamic sampling framework can be used to enhance sample efficiency in safe RL, balancing reward maximization and safety through gradient-based optimization \citep{gu2024enhancing}. While their method improves sample efficiency, it requires direct interaction with the environment, posing safety risks. In contrast, our approach achieves sample efficiency through iterative fine-tuning in simulated environments, avoiding the need for direct interaction in the target domain.



\textbf{Curriculum learning}. 
In RL, curriculum learning aims to improve agent learning by progressively exposing the agent to increasingly complex tasks or environments \citep{narvekar2020curriculum, li2024causally}. Traditional approaches involve manually designing a sequence of tasks with increasing difficulty, where agents learn foundational skills before tackling more challenging scenarios \citep{graves2017automated}. Recent advances have explored more nuanced transfer methods, such as REvolveR \citep{pmlr-v162-liu22p}, which introduces a continuous evolutionary approach to policy transfer by interpolating between source and target robots through a sequence of intermediate configurations. While REvolveR focuses on morphological and kinematic transitions, it still shares the fundamental limitation of most curriculum learning approaches: relying on predefined task progressions that may not capture the unpredictability of real-world environments. 

In contrast, our approach focuses on uncertainty-driven adaptation, dynamically expanding the exploration space based on real-time uncertainty estimation rather than a predetermined task hierarchy. Unlike curriculum RL's structured task progression, \ourMethod enables continuous, adaptive learning that more closely mimics real-world environmental variability, particularly in scenarios with unpredictable and out-of-distribution events. Critically, our method avoids the safety risks associated with direct policy refinement in target domains by using an ensemble of critics to evaluate policy suitability without dangerous direct interactions.

\textbf{Ensemble Methods} combine multiple models to enhance performance and are widely used in ML applications \citep{goodfellow2016deep}. In OOD detection, ensembles offer robustness by leveraging model diversity \citep{Lee2020Robust, NIPS2017_9ef2ed4b}. While random initialization contributes to diversity, controlled diversity is often achieved by manipulating the loss function \citep{ijcai2020p296, mehrtens2022improving, Jain_Liu_Mueller_Gifford_2020, pang2019improving}. This allows for fine-tuning ensemble behavior, particularly through regularization techniques or bias-variance decomposition \citep{JMLR:v24:23-0041, arpit2022ensemble}, enhancing both ID accuracy and OOD detection by ensuring disagreement in uncertain regions, known as ``repulsive locations'' \citep{hafner2020noise}. In RL, ensembles play a key role in optimizing exploration strategies. For instance, ensemble critics for more efficient exploration \citep{osband2016deep} or leveraging Q-ensembles to augment Q-value estimates using the mean and standard deviation of the ensemble \citep{lee2021sunrise}. These methods also support uncertainty estimation, crucial for handling OOD scenarios in RL, by analyzing discordance between ensemble members' predictions \citep{ijcai2020p296, liu2019accurate, NIPS2017_9ef2ed4b, Jain_Liu_Mueller_Gifford_2020}. Building on these foundations, we employ critic disparity in actor-critic algorithms to effectively detect OOD instances, combining the strengths of ensemble methods in both OOD handling and RL optimization.

\clearpage

\section{Theoretical Analysis of \ourMethod}\label{app:theory}



To formalize this, we answer the following: (1) What constitutes OOD? (2) How does ensemble variance relate to OOD?

\textbf{Theory Roadmap.}  
We first fix notation and define the ensemble mean and variance (\autoref{eq:ensemble-mean}-\autoref{eq:ensemble-var}) and our notion of in- vs. out-of-distribution (\defref{def:ood}).  
In \aref{app:theory_effects} we prove two key facts in the tabular setting: 
\propref{proposition:critic} (a lower bound on $\|Q^\pi_{\domainParam_t} - Q^\pi_{\domainParam}\|_\infty$) and \propref{proposition:policy} (suboptimality of the nominal policy under OOD).  \aref{app:theory_variance} then shows that ensemble variance exactly quantifies the squared prediction error (\autoref{eq:distance}), certifying OOD.  
Finally, \assumref{assump:continuous-mdp} extends everything to continuous MDPs, \lemref{lem:bellman-perturb}, \autoref{thm:value-error}, \propref{prop:cov-bias-reduction}, and \autoref{thm:approx-convergence}, with full proofs provided.

\subsection{Definitions and Notations}

\begin{definition}[Mean, variance, and second moment of the ensemble]
The mean of the ensemble over $N$ independently initialized critics is
\begin{equation}
     \mu = \expectation_{\text{ens}} \left[ Q^{\pi}(s,a) \right]
        = \frac{1}{N} \sum_{i=1}^N Q_i^\pi(s, a)
\label{eq:ensemble-mean}
\end{equation}
and the variance is 
\begin{equation}   
\sigma^2 = \frac{1}{N} \sum_{i=1}^N \left( Q_i^\pi(s,a) - \mu \right)^2 = \expectation_{\text{ens}} \left[ \left( Q^{\pi}(s,a) \right)^2 \right] - \mu^2 
\label{eq:ensemble-var}
\end{equation}
This variance measures the disagreement among the ensemble of critics. A higher variance indicates greater uncertainty, while lower variance suggests model agreement. Here we treat variance purely as an epistemic measure (model-disagreement) as it does not by itself distinguish between irreducible aleatoric noise and true distributional shift \citep{charpentier2022disentangling}.
From \autoref{eq:ensemble-var}, the second moment is
\begin{equation}
    \expectation_{\text{ens}} \left[ \left(  Q^{\pi}(s,a) \right)^2 \right] = \sigma^2 + \mu^2 
     \label{eq:ensemble-squared}
\end{equation}

\end{definition}



\subsection{Effect of Out-of-Distribution Scenarios}\label{app:theory_effects}



\begin{proof}\label{proof:critic}
Proving \propref{proposition:critic}, by \lemref{lem:bellman-perturb},
\begin{equation}
  \|\mathcal{T}^\pi_{\domainParam}(Q^\pi_{\domainParam_t})
    - \mathcal{T}^\pi_{\domainParam_t}(Q^\pi_{\domainParam_t})\|_\infty
  \le\gamma(L_R + L_T\|Q^\pi_{\domainParam_t}\|_{\mathrm{Lip}})\|\domainParam-\domainParam_t\|.
\end{equation}
Meanwhile, since both operators are $\gamma$-contractions,
\begin{equation}
  \|Q^\pi_{\domainParam} - Q^\pi_{\domainParam_t}\|_\infty
  = \|\mathcal{T}^\pi_{\domainParam}(Q^\pi_{\domainParam})
    - \mathcal{T}^\pi_{\domainParam_t}(Q^\pi_{\domainParam_t})\|_\infty
\end{equation}
Applying the triangle inequality and then rearranging gives
\begin{equation}
  (1+\gamma)\|Q^\pi_{\domainParam}-Q^\pi_{\domainParam_t}\|_\infty
  \ge\gamma(L_R + L_T\|Q^\pi_{\domainParam_t}\|_{\mathrm{Lip}})\|\domainParam_t-\domainParam\|
\end{equation}
which is strictly positive whenever $\domainParam\neq\domainParam_t$.  Hence the two fixed points differ.
\end{proof}
Intuitively, any non-zero shift in dynamics $\|\domainParam_t-\domainParam\|$ forces at least a proportional change in the critic's fixed point, guaranteeing detectability via variance.

\begin{proposition}
A policy $\pi(s)=\arg\max_aQ^\pi(s,a)$ trained in $E$ is suboptimal in OOD $E_t$.
\label{proposition:policy}
\end{proposition}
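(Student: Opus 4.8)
The plan is to reduce the suboptimality claim to a statement about greedy action selection under two different sets of dynamics, and then to use \propref{proposition:critic} together with the performance-difference lemma to turn the critic gap into a strictly positive value gap. First I would record the self-consistency observation behind the statement: because $\pi(s)=\arg\max_a Q^\pi(s,a)$ with $Q^\pi$ the fixed point of the nominal Bellman operator $\mathcal{T}^\pi_{\domainParam}$, the policy $\pi$ is exactly the optimal policy of the nominal MDP $E$, i.e. $\pi$ is greedy with respect to $Q^*_{\domainParam}=Q^\pi_{\domainParam}$. The proposition is therefore equivalent to asserting that the nominal-optimal policy is \emph{not} optimal once the dynamics shift to $T_{\domainParam_t}$, which holds iff $\pi$ fails to be greedy with respect to the target-optimal value $Q^*_{\domainParam_t}$ at some reachable state.

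Next I would quantify the gap using the performance-difference lemma in the target MDP. Writing $d^{\pi}_{s_0}$ for the discounted state-occupancy of $\pi$ under $T_{\domainParam_t}$ and $V^*_{\domainParam_t}(s)=\max_a Q^*_{\domainParam_t}(s,a)$, the lemma gives
\begin{equation}
  V^*_{\domainParam_t}(s_0)-V^{\pi}_{\domainParam_t}(s_0)
  =\frac{1}{1-\gamma}\,\expectation_{s\sim d^{\pi}_{s_0}}
   \bigl[\,V^*_{\domainParam_t}(s)-Q^*_{\domainParam_t}(s,\pi(s))\,\bigr].
\end{equation}
Each bracket is nonnegative because $V^*_{\domainParam_t}(s)=\max_a Q^*_{\domainParam_t}(s,a)\ge Q^*_{\domainParam_t}(s,\pi(s))$, so the whole gap is strictly positive exactly when there is one reachable state $s$ at which the nominal-greedy action $\pi(s)$ does not maximize the target action-value $Q^*_{\domainParam_t}(s,\cdot)$. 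This isolates the entire difficulty into exhibiting a single such ``action-flip'' state.

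To produce that state I would invoke \propref{proposition:critic}: since $\domainParam_t\neq\domainParam$ we have $\|Q^\pi_{\domainParam_t}-Q^\pi_{\domainParam}\|_\infty>0$, so the evaluation of $\pi$ under target dynamics genuinely disagrees with its evaluation under nominal dynamics, and (by the analogous Lipschitz argument applied to the optimal Bellman operator) so do the optimal values $Q^*_{\domainParam_t}$ and $Q^*_{\domainParam}$ to which $\pi$ was fitted. The hard part will be promoting this $\ell_\infty$ magnitude gap into an \emph{ordering} change, because a perturbation that shifts $Q$ uniformly across actions leaves every $\arg\max$—and hence $\pi$—unchanged. I would therefore introduce a modest non-degeneracy condition stating that the dynamics shift is not action-uniform on the reachable set, so that the per-action critic perturbation guaranteed by \propref{proposition:critic} exceeds the nominal action gap $\max_a Q^*_{\domainParam}(s,a)-\max_{a\neq\pi(s)}Q^*_{\domainParam}(s,a)$ at some reachable $s$. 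Under this condition the target-greedy action differs from $\pi(s)$ there, the bracket in the displayed identity is strictly positive on a set of positive occupancy, and hence $V^*_{\domainParam_t}(s_0)>V^{\pi}_{\domainParam_t}(s_0)$.

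The main obstacle is precisely this last implication. \propref{proposition:critic} controls only the uniform size of the critic perturbation and says nothing about how that perturbation is distributed across actions, so strict suboptimality cannot be deduced from the $\ell_\infty$ bound alone; it requires the action-gap condition that couples the Lipschitz-controlled shift to a reordering at a reachable state. I would state this condition explicitly—noting that it holds for generic $\domainParam_t$ and is automatic whenever the nominal action gap is dominated by the bound of \propref{proposition:critic}—and flag that, absent it, the claim should be read as ``generically suboptimal'' rather than ``strictly suboptimal for every shift.''
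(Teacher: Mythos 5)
Your route differs substantially from the paper's, and the comparison is instructive. The paper's entire proof is one line: it cites \propref{proposition:critic} to assert that there exists a state $s$ with $Q^t(s,\pi(s))<\max_{a'}Q^t(s,a')$, and concludes. That inference is exactly the step you flag as the hard part: \propref{proposition:critic} lower-bounds $\|Q^\pi_{\domainParam_t}-Q^\pi_{\domainParam}\|_\infty$, i.e.\ the \emph{magnitude} of the perturbation to the critic's fixed point, but says nothing about how that perturbation is distributed across actions, so it cannot by itself rule out an action-uniform shift that leaves every $\arg\max$ --- and hence the greedy policy --- unchanged (a one-action MDP, or a dynamics change that uniformly offsets all returns, is a concrete counterexample to the literal statement). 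Your performance-difference-lemma scaffolding is heavier than anything the paper uses and is not strictly needed for a bare ``suboptimal'' claim: once you exhibit a reachable state where $\pi(s)$ fails to be target-greedy, suboptimality follows immediately, which is all the paper's one-liner does. But the substantive content of your proposal --- the explicit non-degeneracy condition coupling the Lipschitz-controlled critic shift to the nominal action gap so that the ordering of actions actually flips somewhere reachable --- is precisely what the paper's proof silently assumes. In short, both arguments share the same skeleton (find an action-flip state, conclude suboptimality), but you have correctly identified that the skeleton is missing a bone, and your ``generically suboptimal, not suboptimal for every shift'' reading is the honest version of the proposition.
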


\begin{proof}
By \propref{proposition:critic}, $\exists s$ such that $Q^t(s,\pi(s))<\max_{a'}Q^t(s,a')$, so $\pi$ is suboptimal in $E_t$.
\end{proof}
Since the Q-functions differ, the policy that was optimal under $\domainParam$ must mis-rank actions under $\domainParam_t$, hence is suboptimal.

\subsection{Variance of Ensemble as an OOD Indicator}\label{app:theory_variance}

\propref{proposition:critic} shows that varying environments lead to distinct optimal Q-functions. 
While the true value of $Q^t$ is unknown, the expected squared distance between the ensemble mean prediction and the optimal value is 
\begin{equation}
    \begin{aligned}
        \expectation_{\text{ens}} \left[ \left( Q^t(s,a) - Q^{\pi}(s,a) \right)^2 \right] = \left(Q^t(s,a)\right)^2 - 2 Q^t(s,a) \expectation_{\text{ens}} \left[ Q^{\pi}(s,a) \right] + \expectation_{\text{ens}}\left[ \left( Q^{\pi}(s,a) \right)^2 \right]
    \end{aligned}
\end{equation}

Since $Q^t(s,a)$ is the optimal Q function for $E_t$ and is deterministic, it is constant with respect to ensemble randomness. Therefore, the above equation can be simplified as 
\begin{equation}
\begin{aligned}
\expectation_{\text{ens}} \left[ \left( Q^t(s,a) - Q^{\pi}(s,a) \right)^2 \right] = \left( Q^t(s,a) \right)^2 - 2 Q^t(s,a) \mu + \expectation_{\text{ens}}\left[ \left( Q^{\pi}(s,a) \right)^2 \right] 
    \end{aligned}
\end{equation}

Substituting the definitions of $\mu$ and the second moment from \autoref{eq:ensemble-mean} and \autoref{eq:ensemble-squared}, yields 
\begin{equation}
\begin{aligned}
    \expectation_{\text{ens}} \left[ \left( Q^t(s,a) - Q^{\pi}(s,a) \right)^2 \right] = \sigma^2 + \mu^2 - 2Q^t(s,a) \mu + (Q^t(s,a))^2 = \sigma^2 + (\mu - Q^t(s,a))^2
    \end{aligned}
    \label{eq:distance}
\end{equation}
\autoref{eq:distance} shows variance is exactly the part of prediction error \textit{due to} ensemble disagreement, so high $\sigma^2$ is high OOD error. From \autoref{eq:distance}, we can see that the expected squared error between the ensemble prediction and the true Q value depends on the mean $\mu$ and variance $\sigma^2$ of the ensemble plus the true Q value $Q^t(s,a)$.

\begin{proposition}
Assuming successful optimization of \autoref{eq:our_loss}, sufficiently high ensemble variance $\sigma^2$ indicates $E_t \notin E$ (OOD).
\end{proposition}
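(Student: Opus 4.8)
I would establish the statement by contraposition: instead of arguing directly that a large $\sigma^2$ forces OOD, I would show that if $E_t$ is in-distribution then the ensemble variance on $\dataset_t$ is necessarily small, so any variance exceeding the calibrated gate rules ID out. Here ``sufficiently high'' is made precise as $\sigma^2(s,a)>\tau$, with $\tau$ the deployment threshold calibrated from ID-rollout variances, and ``$E_t$ in-distribution'' is the condition $\domainParam_t\in\mathcal{C}$ of \defref{def:ood}. The two workhorses are the bias--variance identity \autoref{eq:distance} and the Lipschitz-in-$\domainParam$ control of the critic fixed point coming from \lemref{lem:bellman-perturb}, together with the training guarantee of \assumref{assump:trained_ensemble}.

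\textbf{Key steps.} First I would read off from \autoref{eq:distance} the one-sided bound $\sigma^2\le\expectation_{\text{ens}}[(Q^t(s,a)-Q^\pi(s,a))^2]$, valid because the bias term $(\mu-Q^t)^2$ is nonnegative; thus the ensemble variance can never exceed the expected squared deviation of the critics from the true target $Q^t=Q^\pi_{\domainParam_t}$. Second, I would assume $\domainParam_t\in\mathcal{C}$ and bound this right-hand side. By \assumref{assump:trained_ensemble} the trained critics agree on nominal-dynamics data, since the diversity term of \autoref{eq:our_loss} is evaluated only on the repulsive set $\dataset'$ and exerts no repulsive pressure in the nominal region; hence each $Q_i\approx Q^\pi_{\domainParam}$ there. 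Moreover, the same perturbation estimate of \lemref{lem:bellman-perturb} that produces \propref{proposition:critic}, combined with the $\gamma$-contraction, yields the matching \emph{upper} bound $\|Q^\pi_{\domainParam_t}-Q^\pi_{\domainParam}\|_\infty\le\frac{\gamma(L_R+L_T\|Q^\pi_{\domainParam_t}\|_{\mathrm{Lip}})}{1-\gamma}\|\domainParam_t-\domainParam\|$, so $\domainParam_t\in\mathcal{C}$ (small $\|\domainParam_t-\domainParam\|$) keeps $Q^t$ close to the nominal fixed point on which the critics already concur. Feeding both facts into the one-sided bound shows $\sigma^2$ on $\dataset_t$ stays below the ID-calibrated level $\tau$. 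Third, taking the contrapositive, $\sigma^2>\tau$ is incompatible with $\domainParam_t\in\mathcal{C}$, so $E_t$ is OOD, which is the claim.

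\textbf{Consistency via \propref{proposition:critic}.} I would then check that the reverse direction is coherent and not vacuous: on genuinely OOD targets, \propref{proposition:critic} forces a strictly positive, proportional shift of the true fixed point $Q^\pi_{\domainParam_t}$, while the diversity term of \autoref{eq:our_loss} was trained precisely to spread the critics on such parameter-shifted (repulsive) data. Through \autoref{eq:distance} this simultaneously inflates the bias $(\mu-Q^t)^2$ and the variance $\sigma^2$, so a large measured variance reflects genuine distributional shift rather than a modeling artifact.

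\textbf{Main obstacle.} The delicate step is the second one: upgrading ``the loss was minimized on the sampled nominal transitions'' to ``$\sigma^2<\tau$ uniformly over the ID region reachable by $\dataset_t$.'' This requires a generalization/smoothness hypothesis, namely that $\sigma^2(s,a)$ varies continuously (ideally Lipschitz) in both $(s,a)$ and $\domainParam$, so that the agreement enforced on the training support propagates across the whole coverage set $\mathcal{C}$, and that the empirical calibration of $\tau$ cleanly separates the ID and OOD variance regimes. Absent such separation the implication weakens from deterministic to high-probability, which is exactly the $95\%$-CI reading used in the experiments; the honest conclusion is therefore that high variance indicates OOD up to the confidence level at which $\tau$ was calibrated.
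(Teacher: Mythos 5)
Your proposal is correct and follows the same underlying strategy as the paper's proof: a contraposition through the bias--variance identity \autoref{eq:distance}, arguing that in-distribution targets force low ensemble variance because the training objective \autoref{eq:our_loss} enforces critic agreement on nominal data. The difference is one of rigor rather than route. The paper's own proof stops at the qualitative claim that ``for ID environments the ensemble variance will be small due to the training objective'' and that OOD environments leave the members unconverged to a common prediction; it never quantifies why an ID target's true value function is close to the nominal fixed point the critics agree on. You close exactly that hole by importing the operator-perturbation machinery (\lemref{lem:bellman-perturb}, i.e.\ the $\tfrac{\gamma}{1-\gamma}$ upper bound of \autoref{thm:value-error}) to control $\|Q^\pi_{\domainParam_t}-Q^\pi_{\domainParam}\|_\infty$, and by extracting the one-sided bound $\sigma^2\le\expectation_{\text{ens}}[(Q^t-Q^\pi)^2]$ from \autoref{eq:distance}, which the paper states but does not actually use in this direction. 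Two caveats, both of which reflect looseness already present in the paper rather than errors on your part: first, \defref{def:ood} defines ID membership via the KL-coverage set $\mathcal{C}$, not via a small parameter distance $\|\domainParam_t-\domainParam\|$, so your reading ``$\domainParam_t\in\mathcal{C}$ implies small $\|\domainParam_t-\domainParam\|$'' is an interpretive bridge the paper also crosses silently (it conflates $\mathcal{C}$, $\domainParamSet$, and $E$ throughout). Second, your identification of the main obstacle, that agreement enforced on the sampled support of $\dataset$ must generalize across all of $\mathcal{C}$ and that the conclusion is really only high-probability relative to the calibration of $\tau$, is the genuine weak point of the result; the paper's proof assumes this implicitly, so your version is the more honest statement of what is actually established.
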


\begin{proof}
From \autoref{eq:distance}, we can see that the variance of the critics directly contributes to the total expected squared distance. 
Our training objective in \autoref{eq:our_loss} specifically encourages agreement among Q-functions for ID samples from $E$ and diversity for OOD samples from $E'$. 

For ID environments where $E_t \in E$, the ensemble variance will be small due to the training objective. Conversely, when the environment is significantly different from the training distribution ($E_t \notin E$), the ensemble members will not have converged to a common prediction, resulting in high variance. Therefore, when we observe sufficiently high variance in ensemble predictions, it strongly indicates that $E_t \notin E$.
\end{proof}

\begin{proposition}
Under \autoref{eq:our_loss} and assuming sufficient domain randomization coverage, low ensemble variance $\sigma^2$ for samples from $E_t$ is necessary but not sufficient to conclude $E_t \in E$ (ID) without full coverage of the domain parameter space.
\end{proposition}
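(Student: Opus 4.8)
The plan is to prove the two halves separately: that low variance is \emph{necessary} for an in-distribution target, and that it is \emph{not sufficient} to certify ID without full parameter coverage.

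For necessity, I would argue the contrapositive of the preceding proposition together with the structure of \autoref{eq:our_loss}. The nominal term drives every critic $Q_i$ toward the same Bellman fixed point on the support of $E$; hence for any $\domainParam_t\in\mathcal{C}$, whose induced empirical density deviates from $p_{\domainParamSet}$ by at most $\tau_{\mathrm{KL}}$, the resulting state--action pairs fall in this well-fit region and the ensemble members agree, giving small $\sigma^2$. So $E_t\in E$ forces low variance, making it a necessary condition. I expect this half to be routine, following directly from the training objective and the definition of $\mathcal{C}$.

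The substantive half is insufficiency, for which the cleanest route exploits the decomposition \autoref{eq:distance}, $\expectation_{\text{ens}}[(Q^t-Q^\pi)^2]=\sigma^2+(\mu-Q^t)^2$. The key observation is that $\sigma^2$ controls only the disagreement component of the error and is blind to the bias $(\mu-Q^t)^2$: the ensemble can be confidently wrong. Because the diversity term $\mathcal{L}^{\text{RL}}_{\text{div}}$ injects variance only on the repulsive support $\dataset'$, any OOD $\domainParam_t\notin\mathcal{C}$ whose induced $(s,a)$ lie outside both $\dataset$ and $\dataset'$ receives no variance-inflating signal. Since coverage is not full, such an uncovered region is nonempty, and I would exhibit $\domainParam_t$ there for which a shared inductive bias (common architecture, smooth extrapolation) collapses all critics to one extrapolated value, so $\sigma^2\approx0$. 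By \propref{proposition:critic} the true fixed point $Q^t=Q^\pi_{\domainParam_t}$ nonetheless differs from the nominal value that $\mu$ approximates, so $(\mu-Q^t)^2>0$: the target is genuinely OOD yet registers low variance, so low $\sigma^2$ alone cannot conclude $E_t\in E$.

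The main obstacle is making the insufficiency construction airtight rather than heuristic: I must guarantee the uncovered region actually admits a low-variance point, not merely that the loss fails to forbid one. I would close this with an explicit witness --- for instance a linear-in-features critic whose feature map coincides across all ensemble members outside the covered support (e.g.\ features that saturate identically), forcing $\sigma^2=0$ there while \propref{proposition:critic} keeps the bias positive. Alternatively, a measure-theoretic argument suffices: since $\dataset\cup\dataset'$ covers only a bounded subset of $\domainParamSet$, its complement carries parameters on which $\sigma^2$ is not a function of any training gradient and can be made arbitrarily small, while the Bellman fixed point still shifts. Either way, the moral is that necessity comes from the nominal term, whereas insufficiency is precisely the failure of the diversity term to reach regions the randomization never visited.
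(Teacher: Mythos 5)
Your proposal is correct and follows essentially the same route as the paper: necessity comes from the nominal term of \autoref{eq:our_loss} forcing ensemble agreement on ID data, and insufficiency comes from the possibility that the ensemble coincidentally agrees on parameter regions never visited by the randomization. The paper's own treatment of the insufficiency half is a single sentence asserting that such coincidental agreement can occur, so your elaboration via the decomposition \autoref{eq:distance} (variance is blind to the bias $(\mu - Q^t)^2$) and your proposed explicit witness go beyond what the paper actually proves, but they are consistent with it and strengthen rather than alter the argument.
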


\begin{proof}
Our ensemble begins with random initialization.
Assuming the objective function (\autoref{eq:our_loss}) is optimized, the learned Q-functions converge to similar solutions for ID data:
\begin{equation}
    Q_i(s,a) \approx \mu(s,a), \forall i, (s, a) \sim D
\end{equation}
From the definition of variance in \autoref{eq:ensemble-var}, this implies small ensemble variance for ID data.

Given target-domain data $D_t$ from environment $E_t$, if the ensemble exhibits low variance for $(s,a) \sim D_t$, this suggests the ensemble members agree on these samples. Since our training specifically promotes disagreement for OOD samples, consistent agreement indicates that $E_t$ exhibits dynamics similar to those encountered during training. Specifically, if the domain randomization process successfully covered the target domain parameter $\domainParam_t \in \domainParamSet$, the low variance suggests $E_t \in E$.

However, low variance alone cannot guarantee that $E_t \in E$ without full coverage of the domain parameter space, as there could exist regions where the ensemble coincidentally agrees despite lacking training exposure.
\end{proof}

\subsection{Continuous MDPs with Function Approximation}
\textbf{Why function-approximation theory?}
The preceding propositions were stated for tabular MDPs.  Real
robotics tasks have continuous states, continuous actions, and neural
critics.  In this subsection we show that our variance-based OOD
certificate \emph{still holds} under standard smoothness and
approximation assumptions, and that the replay-buffer weighting
reduces the sim-to-real bias by at least a constant factor.

Thus, to extend our variance-based OOD detection framework to realistic settings, we analyze continuous-state, continuous-action MDPs under function approximation.  We assume the following smoothness and approximation conditions, which allow us to bound how perturbations in the domain parameters translate into errors in value estimation and policy performance.

Even with neural critics in continuous domains, high ensemble variance lower-bounds the true value-gap,
justifying the deployment gatekeeper used by \ourMethod.

\begin{assumption}[Smooth Continuous MDP]\label{assump:continuous-mdp}
Let $\mathcal{M}_\domainParam=(S,A,T_\domainParam,R,\gamma)$ be an MDP with $S\subset\R^n$ and $A\subset\R^m$ compact.  Let $F$ be a class of real-valued functions on $S\times A$.  There exist constants $L_R,R_{\max},L_T,\epsilon_F\ge0$ such that:
\begin{enumerate}
  \item The reward $R(s,a)$ is $L_R$\textit{–Lipschitz} in $(s,a)$, i.e.
    \begin{equation}
      |R(s,a) - R(s',a')|
      \le L_R\bigl\|(s,a)-(s',a')\bigr\|,
      \quad
      \forall (s,a),(s',a')\in S\times A,
    \end{equation}
    and satisfies $\lvert R(s,a)\rvert\le R_{\max}.$
  \item For any two domain parameters $\domainParam_1,\domainParam_2\in\domainParamSet$,
    \begin{equation}
      W_1\bigl(T_{\domainParam_1}(\cdot\mid s,a),\,T_{\domainParam_2}(\cdot\mid s,a)\bigr)
      \le L_T\|\domainParam_1-\domainParam_2\|,
      \quad
      \forall (s,a)\in S\times A,
    \end{equation}
    where $W_1$ is the 1-Wasserstein distance. We acknowledge that real-world contact dynamics can violate global Lipschitz continuity. In such cases one must either restrict to local Lipschitz regions or accept that our bounds hold only piecewise.
  \item The critic class $F$ can uniformly approximate the true Q-function up to error $\epsilon_F$.  Defining the projection operator
  \begin{equation}
    \Pi_F[f] = \arg\min_{g\in F}\|f - g\|_\infty,
  \end{equation}
  we require
  \begin{equation}
    \|Q^\pi_\domainParam - \Pi_F[Q^\pi_\domainParam]\|_\infty \le \epsilon_F,
  \end{equation}
  meaning there exists some $\tilde Q\in F$ with $\|Q^\pi_\domainParam - \tilde Q\|_\infty\le\epsilon_F$.  
\end{enumerate}
\end{assumption}

\assumref{assump:continuous-mdp} ensures both the Bellman operator's smooth dependence on $\domainParam$ and that our function class $F$ can approximate the true value functions to within $\epsilon_F$.

\subsubsection{Bellman Operator Perturbation}

To connect our ensemble-variance OOD criterion with value-function robustness, we first quantify how small changes in the domain parameters $\domainParam$ perturb the Bellman operator.  This will let us bound the error in the learned Q-functions when moving from one domain to another.

\begin{lemma}[Operator Perturbation]\label{lem:bellman-perturb}
Under \assumref{assump:continuous-mdp}, for any $Q\in F$ and any two domain parameters $\domainParam_1,\domainParam_2$, with $\mathcal{T}^\pi$ being policy dependent Bellman operator,
\begin{equation}
  \bigl\|\mathcal{T}^\pi_{\domainParam_1}Q - \mathcal{T}^\pi_{\domainParam_2}Q\bigr\|_\infty
  \le\gamma\bigl(L_R + L_T\|Q\|_{\mathrm{Lip}}\bigr)\|\domainParam_1-\domainParam_2\|.
\end{equation}
where:
\begin{equation}
    \|Q\|_{\mathrm{Lip}} := \sup_{(s,a)\neq(s',a')}\frac{|Q(s,a)-Q(s',a')|}{\|(s,a)-(s',a')\|}
\end{equation}
\end{lemma}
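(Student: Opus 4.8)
The plan is to reduce the operator difference, pointwise in $(s,a)$, to a single expectation taken against the \emph{difference of two transition kernels}, and then to invoke the Kantorovich--Rubinstein dual characterization of the $1$-Wasserstein distance. Fix an arbitrary $(s,a)\in S\times A$ and expand both backups. Writing $f_Q(s'):=\expectation_{a'\sim\pi(\cdot\mid s')}[Q(s',a')]$ for the one-step value backup induced by $Q$ and $\pi$, the only $\domainParam$-dependent part of the operator is the discounted future term, so
\[
(\mathcal{T}^\pi_{\domainParam_1}Q)(s,a)-(\mathcal{T}^\pi_{\domainParam_2}Q)(s,a)
= \gamma\Bigl(\expectation_{s'\sim T_{\domainParam_1}(\cdot\mid s,a)}[f_Q(s')]-\expectation_{s'\sim T_{\domainParam_2}(\cdot\mid s,a)}[f_Q(s')]\Bigr) + \Delta_R,
\]
where $\Delta_R$ collects any reward contribution: it vanishes when $R$ is shared across domains, and is bounded by the $\gamma L_R\|\domainParam_1-\domainParam_2\|$ slot when the reward is itself domain-dependent and $L_R$-Lipschitz in $\domainParam$. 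Isolating the transition-driven term is the first move.

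Next I would control the two ingredients of the transition term separately. For the integrand I claim $\|f_Q\|_{\mathrm{Lip}}\le\|Q\|_{\mathrm{Lip}}$: averaging $Q(s',\cdot)$ over $a'\sim\pi(\cdot\mid s')$ cannot increase the Lipschitz modulus in $s'$ (for a state-independent or Lipschitz policy this is immediate, and otherwise the policy's regularity is absorbed into the same constant). For the kernels, condition (2) of \assumref{assump:continuous-mdp} gives, uniformly in $(s,a)$,
\[
W_1\bigl(T_{\domainParam_1}(\cdot\mid s,a),\,T_{\domainParam_2}(\cdot\mid s,a)\bigr)\le L_T\|\domainParam_1-\domainParam_2\|.
\]

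The decisive step is Kantorovich--Rubinstein duality: for any $L$-Lipschitz $f$ and probability measures $\mu,\nu$ one has $|\expectation_\mu[f]-\expectation_\nu[f]|\le L\,W_1(\mu,\nu)$. Applying this with $f=f_Q$, $L=\|Q\|_{\mathrm{Lip}}$, and the two transition measures, and then inserting the Wasserstein bound, gives
\[
\bigl|(\mathcal{T}^\pi_{\domainParam_1}Q)(s,a)-(\mathcal{T}^\pi_{\domainParam_2}Q)(s,a)\bigr|
\le \gamma\bigl(L_R + L_T\|Q\|_{\mathrm{Lip}}\bigr)\|\domainParam_1-\domainParam_2\|.
\]
Since the right-hand side is independent of $(s,a)$, taking the supremum over $S\times A$ upgrades this to the $\|\cdot\|_\infty$ bound claimed. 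The main obstacle I anticipate is not the contraction machinery but the Lipschitz-transfer step $\|f_Q\|_{\mathrm{Lip}}\le\|Q\|_{\mathrm{Lip}}$ combined with the correct packaging of the reward slack into the factor $L_R+L_T\|Q\|_{\mathrm{Lip}}$: one must either assume a Lipschitz (or state-independent) policy so the action-average is clean, or fall back on the local-Lipschitz caveat flagged after \assumref{assump:continuous-mdp} in the contact-dynamics regime, where the global estimate only holds piecewise.
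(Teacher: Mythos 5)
Your proposal is correct and follows essentially the same route as the paper's own proof: isolate the (vanishing) reward difference, bound the transition term via Kantorovich--Rubinstein duality against the $W_1$ bound of \assumref{assump:continuous-mdp}, and take a supremum over $(s,a)$. You are in fact slightly more careful than the paper on two points it glosses over --- that the policy-averaged backup $f_Q$ inherits $\|Q\|_{\mathrm{Lip}}$ (the paper writes $Q(s',\pi(s'))$ as if $\pi$ were deterministic), and that the $\gamma L_R$ term in the stated bound is pure slack when the reward is shared across domains --- but the argument is the same.
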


\begin{proof} 
For each $(s,a)$, apply
\begin{align}
  \bigl|\mathcal{T}^\pi_{\domainParam_1}Q(s,a)-\mathcal{T}^\pi_{\domainParam_2}Q(s,a)\bigr|
  \le |R(s,a)-R(s,a)| 
  + \gamma\bigl|\E_{s'\sim T_{\domainParam_1}}[Q(s',\pi(s'))]
    - \E_{s'\sim T_{\domainParam_2}}[Q(s',\pi(s'))]\bigr|.
\end{align}
The first term vanishes.  For the second, use the 1-Wasserstein bound on $T_\domainParam$ and the Lipschitz seminorm of $Q$, then take a supremum.  
\end{proof}
This lemma quantifies how smoothly the Bellman operator moves under small MDP perturbations, a key ingredient for all subsequent bounds.

\subsubsection{Simulation-to-Real Value Error}

Next, we bound the difference between the true value function in the target domain $\domainParam_t$ and our learned Q-function in the nominal domain $\domainParam$.  By Banach's fixed-point theorem:

\begin{theorem}[Value Function Error]\label{thm:value-error}
Under \assumref{assump:continuous-mdp}, let $Q^\pi_\domainParam$ and $Q^\pi_{\domainParam_t}$ be the unique fixed points of $\mathcal{T}^\pi_{\domainParam}$ and $\mathcal{T}^\pi_{\domainParam_t}$.  Then
\begin{equation}
  \bigl\|Q^\pi_\domainParam - Q^\pi_{\domainParam_t}\bigr\|_\infty
  \le\frac{\gamma}{1-\gamma}\bigl(L_R + L_T\|Q^\pi_{\domainParam_t}\|_{\mathrm{Lip}}\bigr)\|\domainParam-\domainParam_t\|.
\end{equation}
\end{theorem}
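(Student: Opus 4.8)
The plan is to treat $Q^\pi_\domainParam$ and $Q^\pi_{\domainParam_t}$ as the unique fixed points of the two $\gamma$-contractive Bellman operators $\mathcal{T}^\pi_{\domainParam}$ and $\mathcal{T}^\pi_{\domainParam_t}$ and to combine the contraction property with the operator-perturbation bound of \lemref{lem:bellman-perturb} through a single triangle-inequality split. First I would rewrite the quantity of interest using the fixed-point identities $Q^\pi_\domainParam=\mathcal{T}^\pi_{\domainParam}Q^\pi_\domainParam$ and $Q^\pi_{\domainParam_t}=\mathcal{T}^\pi_{\domainParam_t}Q^\pi_{\domainParam_t}$, so that $\|Q^\pi_\domainParam-Q^\pi_{\domainParam_t}\|_\infty=\|\mathcal{T}^\pi_{\domainParam}Q^\pi_\domainParam-\mathcal{T}^\pi_{\domainParam_t}Q^\pi_{\domainParam_t}\|_\infty$.

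Next I would insert the intermediate term $\mathcal{T}^\pi_{\domainParam}Q^\pi_{\domainParam_t}$ and apply the triangle inequality, producing two pieces:
\begin{equation*}
  \|Q^\pi_\domainParam-Q^\pi_{\domainParam_t}\|_\infty
  \le \|\mathcal{T}^\pi_{\domainParam}Q^\pi_\domainParam-\mathcal{T}^\pi_{\domainParam}Q^\pi_{\domainParam_t}\|_\infty
  + \|\mathcal{T}^\pi_{\domainParam}Q^\pi_{\domainParam_t}-\mathcal{T}^\pi_{\domainParam_t}Q^\pi_{\domainParam_t}\|_\infty .
\end{equation*}
The first piece I bound using that $\mathcal{T}^\pi_{\domainParam}$ is a $\gamma$-contraction in $\|\cdot\|_\infty$, giving $\gamma\|Q^\pi_\domainParam-Q^\pi_{\domainParam_t}\|_\infty$. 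The second piece is exactly the setting of \lemref{lem:bellman-perturb} applied at the fixed argument $Q=Q^\pi_{\domainParam_t}$, which bounds it by $\gamma\bigl(L_R+L_T\|Q^\pi_{\domainParam_t}\|_{\mathrm{Lip}}\bigr)\|\domainParam-\domainParam_t\|$. Collecting the $\gamma\|Q^\pi_\domainParam-Q^\pi_{\domainParam_t}\|_\infty$ term on the left and dividing by $1-\gamma>0$ (valid since $\gamma\in[0,1)$) yields the claimed $\tfrac{\gamma}{1-\gamma}$ prefactor.

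The only delicate point, and the one I would flag as the crux, is the choice of grouping in the triangle inequality: the perturbation lemma must be applied to the operator pair acting on $Q^\pi_{\domainParam_t}$, since that is precisely why $\|Q^\pi_{\domainParam_t}\|_{\mathrm{Lip}}$ (rather than $\|Q^\pi_{\domainParam}\|_{\mathrm{Lip}}$) appears in the final bound. Had I instead inserted $\mathcal{T}^\pi_{\domainParam_t}Q^\pi_\domainParam$, the same argument would go through but produce the Lipschitz seminorm of $Q^\pi_{\domainParam}$; the statement as written fixes the grouping above. Beyond this bookkeeping the argument is routine, relying only on the contraction property of the Bellman operator (Banach fixed point) and the already-established \lemref{lem:bellman-perturb}, both available under \assumref{assump:continuous-mdp}.
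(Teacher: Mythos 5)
Your proposal is correct and follows essentially the same route as the paper's own proof sketch: both use the fixed-point identities, split via the intermediate term $\mathcal{T}^\pi_{\domainParam}Q^\pi_{\domainParam_t}$, bound one piece by the $\gamma$-contraction and the other by the operator-perturbation lemma, and rearrange to obtain the $\tfrac{\gamma}{1-\gamma}$ prefactor. Your remark about which Lipschitz seminorm appears is a correct and useful clarification of a detail the paper leaves implicit.
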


\textit{Proof Sketch.}  Since each Bellman operator is a $\gamma$-contraction in $\|\cdot\|_\infty$, \lemref{lem:bellman-perturb} implies
\begin{align}
  \|Q^\pi_\domainParam - Q^\pi_{\domainParam_t}\|_\infty
  =\bigl\|\mathcal{T}^\pi_\domainParam Q^\pi_\domainParam - \mathcal{T}^\pi_{\domainParam_t} Q^\pi_{\domainParam_t}\bigr\|_\infty
  \le\gamma\|Q^\pi_\domainParam - Q^\pi_{\domainParam_t}\|_\infty
    + \gamma\bigl(L_R + L_T\|Q^\pi_{\domainParam_t}\|_{\mathrm{Lip}}\bigr)\|\domainParam-\domainParam_t\|,
\end{align}
and rearrange.

We now have a worst-case, linear-in-$\|\domainParam-\domainParam_t\|$ bound on the value-function gap in any continuous MDP.

\subsubsection{Bias Reduction via Variance-Weighted Bellman Operator}

To pull our learned operator closer to the true target operator, we introduce uncertainty-weighted sampling.  Intuitively, high-variance (OOD) samples should receive larger weights in the Bellman update, so that our operator focuses more on states likely to match the target domain.

\begin{assumption}[Variance–Distance Correlation]\label{assump:covariance}
There exists $\rho>0$ such that for any subset $S\subset\dataset\cup\dataset'$,
\begin{equation}
  \Cov_{(s,a)\sim S}\!\bigl(\sigma^2(s,a),\|\domainParam_{s,a}-\domainParam_t\|\bigr)
  \ge\rho.
\end{equation}
\end{assumption}
Each tuple $(s,a,\domainParam_{s,a})\in\dataset\cup\dataset'$ carries its generating domain parameter $\domainParam_{s,a}\in\domainParamSet$.

Let
\begin{equation}
  w(s,a) =
  \begin{cases}
    \sigma^2(s,a), & (s,a)\in\dataset,\\
    \sigma^{-2}(s,a), & (s,a)\in\dataset',
  \end{cases}
  \quad
  \bar w = \tfrac{1}{|\dataset\cup\dataset'|}\sum_{(s,a)} w(s,a).
\end{equation}
Note that \assumref{assump:covariance} can fail if all critics jointly converge to the same biased estimate (e.g. at the mean of the randomization range), so robustness to slight negative covariance should be analyzed in future extensions.

\begin{proposition}[Bias Reduction]\label{prop:cov-bias-reduction}
Under \assumref{assump:continuous-mdp} and \assumref{assump:covariance}, define the weighted Bellman operator
\begin{equation}
  (\mathcal{T}_w Q)(s,a)
  = \frac{1}{\sum_{(s,a)}w(s,a)}
    \sum_{(s,a)\in\dataset\cup\dataset'}
    w(s,a)\Bigl[R(s,a)
    + \gamma\E_{s'\sim T_{\domainParam_{s,a}}}\bigl[Q(s',\pi(s'))\bigr]\Bigr].
\end{equation}
Then
\begin{align}
  \|\mathcal{T}_w Q - \mathcal{T}^\pi_{\domainParam_t}Q\|_\infty
  &\le \gamma(L_R + L_T\|Q\|_{\mathrm{Lip}})
    \E_w\bigl[\|\domainParam_{s,a}-\domainParam_t\|\bigr],\\
  \E_w\bigl[\|\domainParam_{s,a}-\domainParam_t\|\bigr]
  &\le \E_{\mathrm{unif}}\bigl[\|\domainParam_{s,a}-\domainParam_t\|\bigr] - \frac{\rho}{\bar w},
\end{align}
so the weighted update strictly reduces the average parameter-distance by at least $\rho/\bar w$.
\end{proposition}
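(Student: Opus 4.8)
The plan is to prove the two displayed inequalities independently: the first is a contraction/Lipschitz estimate that follows almost immediately from \lemref{lem:bellman-perturb}, and the second is a purely algebraic reweighting identity that converts the correlation hypothesis of \assumref{assump:covariance} into a strict reduction of the mean parameter-distance.

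For the first inequality, I would begin by normalizing the weights, setting $\tilde w(s,a) = w(s,a)/\sum_{(s',a')} w(s',a')$, so that the $\tilde w$ form a probability vector and $\mathcal{T}_w Q$ is a convex combination of the single-parameter backups $\mathcal{T}^\pi_{\domainParam_{s,a}} Q$. Since the target backup $\mathcal{T}^\pi_{\domainParam_t} Q$ is constant in the summation index and $\sum_{(s,a)} \tilde w(s,a) = 1$, the difference telescopes into $\mathcal{T}_w Q - \mathcal{T}^\pi_{\domainParam_t} Q = \sum_{(s,a)} \tilde w(s,a)\bigl(\mathcal{T}^\pi_{\domainParam_{s,a}} Q - \mathcal{T}^\pi_{\domainParam_t} Q\bigr)$. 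Taking $\|\cdot\|_\infty$, pushing it inside the convex sum by the triangle inequality, and bounding each term with \lemref{lem:bellman-perturb} gives $\|\mathcal{T}_w Q - \mathcal{T}^\pi_{\domainParam_t} Q\|_\infty \le \gamma(L_R + L_T\|Q\|_{\mathrm{Lip}})\sum_{(s,a)} \tilde w(s,a)\,\|\domainParam_{s,a}-\domainParam_t\|$, and the weighted sum is exactly $\E_w[\|\domainParam_{s,a}-\domainParam_t\|]$. This step is routine once the convex-combination structure is made explicit.

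For the second inequality, I would abbreviate $d(s,a) = \|\domainParam_{s,a}-\domainParam_t\|$ and invoke the standard weighted-mean decomposition $\E_w[d] = \E_{\mathrm{unif}}[d] + \Cov_{\mathrm{unif}}(w, d)/\bar w$, obtained by writing $\E_w[d] = \E_{\mathrm{unif}}[w\,d]/\bar w$ and expanding $\E_{\mathrm{unif}}[w\,d] = \Cov_{\mathrm{unif}}(w,d) + \bar w\,\E_{\mathrm{unif}}[d]$. Substituting a covariance bound of the form $\Cov_{\mathrm{unif}}(w,d) \le -\rho$ then yields the claimed $-\rho/\bar w$ shift, with strictness following from $\rho>0$.

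The main obstacle is reconciling the sign of the covariance with the piecewise definition of the weight. \assumref{assump:covariance} is stated as a \emph{positive} correlation $\Cov(\sigma^2, d) \ge \rho$ between the raw ensemble variance and the parameter-distance, whereas the decomposition above requires the effective weight $w$ — which equals $\sigma^2$ on $\dataset$ but the inverse $\sigma^{-2}$ on $\dataset'$ — to be \emph{negatively} correlated with $d$, i.e. $\Cov_{\mathrm{unif}}(w,d) \le -\rho$. Making this rigorous requires either restating the hypothesis directly in terms of $w$, or invoking monotonicity of $t\mapsto t^{-1}$ to transfer the positive $\sigma^2$–$d$ correlation on the repulsive set into a negative $\sigma^{-2}$–$d$ correlation, while separately accounting for the nominal-set contribution and combining the two so the aggregate covariance is at most $-\rho$. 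I would also explicitly handle the degenerate case flagged after \assumref{assump:covariance}, where all critics collapse to a common biased estimate and the covariance can vanish or turn slightly negative; there the promised strict reduction can fail, so the statement must be read as holding precisely when the correlation hypothesis is genuinely satisfied.
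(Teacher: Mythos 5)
Your plan follows exactly the route the paper intends: its own ``proof'' of \propref{prop:cov-bias-reduction} is only a two-sentence sketch (apply \lemref{lem:bellman-perturb} inside the weighted sum; use \assumref{assump:covariance} and the definition of $\bar w$), and your first paragraph is precisely the missing detail for the first inequality --- normalizing $w$ to a probability vector, writing $\mathcal{T}_w Q$ as a convex combination of the operators $\mathcal{T}^\pi_{\domainParam_{s,a}}$, and bounding each summand with the perturbation lemma. Your decomposition $\E_w[d]=\E_{\mathrm{unif}}[d]+\Cov_{\mathrm{unif}}(w,d)/\bar w$ is likewise the intended mechanism for the second inequality.

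The sign issue you flag is a genuine gap, and it is a gap in the paper's statement rather than in your argument: \assumref{assump:covariance} lower-bounds $\Cov(\sigma^2(s,a),\|\domainParam_{s,a}-\domainParam_t\|)$ by $+\rho$ on \emph{every} subset, so on the nominal set $\dataset$, where $w=\sigma^2$, the hypothesis pushes $\Cov_{\mathrm{unif}}(w,d)$ toward $+\rho$, the wrong direction for the claimed conclusion; only on $\dataset'$, where $w=\sigma^{-2}$, does monotonicity even heuristically give a negative correlation, and that transfer controls the sign but not the magnitude $\rho$ (covariance is not equivariant under $t\mapsto t^{-1}$). The paper's sketch silently assumes what amounts to $\Cov_{\mathrm{unif}}(w,d)\le-\rho$, which is the clean way to restate the hypothesis so that your second paragraph closes; your explicit handling of the degenerate collapsed-ensemble case matches the caveat the paper itself places after \assumref{assump:covariance}. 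In short: your proof is the paper's argument done more carefully, and the obstruction you identify is real and unresolved in the original.
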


\textit{Proof Sketch.}  The first inequality follows by applying \lemref{lem:bellman-perturb} inside the weighted sum; the second uses \assumref{assump:covariance} and the definition of $\bar w$.

Weighting updates by critic variance actively \textit{shrinks} the average dynamics gap by at least $\rho/\bar w$, reducing sim-to-real bias.

\subsubsection{Convergence of Weighted Fitted-Q Iteration}

Finally, we combine the above bounds with standard finite-sample analysis to guarantee that our weighted fitted-Q iteration recovers a near-optimal policy in the target domain.

\begin{theorem}[Approximate Convergence]\label{thm:approx-convergence}
Under \assumref{assump:continuous-mdp} and \assumref{assump:covariance}, suppose we run $K$ iterations of weighted fitted-Q with $N$ regression samples per iteration.  Then for any $\eta>0$ and confidence $1-\delta$,
\begin{equation}
  V^{\pi_K}_{\domainParam_t}(s)
  \ge\max_\pi V^\pi_{\domainParam_t}(s)
    - \eta - \mathcal{O}(\epsilon_F),
  \quad\forall s\in S,
\end{equation}
provided
\begin{equation}
  N \ge\frac{C_1}{\eta^2}\log\frac{1}{\delta},
  \qquad
  K \ge\mathcal{O}\!\Bigl(\frac{1}{(1-\gamma)^2\eta}\Bigr).
\end{equation}
We note that providing at least order-of-magnitude estimates for $C_1$ and its explicit dependence on the state and action dimensions would greatly aid in assessing practical sample requirements; we defer this detailed characterization to future work.
\end{theorem}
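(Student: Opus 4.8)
The plan is to treat the weighted fitted-Q iteration as an instance of approximate value iteration and invoke the standard error-propagation argument for $\gamma$-contractions, bounding separately the three error sources specific to our setting. First I would write the exact per-iteration recursion $Q_{k+1} = \widehat{\Pi}_F[\mathcal{T}_w Q_k]$, where $\widehat{\Pi}_F$ is the empirical least-squares projection onto $F$ from $N$ fresh samples, and decompose the single-step residual $Q_{k+1} - \mathcal{T}^\pi_{\domainParam_t} Q_k$ into (i) a domain-mismatch bias $\mathcal{T}_w Q_k - \mathcal{T}^\pi_{\domainParam_t} Q_k$, (ii) a function-approximation term $\Pi_F[\mathcal{T}_w Q_k] - \mathcal{T}_w Q_k$, and (iii) a finite-sample estimation term $\widehat{\Pi}_F - \Pi_F$. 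Term (i) is controlled by \propref{prop:cov-bias-reduction} (hence ultimately by \lemref{lem:bellman-perturb}), term (ii) is at most $\epsilon_F$ by part 3 of \assumref{assump:continuous-mdp}, and term (iii) is the only genuinely statistical piece.

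Next I would bound the estimation term (iii). Since all iterates are uniformly bounded by $R_{\max}/(1-\gamma)$ and the regression targets inherit this bound, a uniform-deviation (covering-number / pseudo-dimension) argument over $F$ gives, with probability $1-\delta'$, an error of order $\sqrt{\tfrac{1}{N}\log\tfrac{1}{\delta'}}$; forcing this below $\eta$ yields $N \ge C_1 \eta^{-2}\log(1/\delta')$. Drawing fresh samples at each of the $K$ rounds and taking a union bound replaces $\delta'$ by $\delta/K$, which only inflates the logarithm. Summing (i)--(iii) produces a uniform per-iteration Bellman residual $\varepsilon = O(\epsilon_F) + O(\eta)$ plus the persistent mismatch bias of \propref{prop:cov-bias-reduction}.

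I would then apply the classical approximate-value-iteration estimate: because $\mathcal{T}^\pi_{\domainParam_t}$ is a $\gamma$-contraction in $\|\cdot\|_\infty$, a per-step residual $\varepsilon$ propagates to $\|Q_K - Q^*_{\domainParam_t}\|_\infty \le \gamma^K \|Q_0 - Q^*_{\domainParam_t}\|_\infty + \varepsilon/(1-\gamma)$, and the geometric initial-error term is killed by taking $K$ large (since $\|Q_0 - Q^*_{\domainParam_t}\|_\infty \le 2R_{\max}/(1-\gamma)$, the stated $K \ge \Omega(1/((1-\gamma)^2\eta))$ is more than enough, contraction in fact only requiring a logarithmic count). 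Finally the standard greedy-policy lemma $\|V^* - V^{\pi_K}\|_\infty \le \tfrac{2\gamma}{1-\gamma}\|Q_K - Q^*_{\domainParam_t}\|_\infty$ converts the $Q$-error into the advertised value suboptimality $\eta + O(\epsilon_F)$, after rescaling the $\eta$-budget and folding the mismatch bias into $\eta$.

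The main obstacle is term (iii) together with the implicit capacity control it requires: \assumref{assump:continuous-mdp} supplies only an approximation guarantee $\epsilon_F$, not a complexity bound on $F$, so making $C_1$ (and its dependence on $n,m$) explicit needs an added hypothesis such as finite pseudo-dimension — precisely the characterization the theorem defers. A secondary difficulty is that the weights $w(s,a)=\sigma^{\pm2}(s,a)$ are themselves estimated from the ensemble, so the empirical $\mathcal{T}_w$ differs from its population counterpart; I would need to argue this extra perturbation is lower-order (controlled by the same concentration used for (iii)) or absorb it into $\eta$, and likewise treat the non-vanishing domain-mismatch bias of \propref{prop:cov-bias-reduction} as part of the $\eta$ budget rather than a quantity that shrinks with $K$.
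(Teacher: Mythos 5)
Your proposal takes essentially the same route as the paper, whose entire proof consists of the remark that the result ``follows standard fitted Q-iteration analysis with the additional bias reduction from Proposition~\ref{prop:cov-bias-reduction},'' with the explicit form of $C_1$ deferred to future work; your three-way decomposition into domain-mismatch bias, projection error $\epsilon_F$, and finite-sample estimation error, followed by $\gamma$-contraction error propagation and the greedy-policy conversion, is precisely the standard argument being invoked. Your elaboration is in fact more complete than the paper's, and the two obstacles you flag --- the absence of any capacity assumption on $F$ needed to instantiate $C_1$, and the non-vanishing domain-mismatch bias from Proposition~\ref{prop:cov-bias-reduction} that must be absorbed into the $\eta$ budget --- are genuine gaps in the theorem as stated, only the first of which the paper acknowledges.
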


This final theorem ties it all together: with polynomial samples and iterations, weighted fitted Q-iteration recovers an $\eta$-optimal policy in the new domain.

The proof follows standard fitted Q-iteration analysis (e.g. \cite{levine2020offline}) with the additional bias reduction from \propref{prop:cov-bias-reduction}. Deriving explicit expressions for the constant $C_1$ in terms of $L_R,R_{\max},L_T,\epsilon_F$ is left to future work.

\subsubsection{Implementation Notes}
\textbf{Practical Hyperparameter Selection}.
In practice, one first performs small pilot studies on held-out ID and OOD environments to estimate the empirical variances $\sigma_{\mathrm{in}}^2$ and $\sigma_{\mathrm{out}}^2$.  To guarantee a maximum false-alarm rate $\alpha$ on ID data, the ensemble size $N$ is chosen to grow on the order of $\log(1/\alpha)$, and the threshold is set to
\begin{equation}
  \tau = \sigma_{\mathrm{in}}^2 + z_{1-\alpha}\frac{\sigma_{\mathrm{in}}}{\sqrt{N}},
\end{equation}
where $z_{1-\alpha}$ is the $(1-\alpha)$-quantile of the standard normal.  With this rule, ID variances fall below $\tau$ with probability at least $1-\alpha$, while OOD variances typically exceed $\tau$, all using a modest ensemble size.

\subsection{Theory Summary}

In essence, our analysis demonstrates that the variance across an ensemble of Q-functions provides a rigorous, quantitative signal for detecting when a new environment lies outside the training distribution.  \lemref{lem:bellman-perturb} shows that small changes in the MDP parameters $\domainParam$ induce proportionally bounded perturbations in the Bellman operator, and \autoref{thm:value-error} propagates these to a linear bound on the resulting value-function error.  Because our training objective explicitly drives low variance on ID samples and permits high variance on repulsive (OOD) ones, observing large ensemble disagreement directly corresponds to large value-error under the true dynamics, thus certifying OOD.

\subsection{Algorithmic Implications}

Building on this diagnostic, we define a variance-weighted Bellman operator (\propref{prop:cov-bias-reduction}) that automatically emphasizes samples whose critic variance correlates with proximity to the target domain.  Under a mild covariance assumption, this weighting provably reduces the average ``distance'' between sampled and true domain parameters, and when embedded within fitted-Q iteration yields an $\eta$-optimal policy in the target environment with only polynomial dependence on the accuracy and confidence parameters (\autoref{thm:approx-convergence}).  Together, these results justify both our use of ensemble variance as an OOD detector and our weighting scheme as a robust means of bias reduction in sim-to-real and domain-randomization scenarios.

\clearpage

\section{Limitations}\label{app:limits}

\begin{table}[t]
\centering
\caption{Computational Overhead of CQL vs. CQL+\ourMethod, on a single Nvidia 4090 GPU.}
\begin{tabular}{lcc}
\hline
\textbf{Metric} & \textbf{Baseline} & \textbf{\ourMethod} \\
\hline
Memory Usage & $\sim$2 GB & $\sim$4 GB \\
Memory Increase & \multicolumn{2}{c}{50\%} \\
\hline
Training Time per Iteration & $\sim$0.5 seconds & $\sim$0.55 seconds \\
Computational Time Increase & \multicolumn{2}{c}{10\%} \\
\hline
Diversity Loss Calculation & Not Applicable & Required \\
\hline
\end{tabular}
\label{tab:computational_overhead}
\end{table}

While \ourMethod demonstrates promising results in enhancing safety in RL, several key limitations warrant discussion.

\autoref{tab:computational_overhead} highlights the computational overhead associated with \ourMethod. The approach introduces moderate resource demands, with a $10\%$ increase in training time and $50\%$ higher memory usage compared to baseline methods which depends on the size of the repulsive dataset. However, these costs are offset by the potential for improved policy safety and robustness, particularly in detecting and adapting to OOD scenarios.

A primary limitation of \ourMethod is the sequential randomization of a single hyperparameter during iterative adaptation. While this controlled exploration aids stability, it may fail to capture the complex interplay between multiple environmental parameters. Future work could explore simultaneous multi-parameter randomization to better simulate real-world uncertainties.

The method also relies on manually defined parameter ranges for randomization, determined using domain expertise. This reliance may limit generalizability across diverse tasks and environments. Developing an automated mechanism to adaptively determine parameter ranges could enhance scalability and reduce human intervention.

Another challenge is the dependence on a proxy dataset $\dataset_t$ from the target environment, which critically influences uncertainty estimation and policy refinement. If this dataset is incomplete or unrepresentative, it can lead to suboptimal adaptations or overconfident policies. Techniques to ensure dataset quality and representativeness will be crucial for robust performance.

Finally, while \ourMethod has demonstrated effectiveness on MuJoCo benchmark tasks, its applicability to more complex, high-dimensional, real-world scenarios remains an open question. Extensive validation across diverse robotic and control domains will be necessary to establish its broader relevance and effectiveness.

\clearpage

\section{Experiment Setup}\label{app:exp_setup}

\subsection{Hyperparameters and Network Architectures}\label{app:hyperparams}
As mentioned in \autoref{sec:exp}, our implementations of baselines and \ourMethod are based on Clean Offline Reinforcement Learning (CORL)\footnote{\href{https://github.com/corl-team/CORL}{github.com/corl-team/CORL}} \citep{tarasov2022corl}. CORL is an Offline RL library that offers concise, high-quality single-file implementations of state-of-the-art algorithms. The results produced using CORL can serve as a benchmark for D4RL tasks, eliminating the need to re-implement or fine-tune existing algorithm hyperparameters. Thus, without tuning any hyperparameter, we use the already provided ones for our experiments, for either baselines or \ourMethod. Following, we present the hyperparameters used in our experiments and the network architectures for baselines.

\clearpage

\subsubsection{AWAC}
\begin{table}[ht]
\caption{AWAC Hyperparameters.}\label{table:awac_params}
\centering
\begin{tabular}{cll}
\toprule
& Hyperparameter & Value \\
\midrule
\multirow{3}{*}{\shortstack{AWAC\\\citep{nair2020awac}}} & Scaling of the advantage estimates & $0.33$ \\
& \makecell[l]{Upper limit on the exponentiated\\advantage weights} & $100$ \\
\midrule
\multirow{5}{*}{Common}
& Discount factor~$\y$      & $0.99$ \\
& Replay buffer capacity    & $2$M \\
& Mini-batch size           & $256$ \\
& Target update rate~$\tau$ & $5 \times 10^{-3}$ \\
& Policy update frequency   & Every $2$ updates \\
\midrule
\multirow{2}{*}{Optimizer} 
& (Shared) Optimizer        & Adam~\citep{DBLP:journals/corr/KingmaB14} \\
& (Shared) Learning rate    & $3 \times 10^{-4}$ \\
\bottomrule
\end{tabular}
\end{table} 

\begin{tcolorbox}[title={\begin{pseudo}\label{pseudo:AWAC}AWAC Network Details\end{pseudo}}]
\textbf{Critic $Q$ Networks:}

$\triangleright$ AWAC uses 2 critic networks with the same architecture and forward pass.
\begin{verbatim}
l1 = Linear(state_dim + action_dim, 256)
l2 = Linear(256, 256)
l3 = Linear(256, 256)
l4 = Linear(256, 1)
\end{verbatim}

\textbf{Critic $Q$ Forward Pass:}
\begin{verbatim}
input = concatenate([state, action])
x = ReLU(l1(input))
x = ReLU(l2(x))
x = ReLU(l3(x))
value = l4(x)
\end{verbatim}

\vspace{-8pt}
\hrulefill

\textbf{Policy $\pi$ Network (Actor):}
\begin{verbatim}
l1 = Linear(state_dim, 256)
l2 = Linear(256, 256)
l3 = Linear(256, 256)
l4 = Linear(256, action_dim)
\end{verbatim}

\textbf{Policy $\pi$ Forward Pass:}
\begin{verbatim}
x = ReLU(l1(state))
x = ReLU(l2(x))
x = ReLU(l3(x))
mean = l4(x)
log_std = self._log_std.clip(-20, 2)
action_dist = Normal(mean, exp(log_std))
action = action_dist.rsample().clamp(min_action, max_action)
\end{verbatim}
\end{tcolorbox}

\clearpage

\subsubsection{CQL}
\begin{table}[ht]
\caption{CQL Hyperparameters.}\label{table:cql_params}
\centering
\begin{tabular}{cll}
\toprule
& Hyperparameter & Value \\
\midrule
\multirow{3}{*}{\shortstack{CQL\\\citep{NEURIPS2020_0d2b2061}}} & Scaling the CQL penalty & $1$ \\
& Target Action Gap & $-1$ \\
& Temperature Parameter & $1$ \\
\midrule
\multirow{5}{*}{Common}
& Discount factor~$\gamma$      & $0.99$ \\
& Replay buffer capacity    & $2$M \\
& Mini-batch size           & $256$ \\
& Target update rate~$\tau$ & $5 \times 10^{-3}$ \\
& Policy update frequency   & Every $2$ updates \\
\midrule
\multirow{3}{*}{Optimizer} 
& (Shared) Optimizer        & Adam~\citep{DBLP:journals/corr/KingmaB14} \\
& (Shared) Policy learning rate    & $3 \times 10^{-5}$ \\
& (Shared) Critic learning rate    & $3 \times 10^{-4}$ \\
\bottomrule
\end{tabular}
\end{table} 

\begin{tcolorbox}[title={\begin{pseudo}\label{pseudo:CQL}CQL Network Details\end{pseudo}}]
\textbf{Critic $Q$ Networks:}

$\triangleright$ CQL uses 2 critic networks with the same architecture and forward pass.
\begin{verbatim}
l1 = Linear(state_dim + action_dim, 256)
l2 = Linear(256, 256)                
l3 = Linear(256, 1)                              
\end{verbatim}

\textbf{Critic $Q$ Forward Pass:}
\begin{verbatim}
input = concatenate([state, action]) 
x = ReLU(l1(input))                  
x = ReLU(l2(x))                                         
value = l3(x)                        
\end{verbatim}

\vspace{-8pt}
\hrulefill

\textbf{Policy $\pi$ Network (Actor):}
\begin{verbatim}
l1 = Linear(state_dim, 256)          
l2 = Linear(256, 256)                
l3 = Linear(256, 256)                
l4 = Linear(256, 2 * action_dim)     
\end{verbatim}

\textbf{Policy $\pi$ Forward Pass:}
\begin{verbatim}
x = ReLU(l1(state))                 
x = ReLU(l2(x))                     
x = ReLU(l3(x))                     
x = l4(x)                        
mean, log_std = torch.split(x, action_dim, dim=-1)
normal = Normal(mean, std)
action_dist = TransformedDistribution(normal, TanhTransform(cache_size=1))
action = action_dist.rsample()
\end{verbatim}
\end{tcolorbox}

\clearpage

\subsubsection{TD3BC}
\begin{table}[ht]
\caption{TD3BC Hyperparameters.}\label{table:td3bc_params}
\centering
\begin{tabular}{cll}
\toprule
& Hyperparameter & Value \\
\midrule
\multirow{4}{*}{\shortstack{TD3BC\\\citep{fujimoto2021minimalist}}} & Scaling factor ($\alpha$) & $2.5$ \\
& Noise added to the policy's action & $0.2$ \\
& \makecell[l]{Maximum magnitude of noise\\added to actions} & $0.5$ \\
\midrule
\multirow{5}{*}{Common}
& Discount factor~$\gamma$      & $0.99$ \\
& Replay buffer capacity    & $2$M \\
& Mini-batch size           & $256$ \\
& Target update rate~$\tau$ & $5 \times 10^{-3}$ \\
& Policy update frequency   & Every $2$ updates \\
\midrule
\multirow{2}{*}{Optimizer} 
& (Shared) Optimizer        & Adam~\citep{DBLP:journals/corr/KingmaB14} \\
& (Shared) Learning rate    & $3 \times 10^{-4}$ \\
\bottomrule
\end{tabular}
\end{table}

\begin{tcolorbox}[title={\begin{pseudo}\label{pseudo:TD3BC}TD3BC Network Details\end{pseudo}}]
\textbf{Critic $Q$ Networks:}

$\triangleright$ TD3BC uses 2 critic networks with the same architecture and forward pass.
\begin{verbatim}
l1 = Linear(state_dim + action_dim, 256)
l2 = Linear(256, 256)
l3 = Linear(256, 1)                            
\end{verbatim}

\textbf{Critic $Q$ Forward Pass:}
\begin{verbatim}
input = concatenate([state, action])
x = ReLU(l1(input))
x = ReLU(l2(x))
value = l3(x)                     
\end{verbatim}

\vspace{-8pt}
\hrulefill

\textbf{Policy $\pi$ Network (Actor):}
\begin{verbatim}
l1 = Linear(state_dim, 256)
l2 = Linear(256, 256)
l3 = Linear(256, action_dim) 
\end{verbatim}

\textbf{Policy $\pi$ Forward Pass:}
\begin{verbatim}
x = ReLU(l1(state))
x = ReLU(l2(x))
x = Tanh(l3(x))
action = max_action * x
\end{verbatim}
\end{tcolorbox}

\clearpage

\subsubsection{EDAC}
\begin{table}[ht]
\caption{EDAC Hyperparameters.}\label{table:edac_params}
\centering
\begin{tabular}{cll}
\toprule
& Hyperparameter & Value \\
\midrule
\multirow{2}{*}{\shortstack{EDAC\\\citep{NEURIPS2021_3d3d286a}}} 
& Diversity coefficient $\eta$ & $1.0$ \\
& Target entropy & $-\text{action\_dim}$ \\
\midrule
\multirow{5}{*}{Common}
& Discount factor~$\gamma$      & $0.99$ \\
& Replay buffer capacity    & $2$M \\
& Mini-batch size           & $256$ \\
& Target update rate~$\tau$ & $5 \times 10^{-3}$ \\
& Policy update frequency   & Every $2$ updates \\
\midrule
\multirow{2}{*}{Optimizer} 
& (Shared) Optimizer        & Adam~\citep{DBLP:journals/corr/KingmaB14} \\
& (Shared) Learning rate    & $3 \times 10^{-4}$ \\
\bottomrule
\end{tabular}
\end{table} 

\begin{tcolorbox}[title={\begin{pseudo}\label{pseudo:EDAC}EDAC Network Details\end{pseudo}}]
\textbf{Critic $Q$ Networks:}

$\triangleright$ EDAC uses 10 critic networks with the same architecture.
\begin{verbatim}
l1 = Linear(state_dim + action_dim, 256)
l2 = Linear(256, 256)
l3 = Linear(256, 256)     
l4 = Linear(256, 1)     
\end{verbatim}

\textbf{Critic $Q$ Forward Pass:}
\begin{verbatim}
input = concatenate([state, action])
x = ReLU(l1(input))                  
x = ReLU(l2(x))  
x = ReLU(l3(x))  
value = l4(x)    
\end{verbatim}
\vspace{-8pt}
\hrulefill

\textbf{Policy $\pi$ Network (Actor):}
\begin{verbatim}
l1 = Linear(state_dim, 256)
l2 = Linear(256, 256)
l3 = Linear(256, 256) 
mu = Linear(256, action_dim)
log_sigma = Linear(256, action_dim)
\end{verbatim}

\textbf{Policy $\pi$ Forward Pass:}
\begin{verbatim}
x = ReLU(l1(state))
x = ReLU(l2(x))  
hidden = l3(x)  
mu, log_sigma = mu(hidden), log_sigma(hidden)
log_sigma = clip(log_sigma, -5, 2)
policy_dist = Normal(mu, exp(log_sigma))
action = policy_dist.sample()
\end{verbatim}
\end{tcolorbox}

\clearpage

\subsection{Randomized Hyperparameter Scales}\label{app:random_params}
Due to the variety of dynamics and physics of the agent in each environment, the range of randomized hyperparameters should be different. For instance, when initializing the \Ant environment, the initial noise scale is $1 \times 10^{-1}$, while it is $5 \times 10^{-3}$ for \Hopper. Because of this, we consider various ranges to scale the randomized hyperparameters. The values provided in \autoref{table:rand_param_scales} show the scales of the hyperparameters during each iteration of our algorithm.

The selection of these hyperparameter ranges is based on careful consideration of each environment's characteristics and the MuJoCo physics engine's properties \citep{todorov2012mujoco}:
\begin{itemize}
    \item Initial Noise Scale: This hyperparameter affects the initial state variability. For more stable agents like \Ant and \HalfCheetah, we start with a smaller scale ($1 \times 10^{-5}$) and gradually increase it to the default value ($1 \times 10^{-1}$). For less stable agents like \Hopper and \Walker, we begin with an even smaller scale ($5 \times 10^{-7}$) to ensure initial stability.
    \item Friction Coefficient: In our experiments, we specifically modify the friction coefficient between the agent and the ground. MuJoCo utilizes a pyramidal friction cone approximation, where this coefficient directly affects contact dynamics and determines how the agent interacts with its environment. We maintain the default friction values initially, then gradually increase them to challenge the agent's locomotion and stability. For \Ant, we make moderate increases due to its quadrupedal locomotion's reliance on ground contact. For \HalfCheetah, where smooth forward motion is key, smaller increments are used. Higher initial friction coefficients are assigned to \Hopper and \Walker to stabilize their balance, and these are increased substantially to test the agents under more challenging conditions.
    \item Agent's Mass: In MuJoCo, an agent's mass is determined by its constituent geoms. We scale the mass of all geoms uniformly to maintain the agent's mass distribution. For \Ant and \Swimmer, we use larger mass increments (up to 15x) as these agents are inherently more stable due to their multi-limbed or water-based nature. For bipedal agents like \HalfCheetah, \Hopper, and \Walker, we use smaller increments to avoid drastically altering their delicate balance dynamics.
\end{itemize}

These hyperparameter ranges are designed to gradually create an OOD scenario while maintaining feasible locomotion, allowing our \ourMethod approach to adapt progressively to more challenging scenarios.

\begin{table}[t]
\caption{Randomized hyperparameter scales used during our experiments. $\rightarrow$ shows one round of fine-tuning (iteration) using \ourMethod, i.e. $E_0 \rightarrow E_1 \rightarrow \cdots \rightarrow E_n$.}\label{table:rand_param_scales}
\centering
\small
\begin{tabular}{clll}
\toprule
Environment & \makecell[l]{Randomized\\Hyperparameter} & \makecell[l]{Original\\Scale} & Modified Scale \\
\midrule
\multirow{3}{*}{\Ant} 
& Initial Noise Scale & $1 \times 10^{-1}$ & $1 \times 10^{-5} \rightarrow 1 \times 10^{-3} \rightarrow 1 \times 10^{-1}$ \\
& Friction Coefficient & $1$ & $1 \rightarrow 1.25 \rightarrow 1.5 \rightarrow 1.75$ \\
& Agent's Mass & $1\text{x}$ & $1\text{x} \rightarrow 5\text{x} \rightarrow 10\text{x} \rightarrow 15\text{x}$ \\
\midrule
\multirow{3}{*}{\HalfCheetah} 
& Initial Noise Scale & $1 \times 10^{-1}$ & $1 \times 10^{-5} \rightarrow 1 \times 10^{-3} \rightarrow 1 \times 10^{-1}$ \\
& Friction Coefficient & $0.4$ & $0.4 \rightarrow 0.5 \rightarrow 0.6 \rightarrow 0.7$ \\
& Agent's Mass & $1\text{x}$ & $1\text{x} \rightarrow 1.05\text{x} \rightarrow 1.1\text{x} \rightarrow 1.15\text{x}$ \\
\midrule
\multirow{3}{*}{\Hopper} 
& Initial Noise Scale & $5 \times 10^{-3}$ & $5 \times 10^{-7} \rightarrow 5 \times 10^{-5} \rightarrow 5 \times 10^{-3} \rightarrow 5 \times 10^{-1}$ \\
& Friction Coefficient & $2$ & $2 \rightarrow 2.5 \rightarrow 3 \rightarrow 3.5$ \\
& Agent's Mass & $1\text{x}$ & $1\text{x} \rightarrow 1.15\text{x} \rightarrow 1.3\text{x} \rightarrow 1.45\text{x}$ \\
\midrule
\multirow{3}{*}{\Swimmer} 
& Initial Noise Scale & $1 \times 10^{-1}$ & $1 \times 10^{-5} \rightarrow 1 \times 10^{-3} \rightarrow 1 \times 10^{-1}$ \\
& Friction Coefficient & $0.1$ & $0.1 \rightarrow 0.5 \rightarrow 1 \rightarrow 1.5$ \\
& Agent's Mass & $1\text{x}$ & $1\text{x} \rightarrow 5\text{x} \rightarrow 10\text{x} \rightarrow 15\text{x}$ \\
\midrule
\multirow{3}{*}{\Walker} 
& Initial Noise Scale & $5 \times 10^{-3}$ & $5 \times 10^{-7} \rightarrow 5 \times 10^{-5} \rightarrow 5 \times 10^{-3} \rightarrow 5 \times 10^{-1}$ \\
& Friction Coefficient & $0.9$ & $0.9 \rightarrow 2 \rightarrow 3 \rightarrow 4$ \\
& Agent's Mass & $1\text{x}$ & $1\text{x} \rightarrow 1.1\text{x} \rightarrow 1.2\text{x} \rightarrow 1.3\text{x}$ \\
\bottomrule
\end{tabular}
\end{table}

\clearpage

\section{Ablation Study}\label{app:ablation}
This section presents a comprehensive ablation study to evaluate the efficacy and robustness of our proposed \ourMethod method. We conduct a series of experiments across multiple MuJoCo environments: \Ant, \HalfCheetah, \Hopper, \Swimmer, and \Walker. The study applies \ourMethod to three offline RL algorithms: Conservative Q-Learning (CQL), Advantage-Weighted Actor-Critic (AWAC), and TD3BC. We assess the impact of key randomized hyperparameters, namely initial noise scale, friction coefficient, and agent's mass, which are crucial for simulating real-world variability and testing the method's adaptability.

Our evaluation metrics encompass cumulative return during training, OOD detection accuracy, and sample efficiency. We examine these metrics across both the initial offline training phase and the subsequent fine-tuning phases where applicable. To ensure statistical significance and robustness of our findings, each configuration is tested using five random seeds. Throughout the study, we maintain consistency in all hyperparameters, network architectures, and settings, except for those specifically under investigation.

The ablation study is structured to provide insights into several key aspects of \ourMethod. 

First, in \aref{app:uncertainty}, we delve into the OOD detection capabilities of \ourMethod, a crucial component for safe and robust RL deployment. We examine how the method's uncertainty estimation, implemented through diverse critics, enables effective differentiation between ID and OOD samples. This analysis is particularly relevant for assessing the method's potential in real-world applications where encountering novel situations is inevitable.

Then, in \aref{app:performance}, we present a detailed analysis of overall performance, expanding on the results provided in the main paper. This includes cumulative return during training for all five environments, offering a comprehensive view of \ourMethod's impact across diverse locomotion tasks. 

We then investigate the impact of the balancing replay buffer mechanism, a key innovation in \ourMethod, in \aref{app:exp_balancing_rb}. This component is designed to manage the transition between offline and online learning effectively, and we present results showing its influence on learning stability and performance.

Next, in \aref{app:sample_efficiency}, we focus on sample efficiency, a critical factor in the practicality of RL algorithms. By comparing \ourMethod against baselines trained on the full state space from the outset, we demonstrate how our iterative approach to expanding the state space contributes to more efficient learning.

In \aref{app:off_dyn}, we compare \ourMethod's policy performance against state-of-the-art off-dynamics RL baselines, highlighting how explicit uncertainty integration improves robustness under dynamics shift.  

In \aref{app:hyperparams_ablation}, we present a sensitivity analysis over the diversity weight $\lambda$ and the expansion threshold $\delta$, quantifying their individual and joint effects on training stability, convergence speed, and final return.  

Lastly, in \aref{app:anymal_exps}, we detail our ANYmal-D case study, from progressive domain randomization to online OOD monitoring, illustrating practical implementation choices and safety implications on real hardware.

Throughout this ablation study, we aim to provide a nuanced understanding of \ourMethod's components and their contributions to its overall effectiveness. The results and analyses presented here complement and expand upon the findings in the main paper, offering deeper insights into the method's behavior across a range of environments and conditions.




\clearpage


\clearpage
\subsection{OOD Detection}\label{app:uncertainty}
This section provides an in-depth analysis of \ourMethod's OOD detection capabilities across various environments and randomized hyperparameters. We measure the critic variance across $100$ rollouts for both AWAC-based and CQL-based methods, comparing \ourMethod with baseline approaches. In the following figures (\autoref{fig:app_uncertainty_ant_awac} through \autoref{fig:app_uncertainty_walker2d_cql}), each column represents a fine-tuning iteration with an expanded ID range, as detailed in \autoref{app:random_params}. The \colorbox{sbOrange025}{orange line} indicates the $95\%$ CI of critic variances for ID samples, serving as our OOD detection threshold.

\begin{figure}[htbp]
    \centering
    \begin{subfigure}{}
        \centering
        \includegraphics[width=0.8\textwidth]{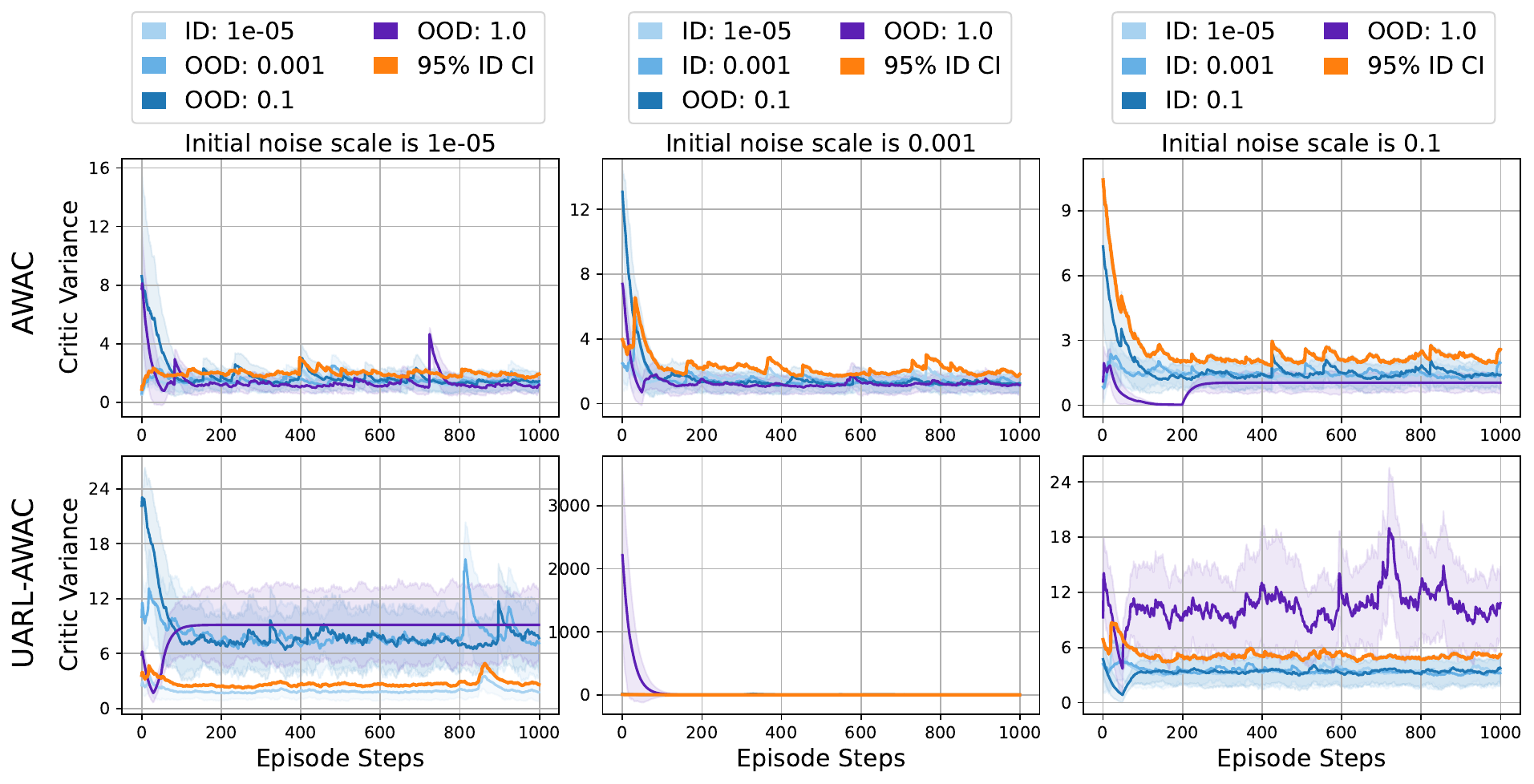}
    \end{subfigure}
    \begin{subfigure}{}
        \centering
        \includegraphics[width=0.8\textwidth]{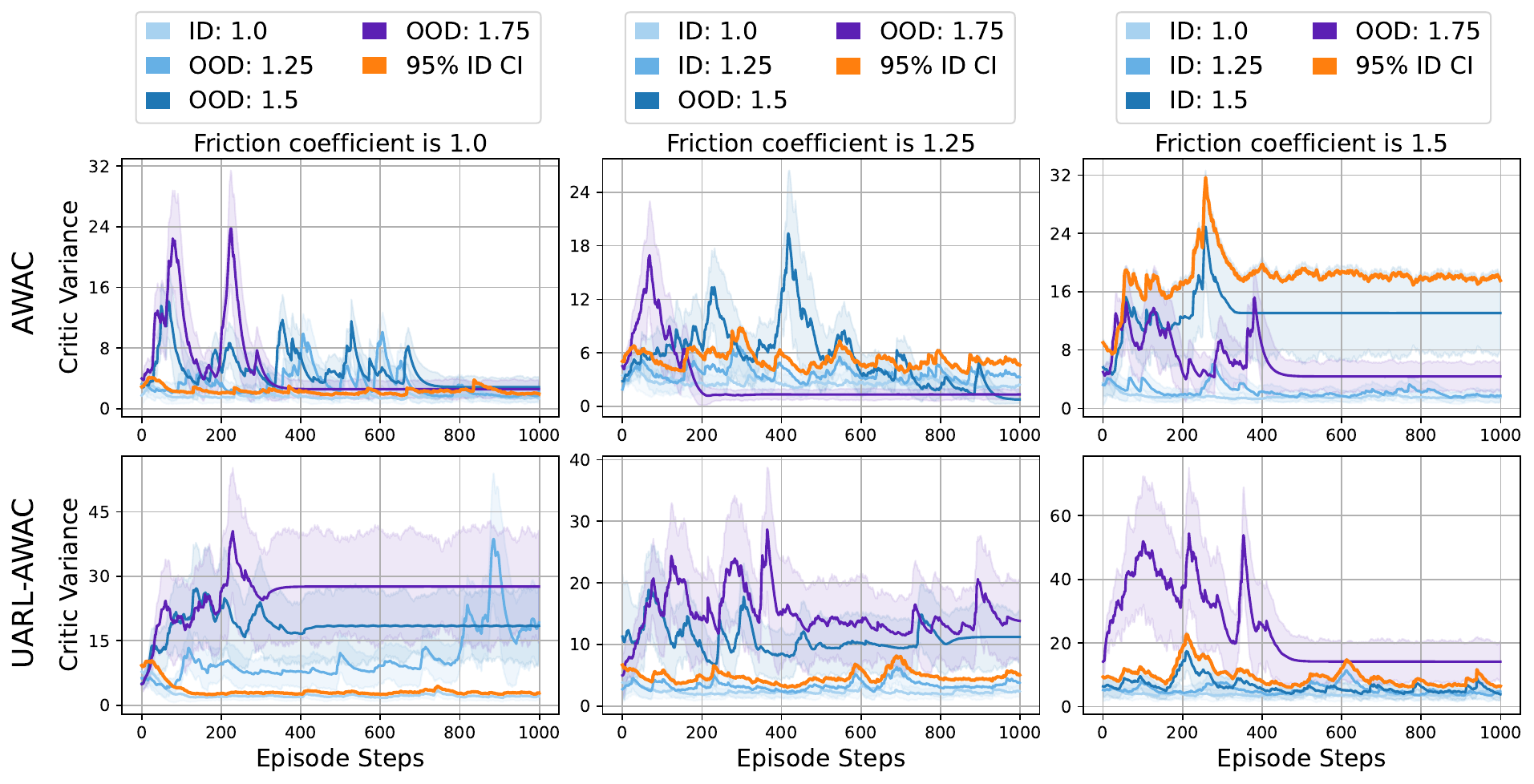}
    \end{subfigure}
    \caption{The OOD detection results for AWAC and \ourMethod-AWAC initial noise scale (top) and friction coefficient (bottom) over the \Ant environment.}
    \label{fig:app_uncertainty_ant_awac}
\end{figure}

\begin{figure}[htbp]
    \centering
    \begin{subfigure}{}
        \centering
        \includegraphics[width=0.8\textwidth]{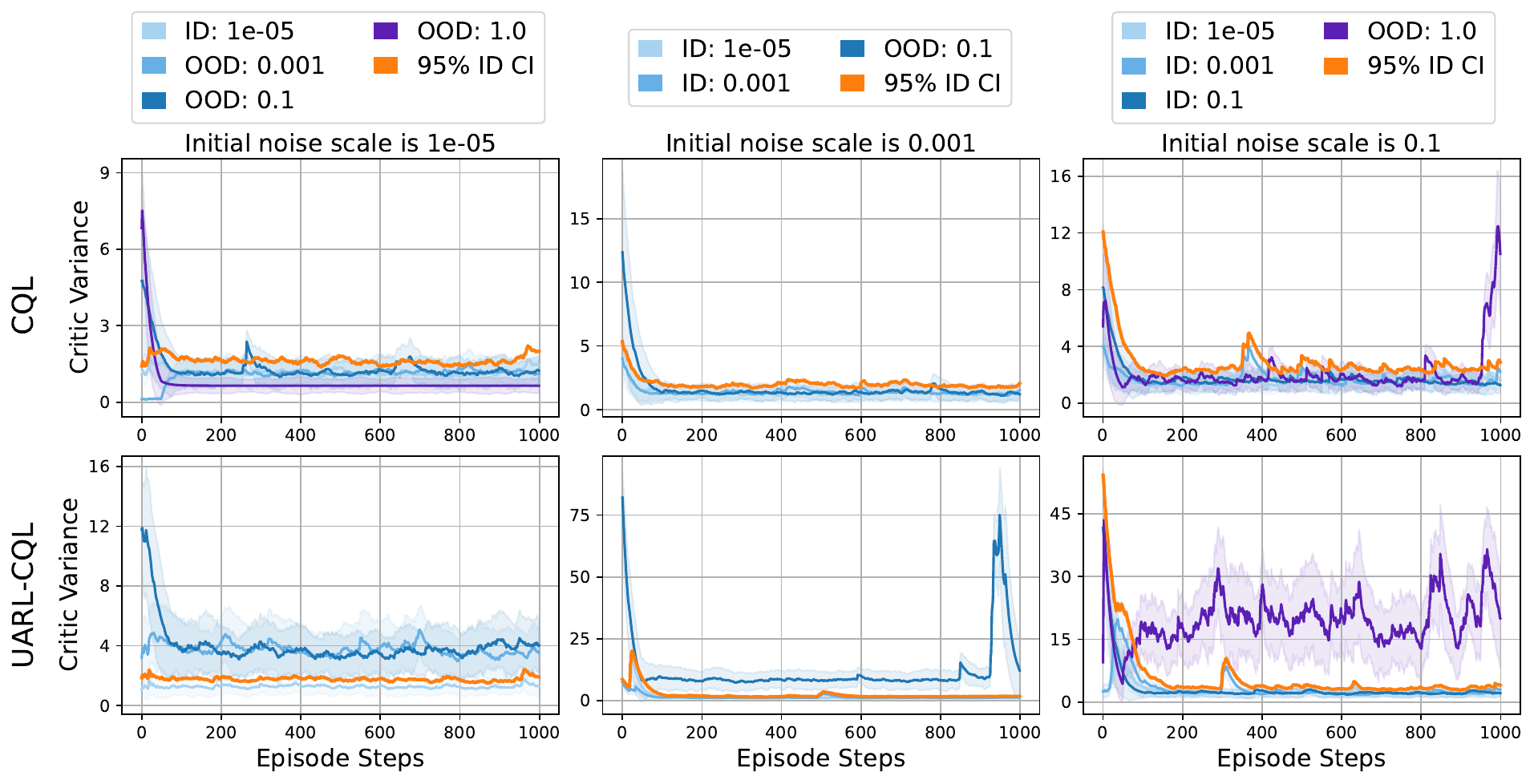}
    \end{subfigure}
    \begin{subfigure}{}
        \centering
        \includegraphics[width=0.8\textwidth]{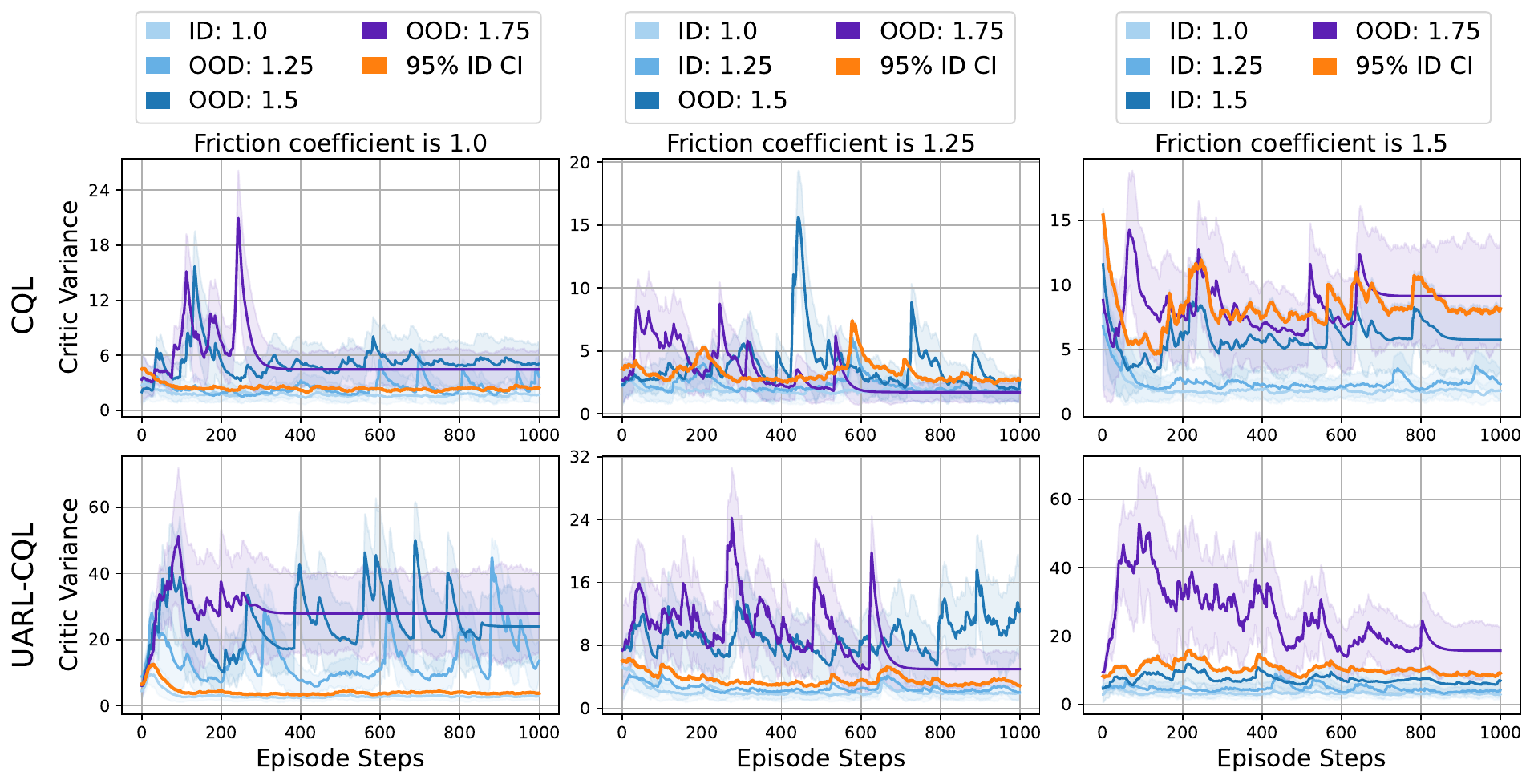}
    \end{subfigure}
    \begin{subfigure}{}
        \centering
        \includegraphics[width=0.8\textwidth]{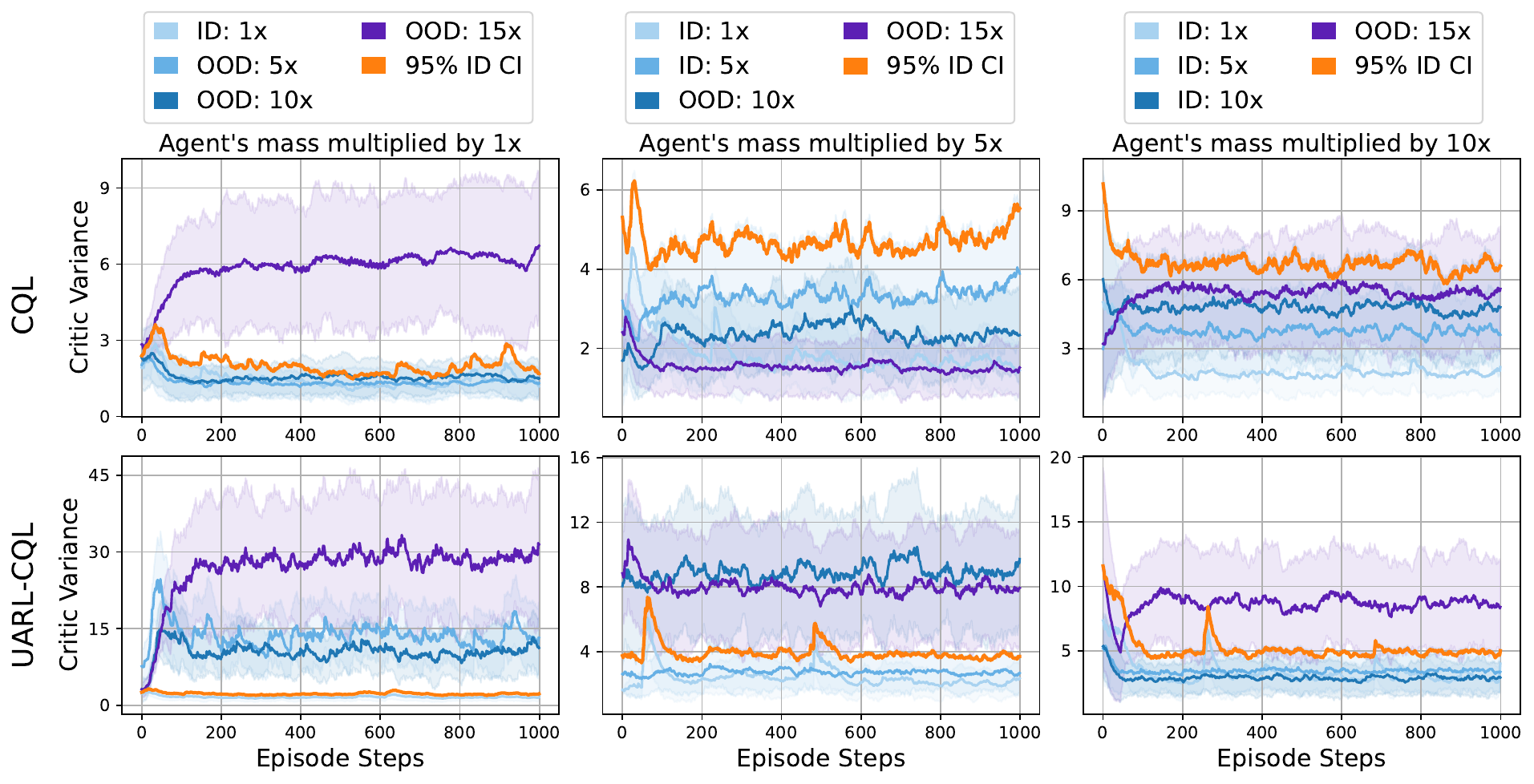}
    \end{subfigure}
    \caption{The OOD detection results for CQL and \ourMethod-CQL initial noise scale (top) and friction coefficient (middle), and agent's mass (bottom) over the \Ant environment.}
    \label{fig:app_uncertainty_ant_cql}
\end{figure}

\begin{figure}[htbp]
    \centering
    \begin{subfigure}{}
        \centering
        \includegraphics[width=0.8\textwidth]{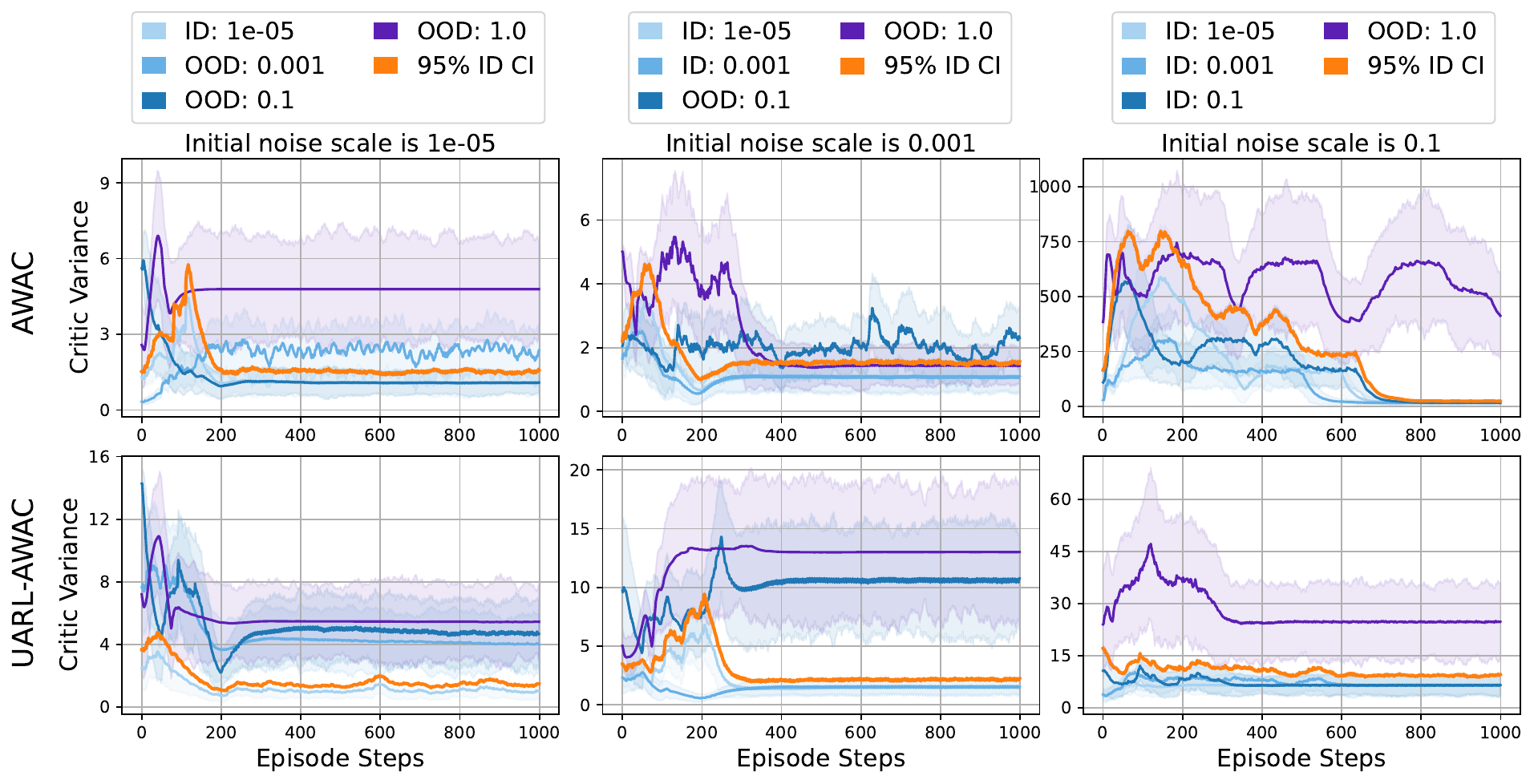}
    \end{subfigure}
    \begin{subfigure}{}
        \centering
        \includegraphics[width=0.8\textwidth]{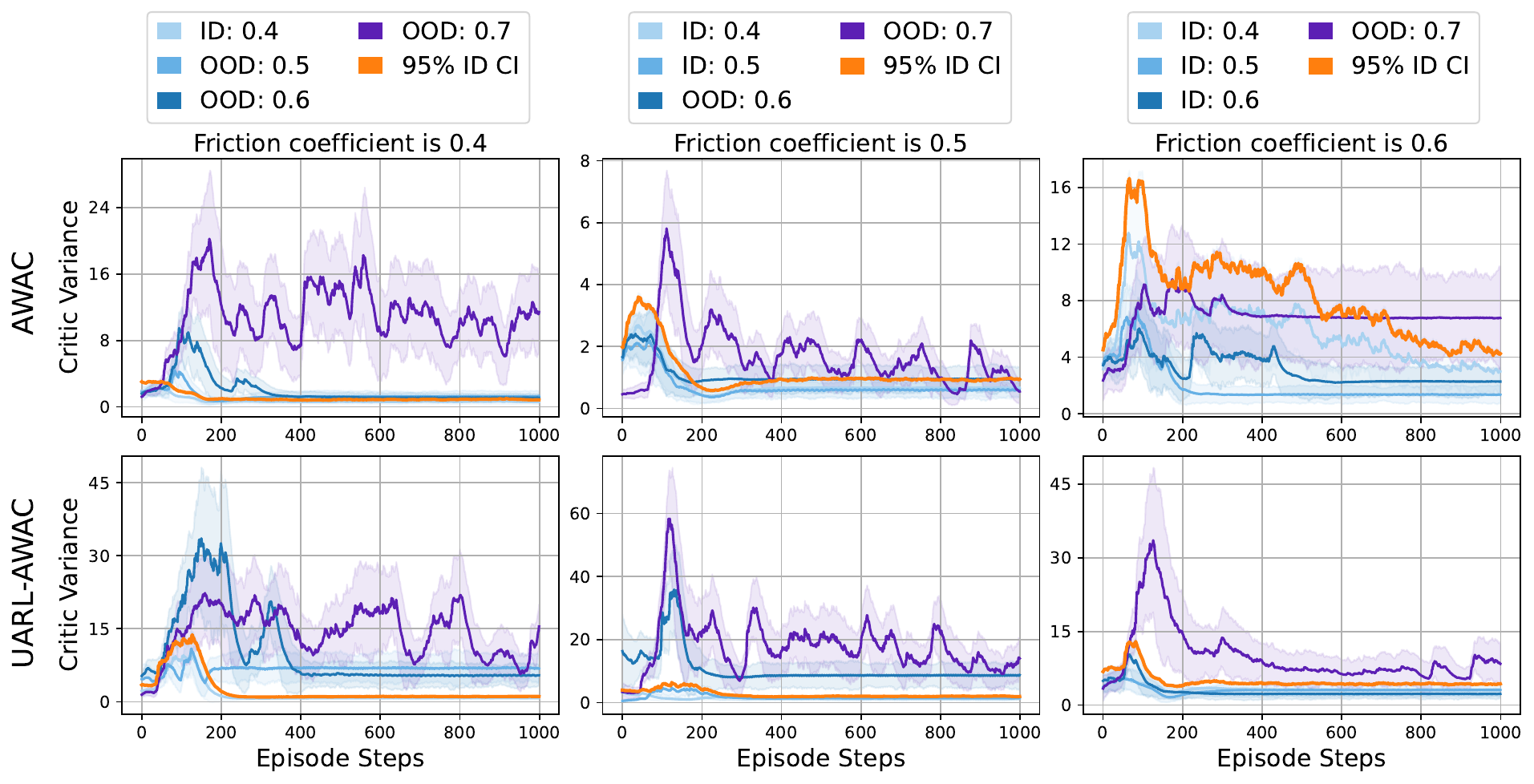}
    \end{subfigure}
    \begin{subfigure}{}
        \centering
        \includegraphics[width=0.8\textwidth]{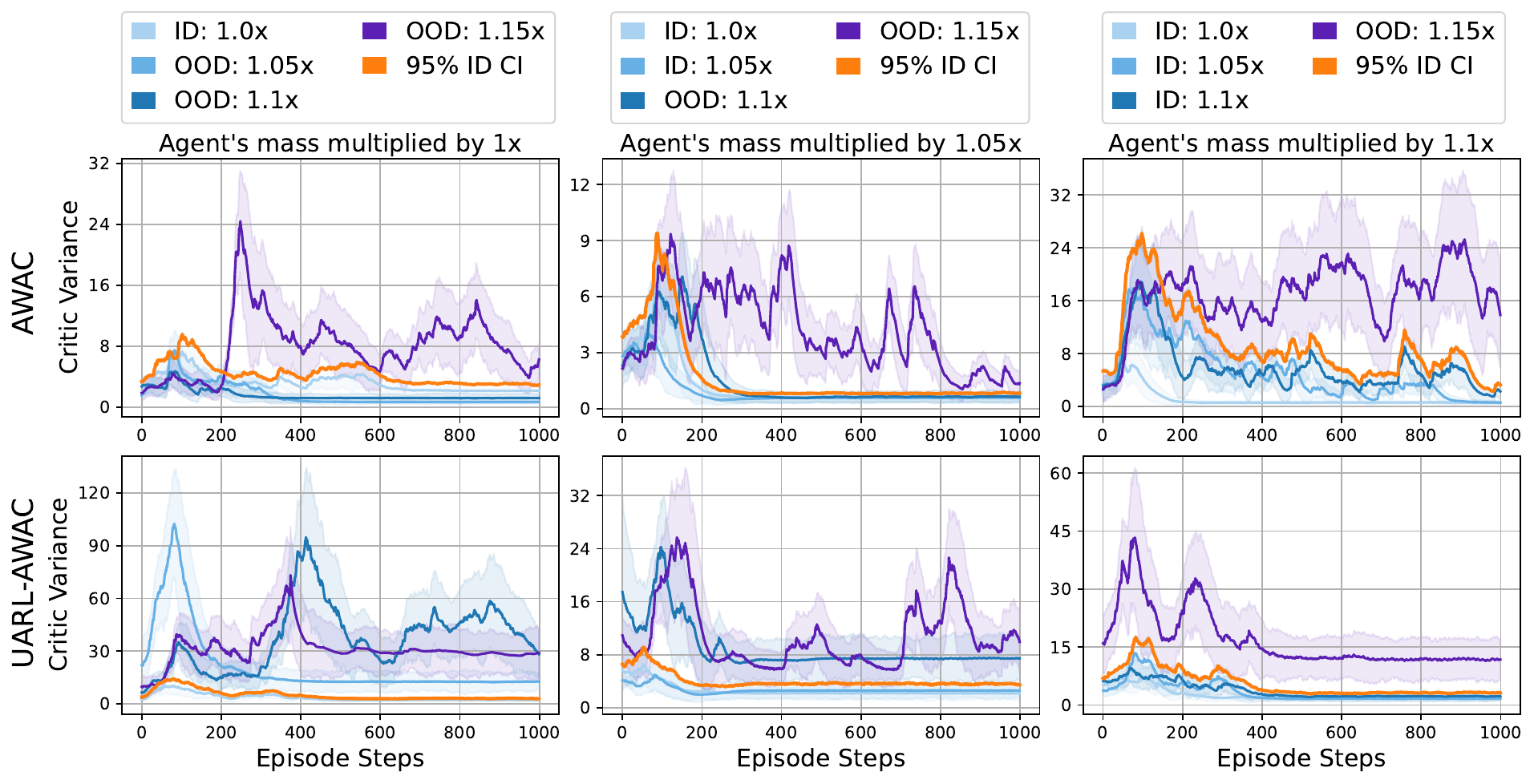}
    \end{subfigure}
    \caption{The OOD detection results for AWAC and \ourMethod-AWAC initial noise scale (top) and friction coefficient (middle), and agent's mass (bottom) over the \HalfCheetah environment.}
    \label{fig:app_uncertainty_halfcheetah_awac}
\end{figure}

\begin{figure}[htbp]
    \centering
    \begin{subfigure}{}
        \centering
        \includegraphics[width=0.8\textwidth]{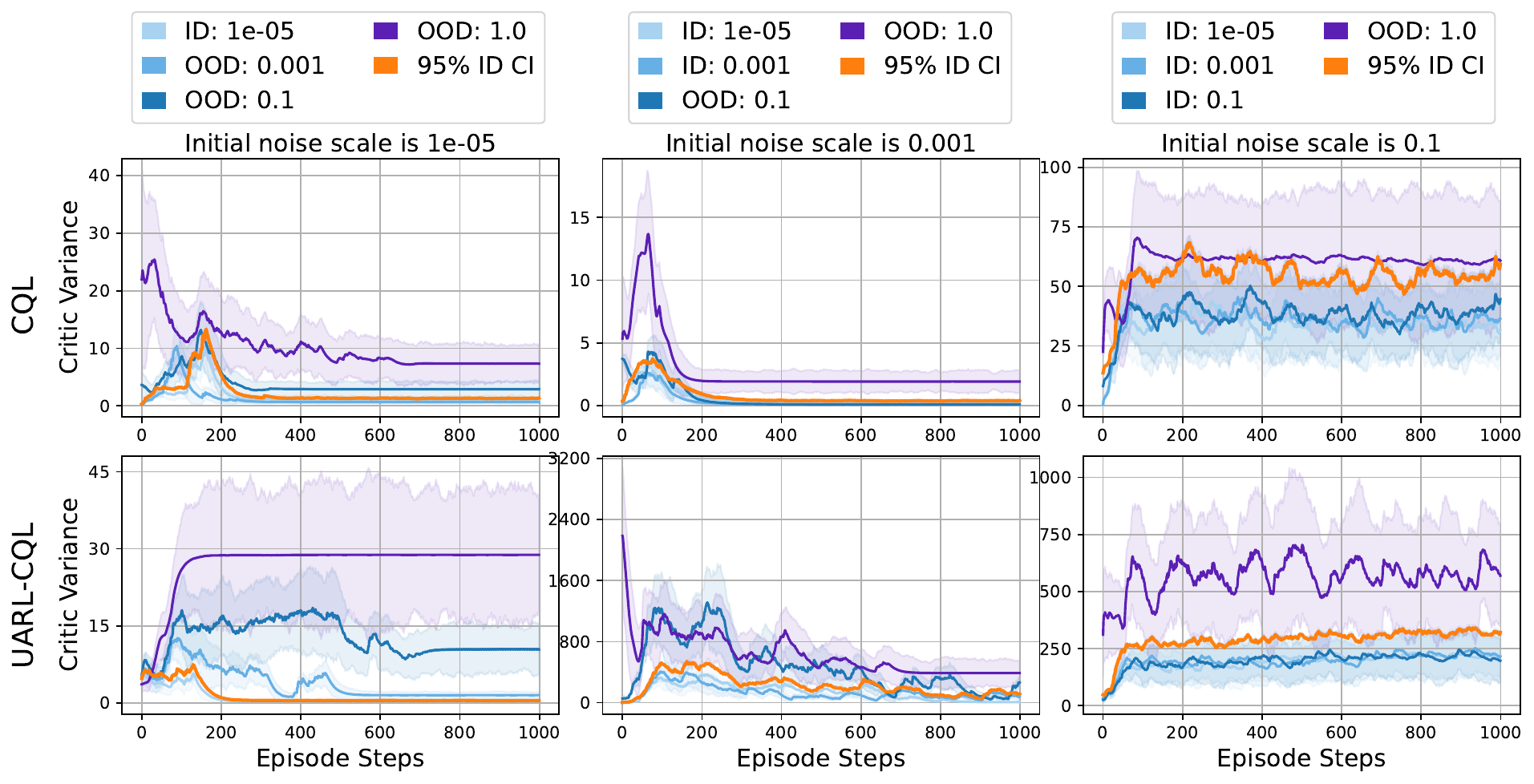}
    \end{subfigure}
    \begin{subfigure}{}
        \centering
        \includegraphics[width=0.8\textwidth]{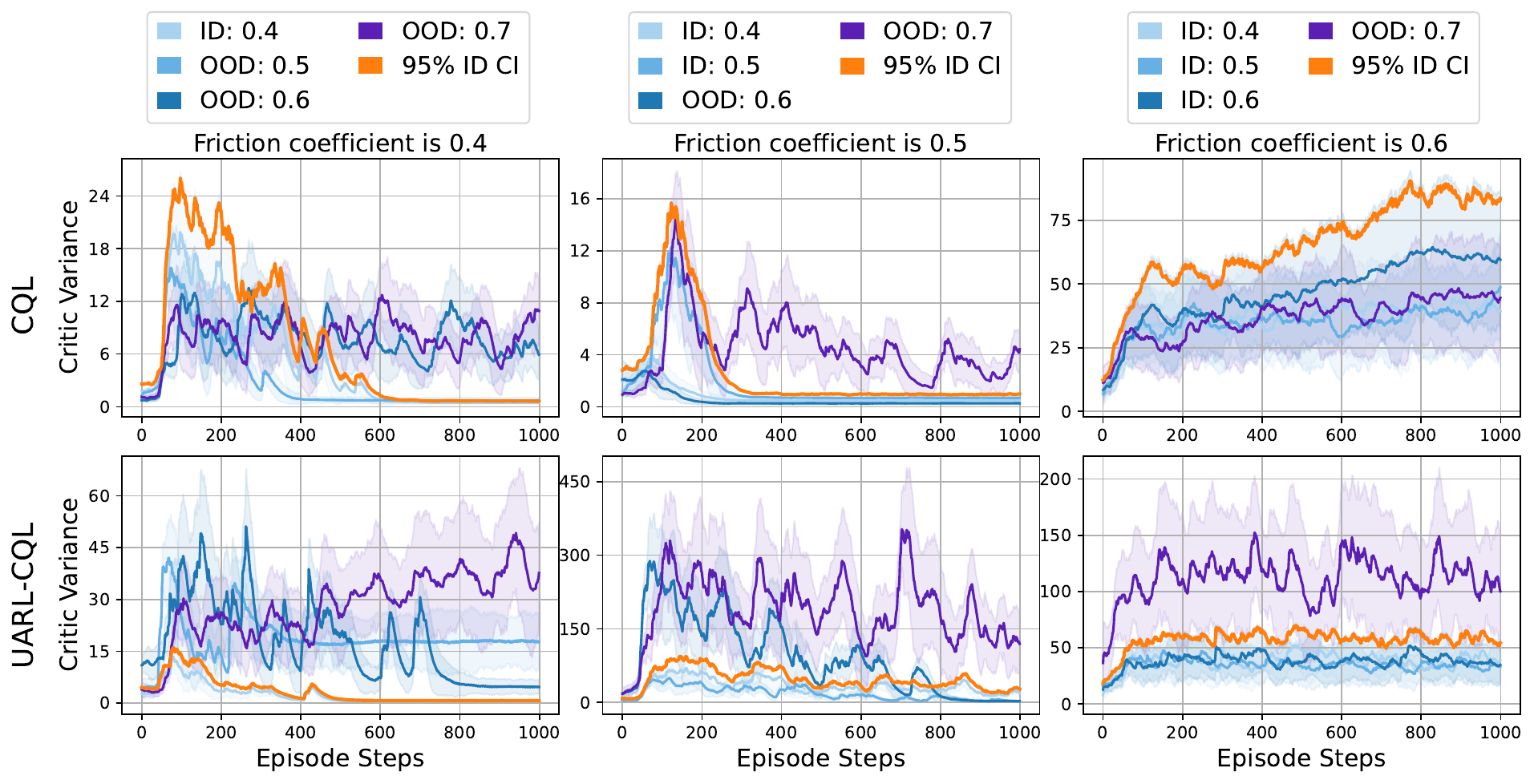}
    \end{subfigure}
    \begin{subfigure}{}
        \centering
        \includegraphics[width=0.8\textwidth]{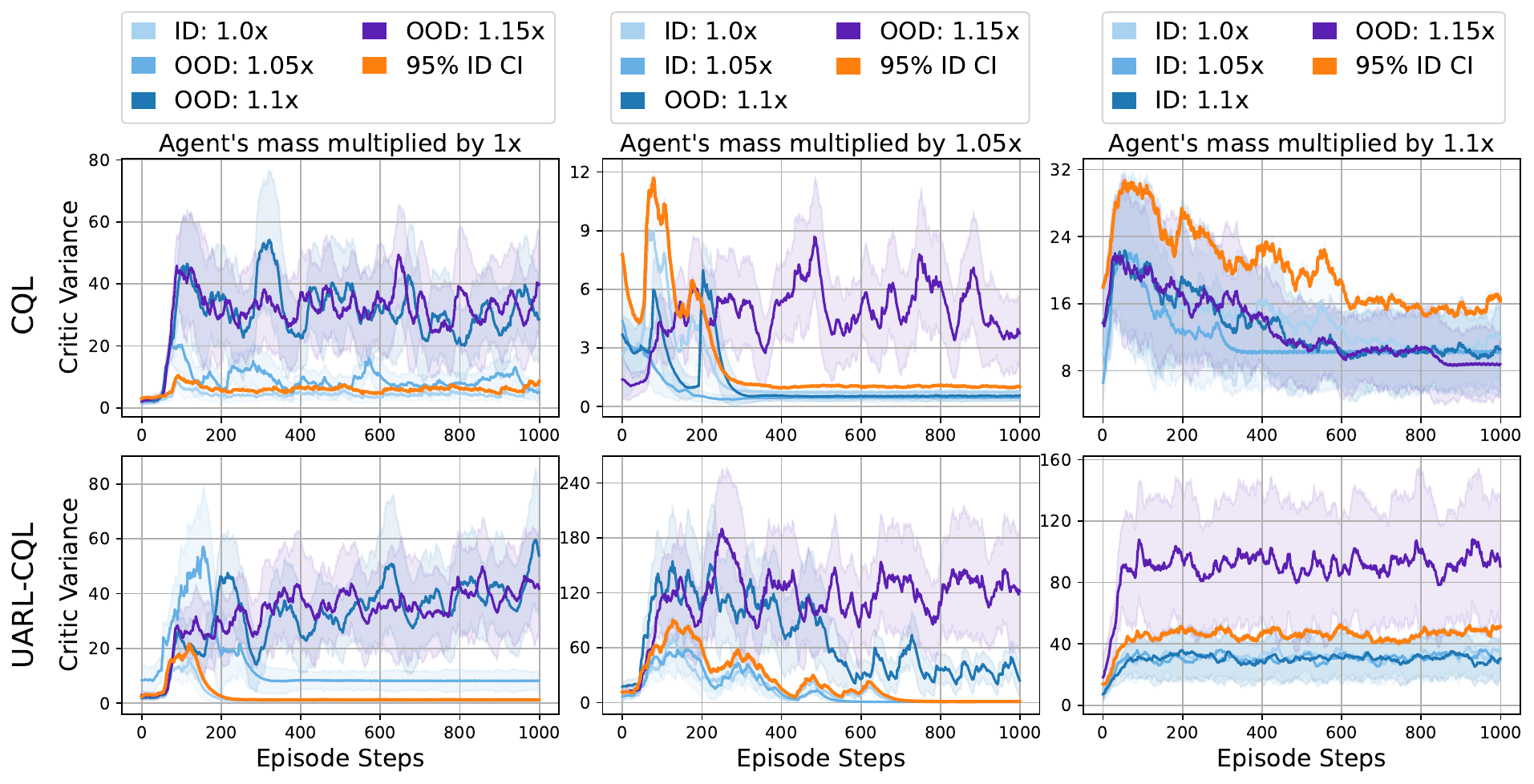}
    \end{subfigure}
    \caption{The OOD detection results for CQL and \ourMethod-CQL initial noise scale (top) and friction coefficient (middle), and agent's mass (bottom) over the \HalfCheetah environment.}
    \label{fig:app_uncertainty_halfcheetah_cql}
\end{figure}

\begin{figure}[htbp]
    \centering
    \begin{subfigure}{}
        \centering
        \includegraphics[width=0.8\textwidth]{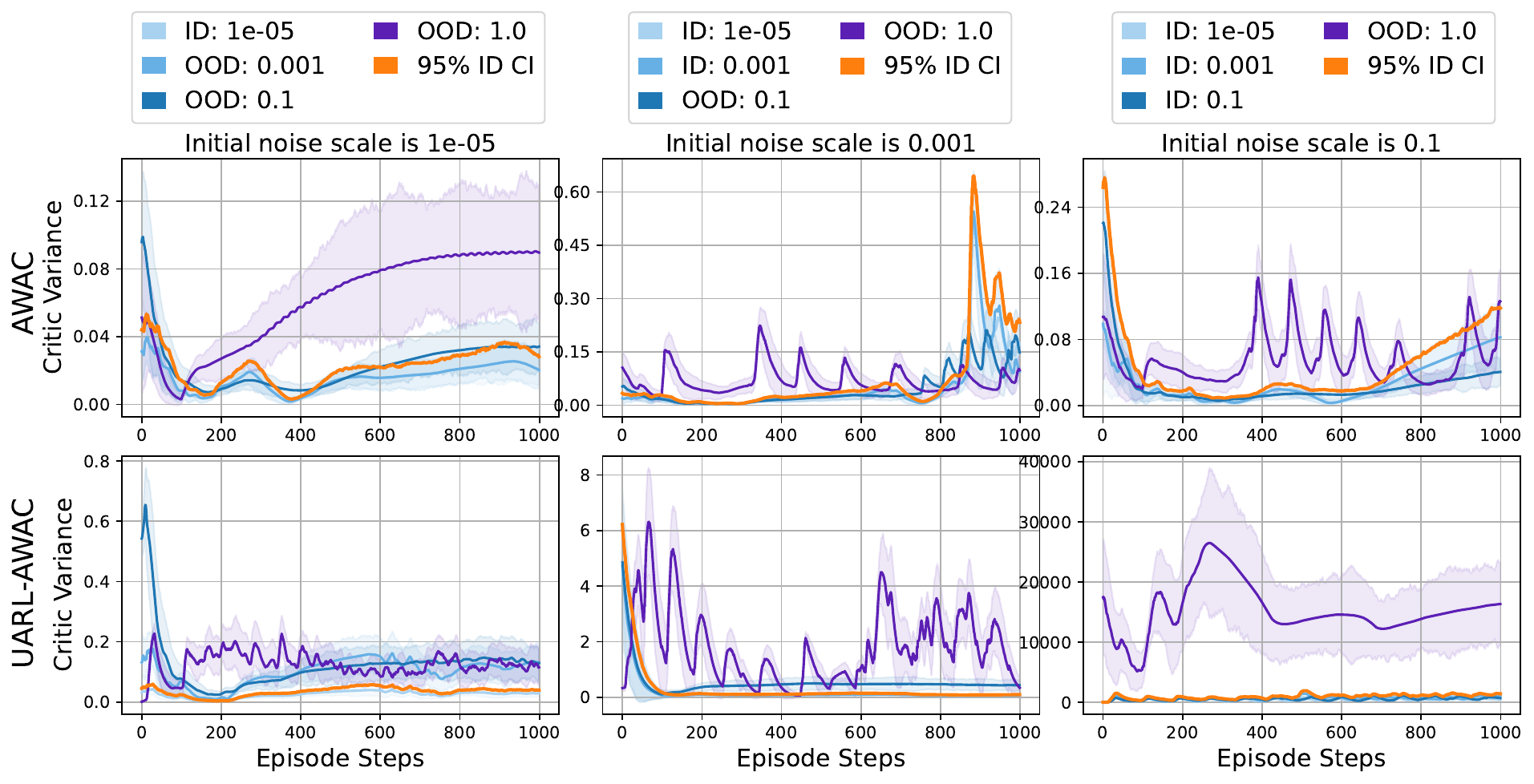}
    \end{subfigure}
    \begin{subfigure}{}
        \centering
        \includegraphics[width=0.8\textwidth]{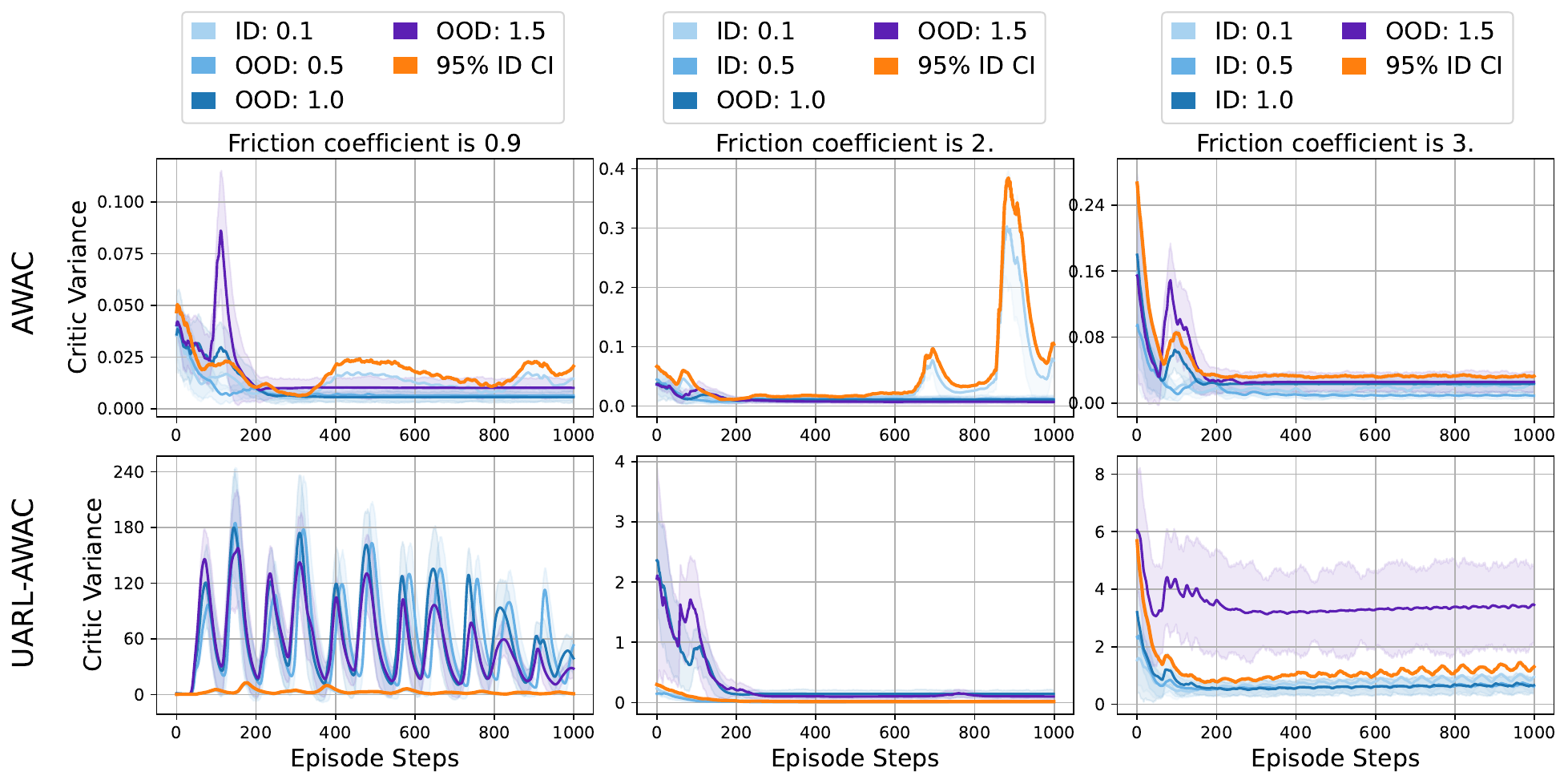}
    \end{subfigure}
    \begin{subfigure}{}
        \centering
        \includegraphics[width=0.8\textwidth]{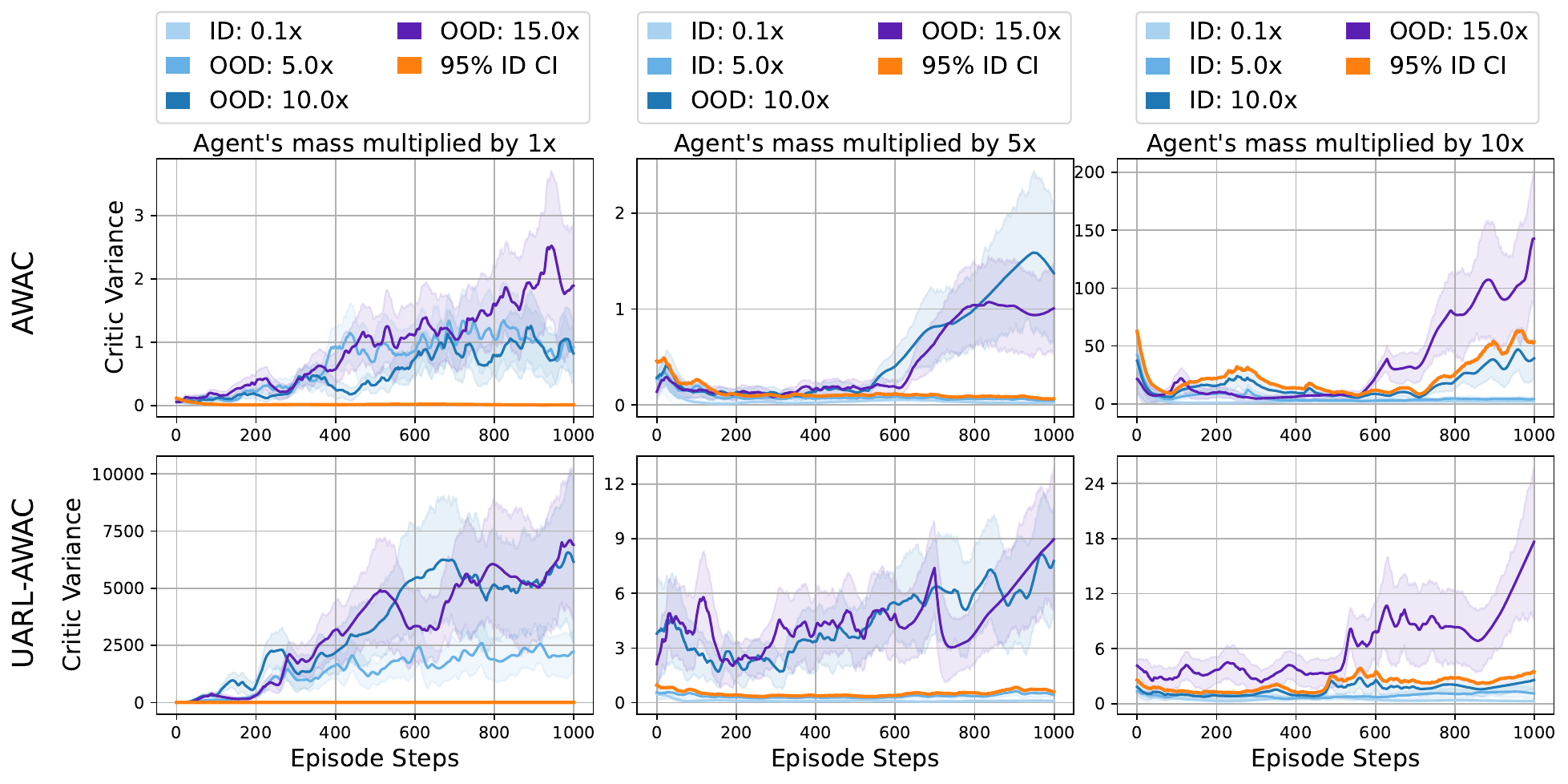}
    \end{subfigure}
    \caption{The OOD detection results for AWAC and \ourMethod-AWAC initial noise scale (top) and friction coefficient (middle), and agent's mass (bottom) over the \Swimmer environment.}
    \label{fig:app_uncertainty_swimmer_awac}
\end{figure}

\begin{figure}[htbp]
    \centering
    \begin{subfigure}{}
        \centering
        \includegraphics[width=0.8\textwidth]{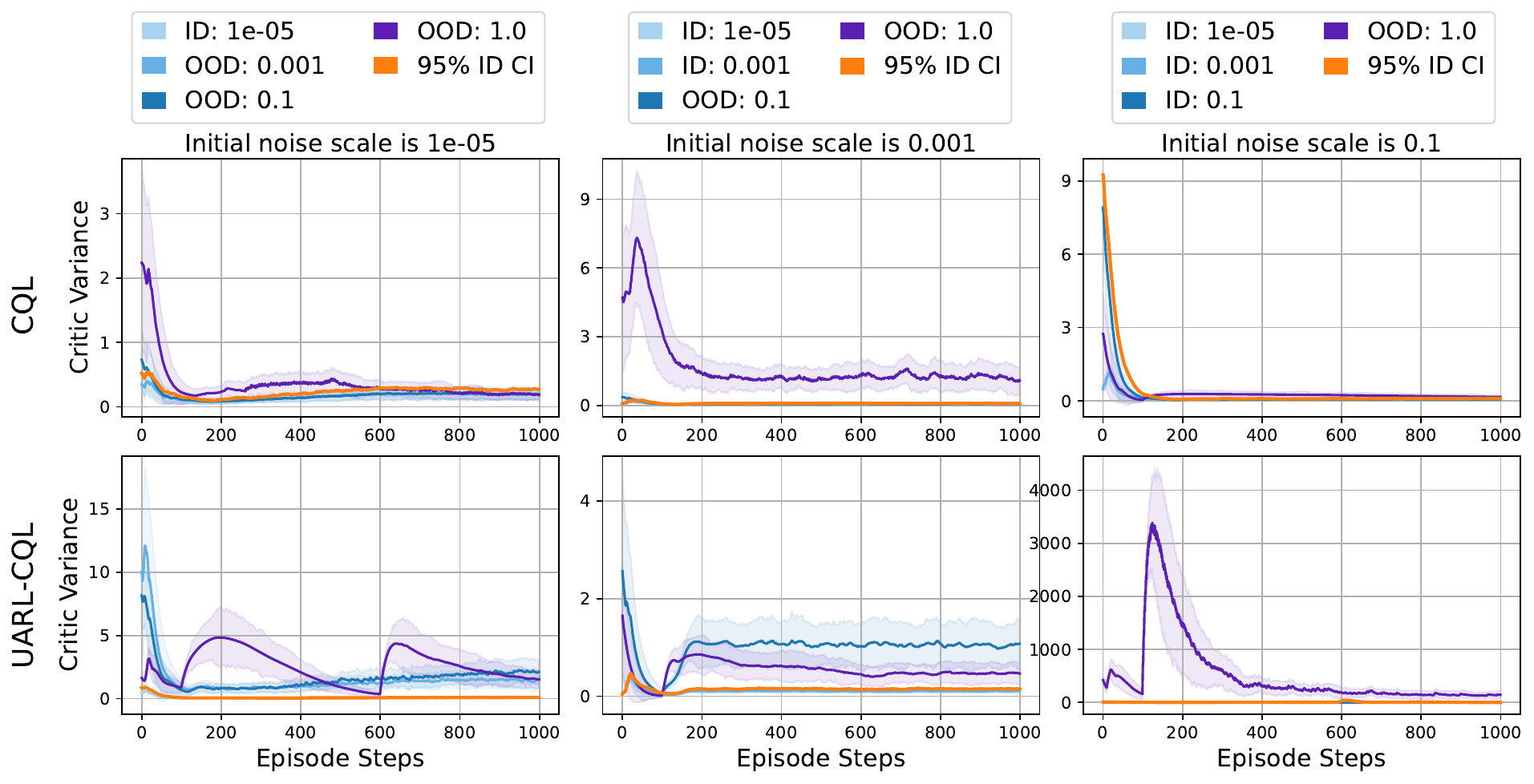}
    \end{subfigure}
    \begin{subfigure}{}
        \centering
        \includegraphics[width=0.8\textwidth]{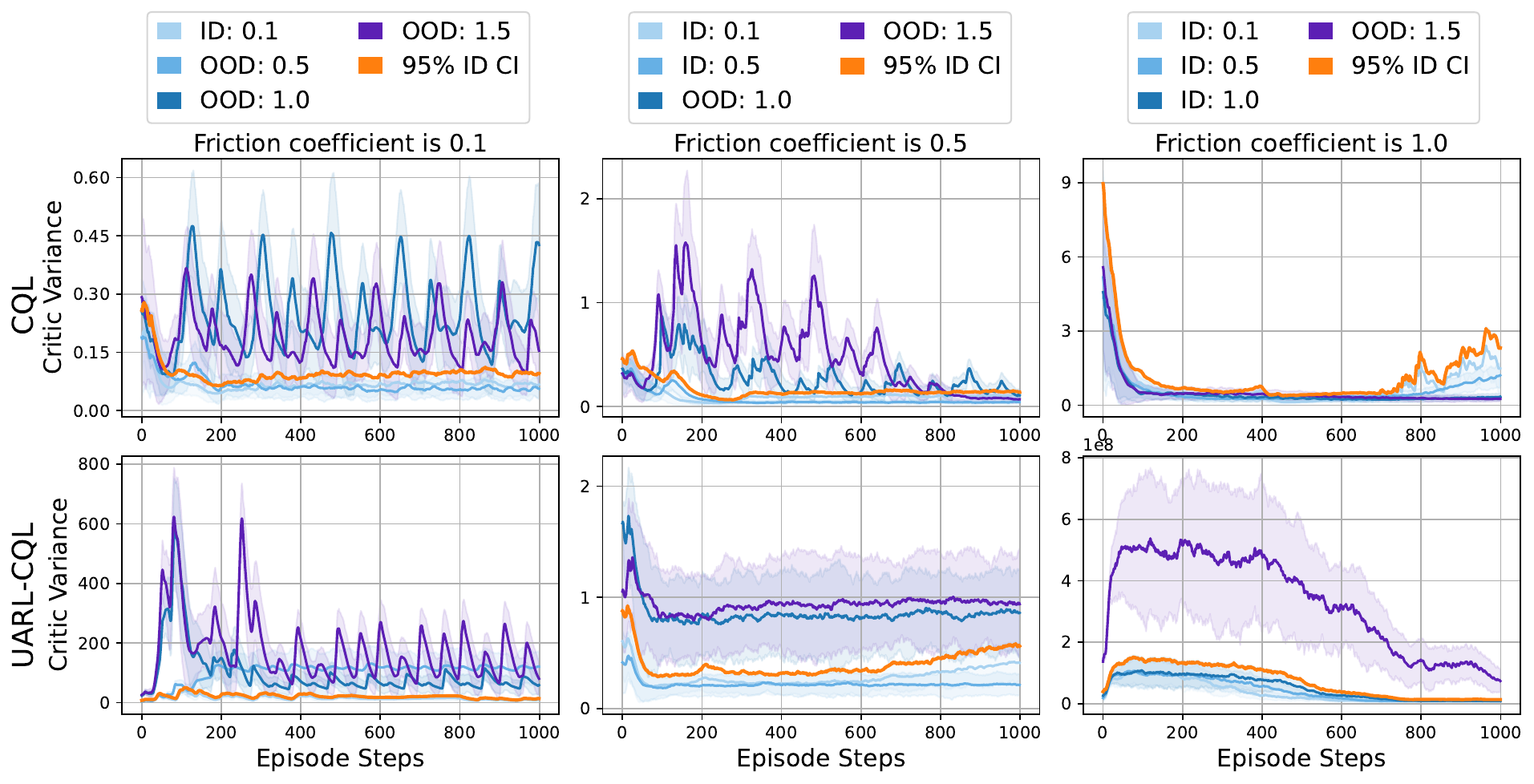}
    \end{subfigure}
    \begin{subfigure}{}
        \centering
        \includegraphics[width=0.8\textwidth]{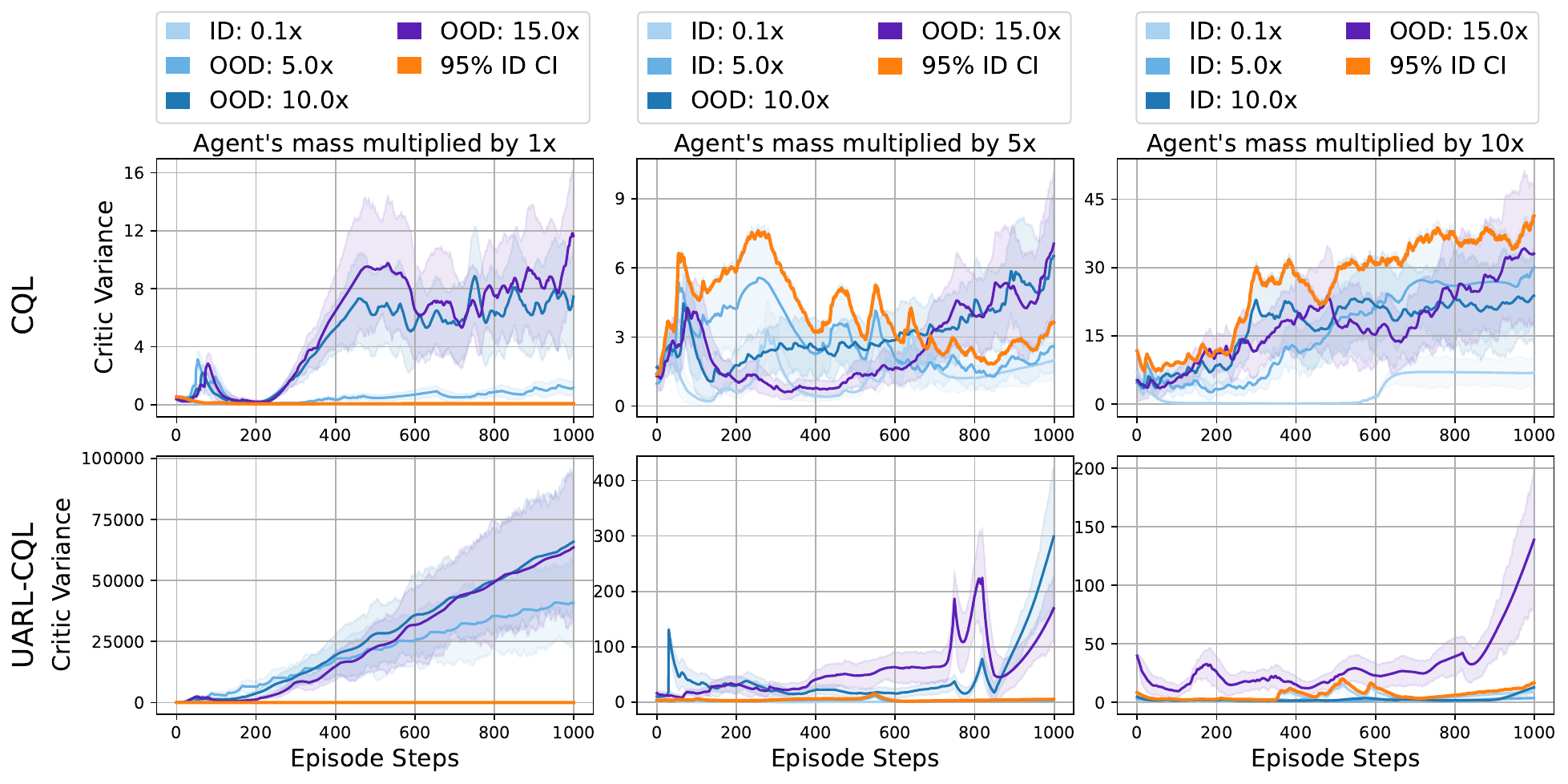}
    \end{subfigure}
    \caption{The OOD detection results for CQL and \ourMethod-CQL initial noise scale (top) and friction coefficient (middle), and agent's mass (bottom) over the \Swimmer environment.}
    \label{fig:app_uncertainty_swimmer_cql}
\end{figure}

\begin{figure}[htbp]
    \centering
    \begin{subfigure}{}
        \centering
        \includegraphics[width=0.8\textwidth]{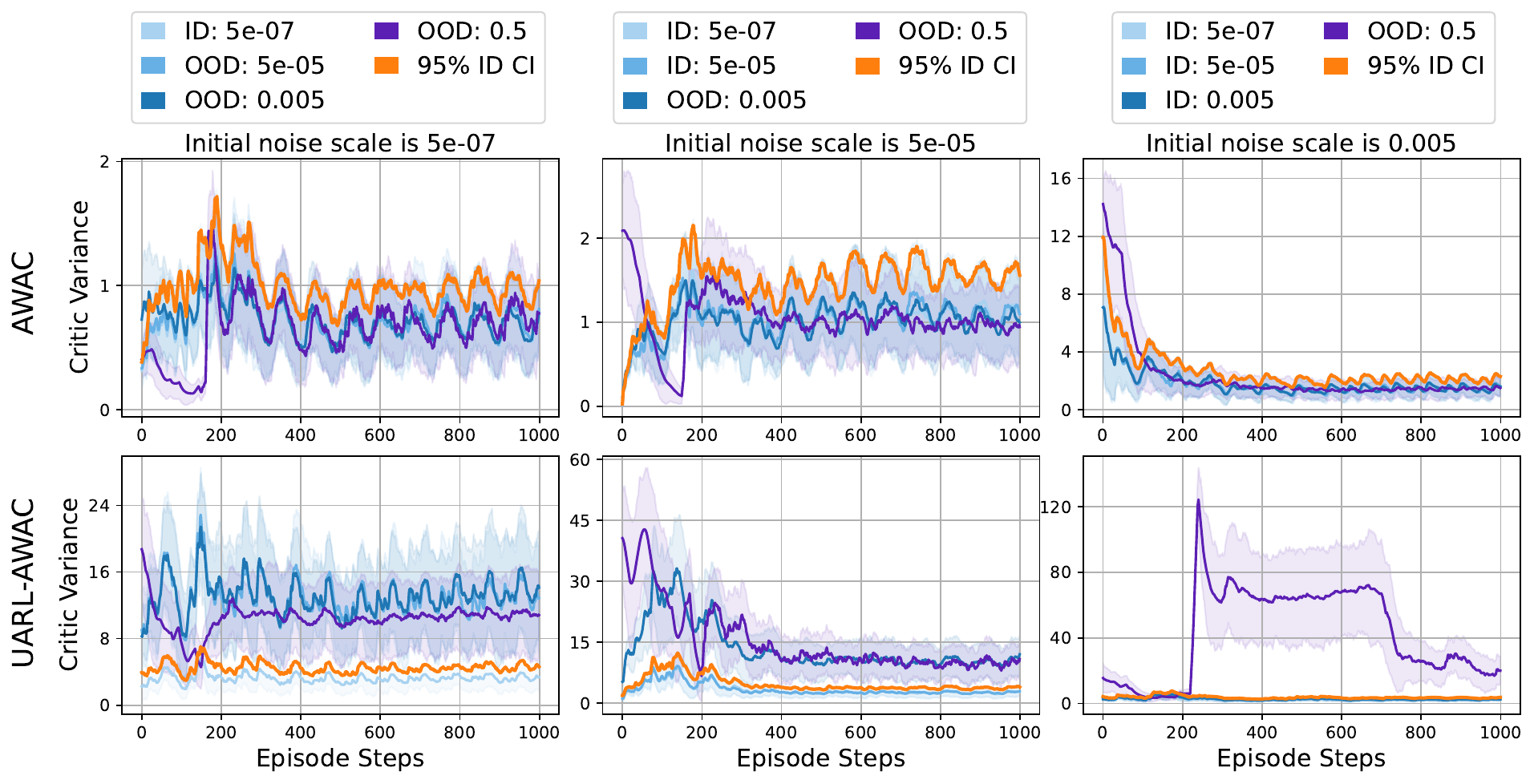}
    \end{subfigure}
    \begin{subfigure}{}
        \centering
        \includegraphics[width=0.8\textwidth]{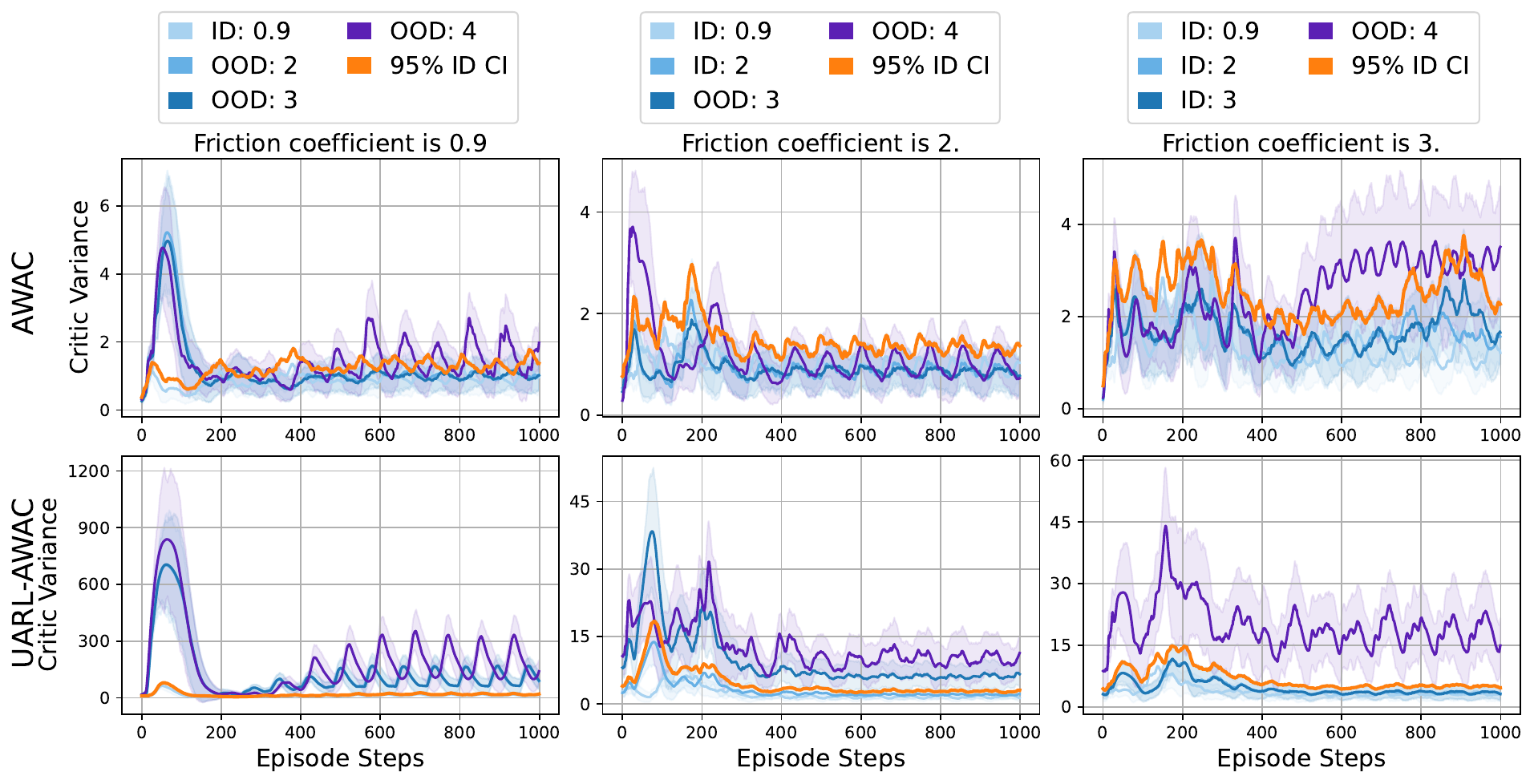}
    \end{subfigure}
    \begin{subfigure}{}
        \centering
        \includegraphics[width=0.8\textwidth]{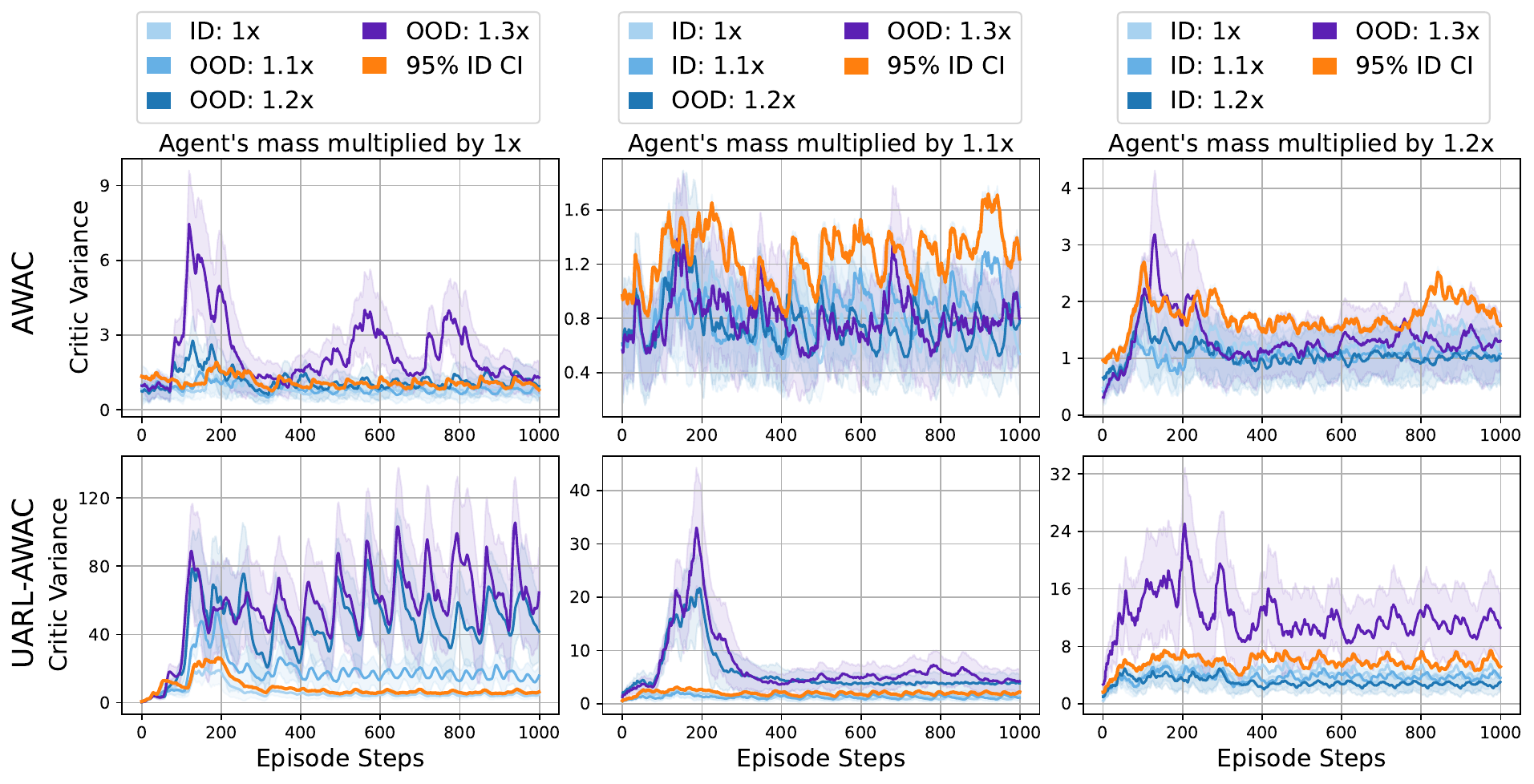}
    \end{subfigure}
    \caption{The OOD detection results for AWAC and \ourMethod-AWAC initial noise scale (top) and friction coefficient (middle), and agent's mass (bottom) over the \Walker environment.}
    \label{fig:app_uncertainty_walker2d_awac}
\end{figure}

\begin{figure}[htbp]
    \centering
    \begin{subfigure}{}
        \centering
        \includegraphics[width=0.8\textwidth]{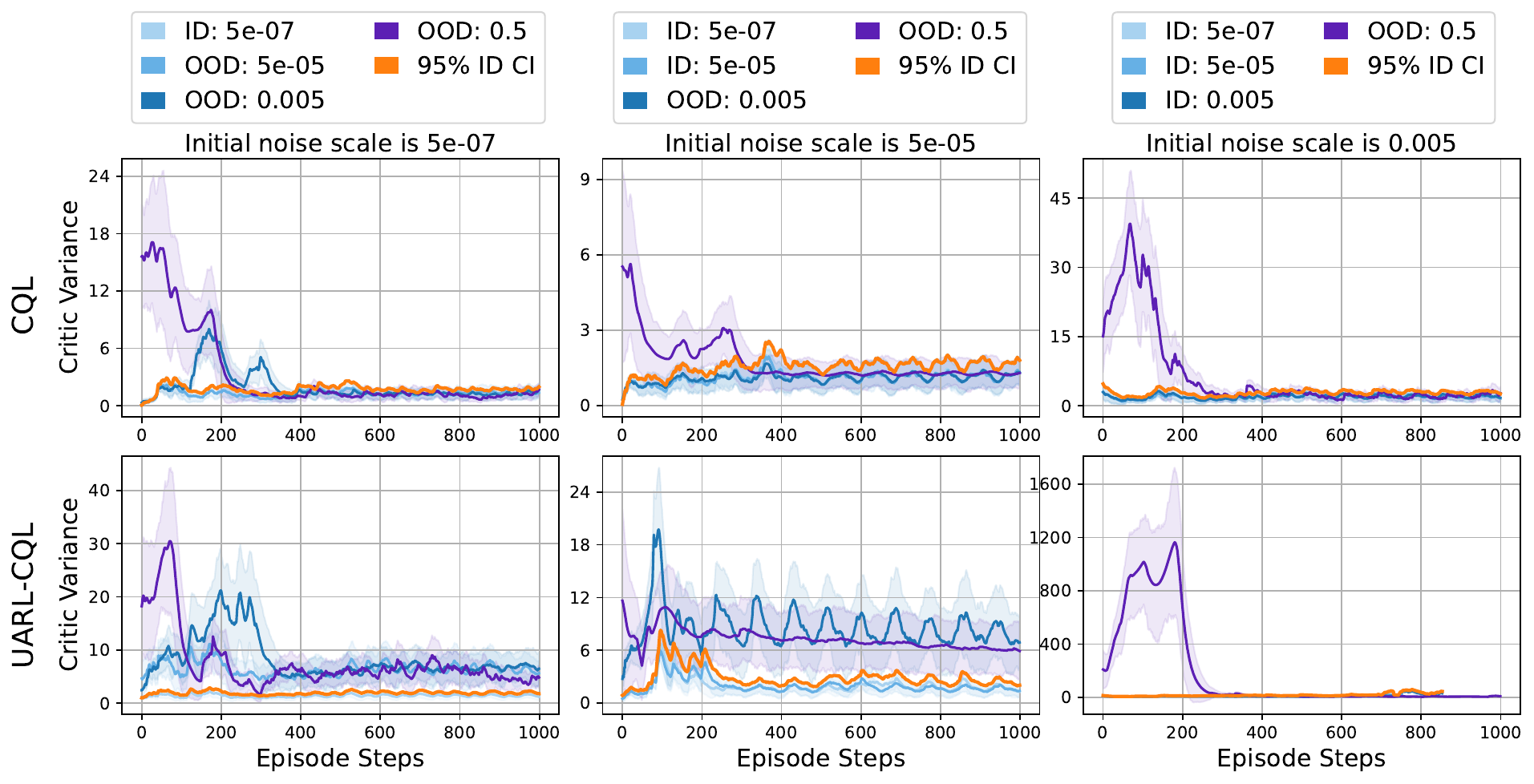}
    \end{subfigure}
    \begin{subfigure}{}
        \centering
        \includegraphics[width=0.8\textwidth]{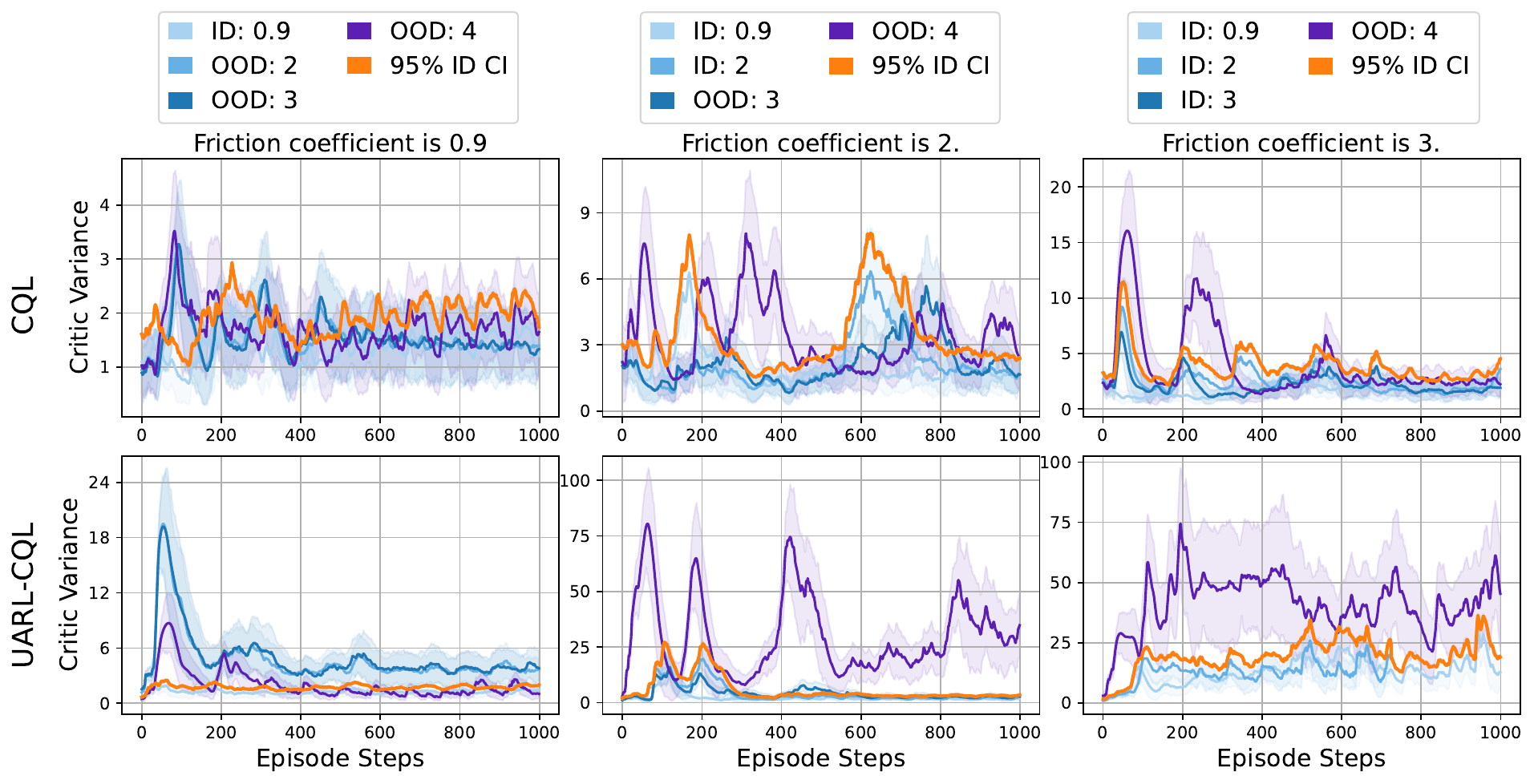}
    \end{subfigure}
    \begin{subfigure}{}
        \centering
        \includegraphics[width=0.8\textwidth]{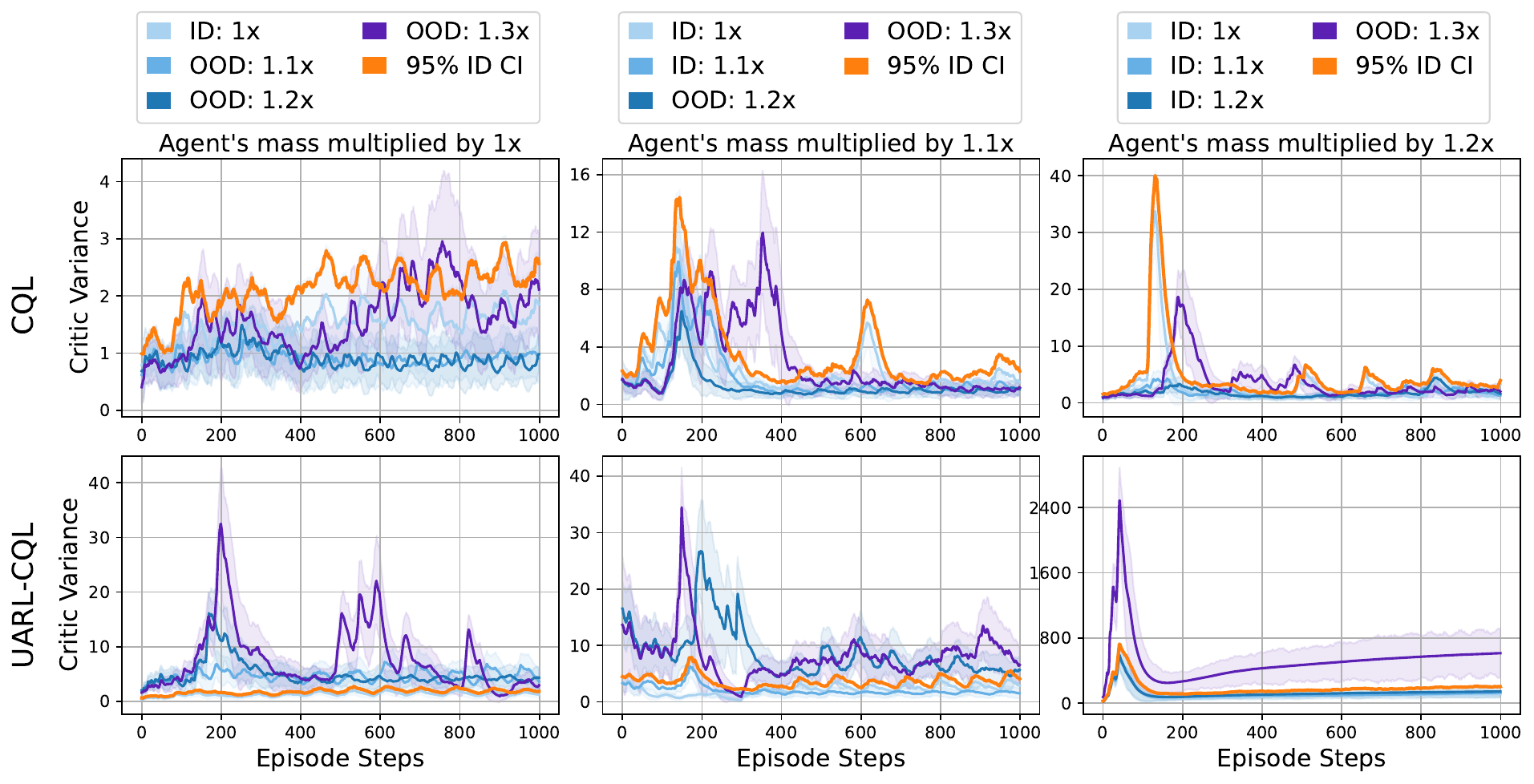}
    \end{subfigure}
    \caption{The OOD detection results for CQL and \ourMethod-CQL initial noise scale (top) and friction coefficient (middle), and agent's mass (bottom) over the \Walker environment.}
    \label{fig:app_uncertainty_walker2d_cql}
\end{figure}

\clearpage

The results show that \ourMethod consistently outperforms baseline methods in detecting OOD data across various environments (\Ant, \HalfCheetah, \Swimmer, and \Walker) using both AWAC-based and CQL-based implementations. A key advantage is its clear separation of ID and OOD samples through critic variance, something often missing in standard AWAC and CQL. \ourMethod adapts its OOD detection threshold as the ID range grows over iterations, maintaining robust performance even in changing environments. Its effectiveness remains consistent across the studied randomized hyperparameters, proving its versatility. The method excels in complex environments like \Ant and \HalfCheetah and shows improved or stable OOD detection over time, suggesting it benefits from expanded training data without losing its detection capability. These findings highlight \ourMethod's strong potential for safe, adaptive deployment in dynamic real-world scenarios.

\clearpage

\subsection{Overall Policy Performance}\label{app:performance}
This subsection presents an extended analysis of \ourMethod's performance across five MuJoCo environments: \Ant, \HalfCheetah, \Hopper, \Swimmer, \Walker. We evaluate the cumulative return during training for each environment, considering the three randomized hyperparameters: initial noise scale, friction coefficient, and agent's mass.

\autoref{fig:perf_it1} shows the performance gains in the offline training phase (\nth{1} iteration of \ourMethod) across various randomized parameters. We observe significant improvements in the \Ant and \HalfCheetah, particularly when randomizing the initial noise scale. For instance, \ourMethod-TD3BC shows a substantial performance advantage over TD3BC in both environments. Similar enhancements are evident when randomizing the friction coefficient, while performance is maintained when altering the agent's mass. Moreover, while EDAC uses an ensemble of $10$ critics \citep{tarasov2022corl}, \ourMethod achieves strong performance with just $2$ critics (default number of critics in baselines), reducing computational overhead. This efficiency makes \ourMethod well-suited for real-world applications with limited resources.

\autoref{fig:app_perf_it1} illustrates the cumulative return achieved by each agent during the offline training phase (\nth{1} iteration) across three environments: \Hopper, \Swimmer, \Walker. The results consistently demonstrate that \ourMethod either improves or maintains the performance of the baseline methods. For instance, in the \Swimmer environment, when the friction coefficient is randomized, \ourMethod significantly enhances the performance of both TD3BC and CQL baselines. Similarly, when randomizing the agent's mass in the same environment, we observe performance improvements across all baseline methods. Notably, there are no instances where the application of \ourMethod leads to a decrease in performance. This robustness is particularly evident in challenging environments like \Hopper and \Walker, where maintaining stability can be difficult. The consistent performance improvements across diverse environments and randomized hyperparameters underscore the versatility and effectiveness of our approach.

\begin{figure}[t]
    \centering
    \includegraphics[width=0.9\linewidth]{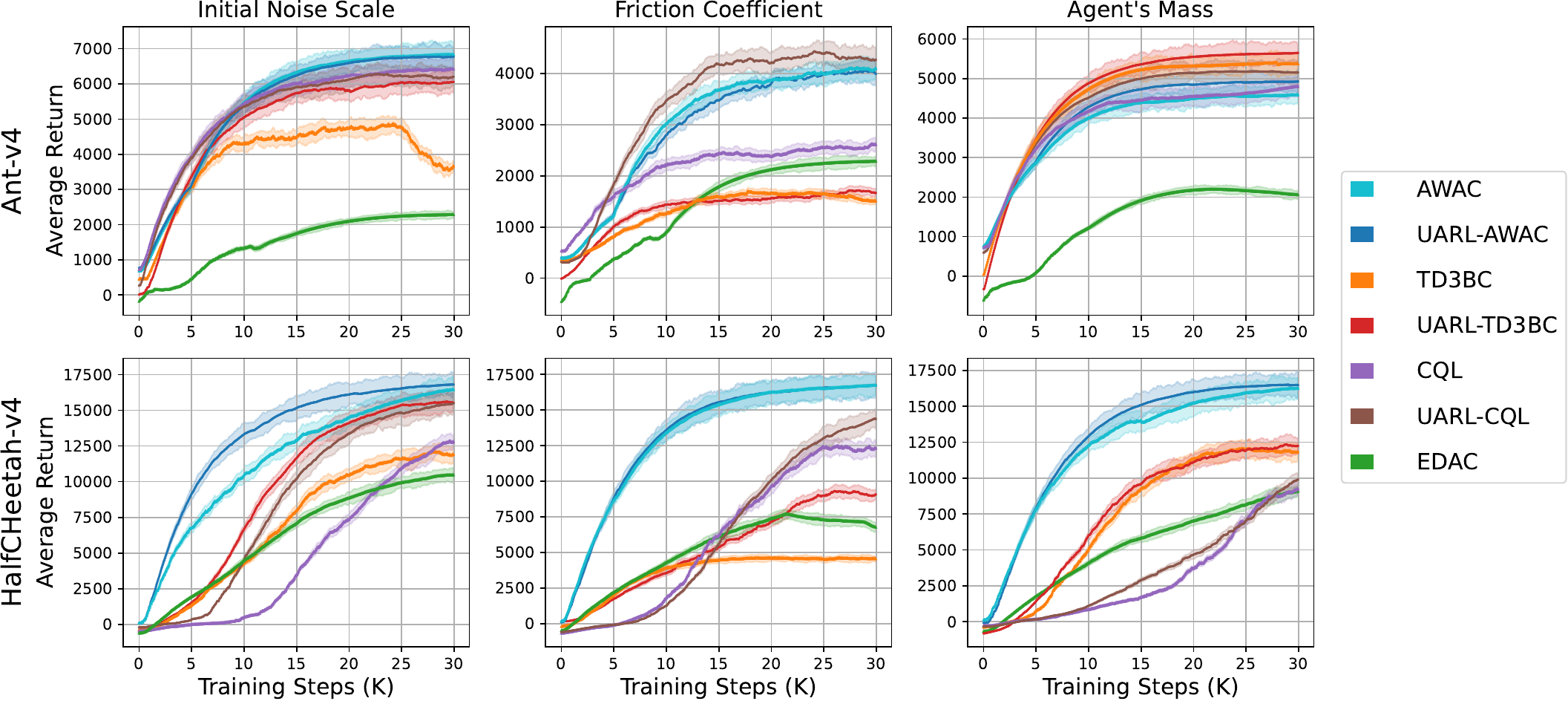}
    \caption{Offline training (\nth{1} iteration) performance, showing average return during training across the randomized parameters. Curves are smoothed for clarity.}
    \vspace{-1.em}
    \label{fig:perf_it1}
\end{figure}

\begin{figure}
    \centering
    \includegraphics[width=0.8\textwidth]{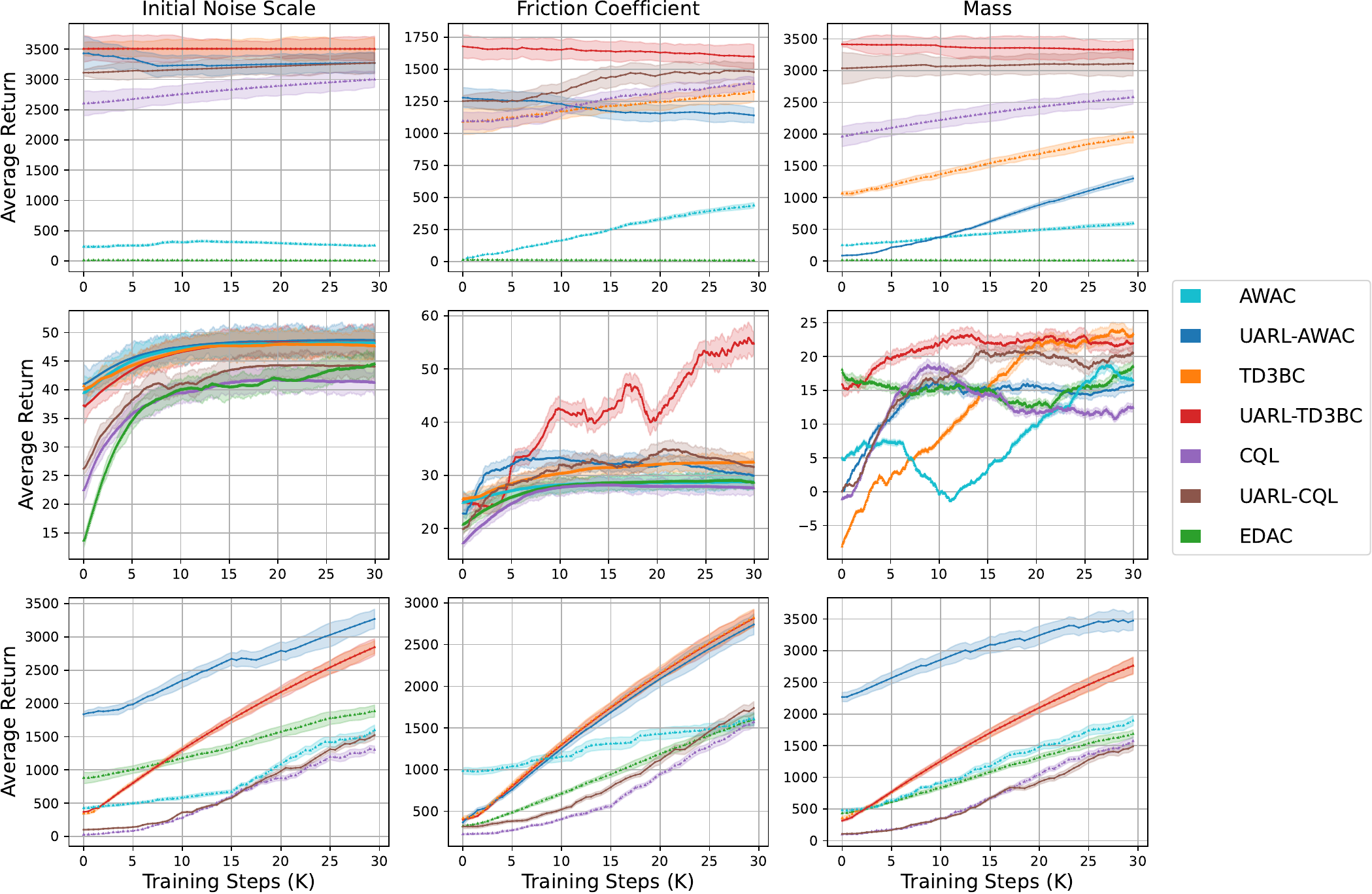}
    \caption{Offline training (\nth{1} iteration) performance in \Hopper (top), \Swimmer (middle), and \Walker (bottom) environments, showing average return during training across three randomized hyperparameters.}
    \label{fig:app_perf_it1}
\end{figure}

The fine-tuning phase (\nth{2} iteration) expands the range of the randomized parameter, collects new data from a more diverse simulation, balances the replay buffer, and fine-tunes the policy as described in \autoref{sec:uarl}. \autoref{fig:perf_it2} showcases the performance during this phase, focusing on AWAC and CQL as our fine-tuning-compatible baselines. The results demonstrate that \ourMethod continues to enhance or maintain performance during the training similar to the \nth{1} phase, and interestingly, it prevents the performance decline observed in CQL across several scenarios, which occurs in \HalfCheetah when the initial noise scale or friction coefficient is altered.

\autoref{fig:app_perf_it2} extends this analysis to the fine-tuning phase (\nth{2} iteration). Here, we observe that \ourMethod continues to demonstrate strong performance, often surpassing the baselines. This is particularly evident in the \Hopper environment, where \ourMethod-AWAC shows significant improvements over standard AWAC across all randomized hyperparameters.

\begin{figure}
    \centering
    \includegraphics[width=0.9\linewidth]{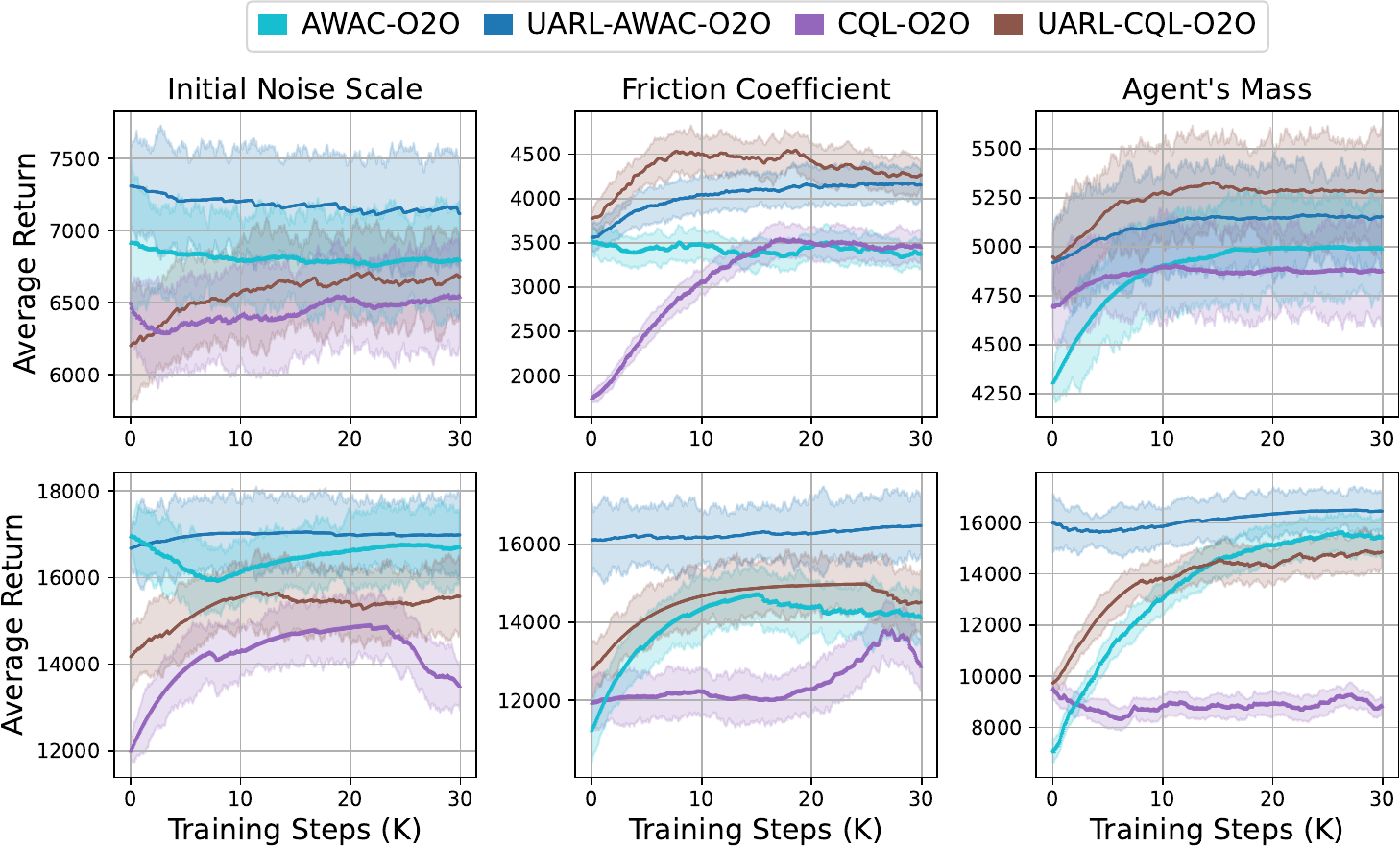}
    \caption{Fine-tuning (\nth{2} iteration) performance in \Ant (top) and \HalfCheetah (bottom) environments, showing average return during fine-tuning across three randomized parameters.}
    \vspace{-1.em}
    \label{fig:perf_it2}
\end{figure}

\begin{figure}
    \centering
    \includegraphics[width=0.8\textwidth]{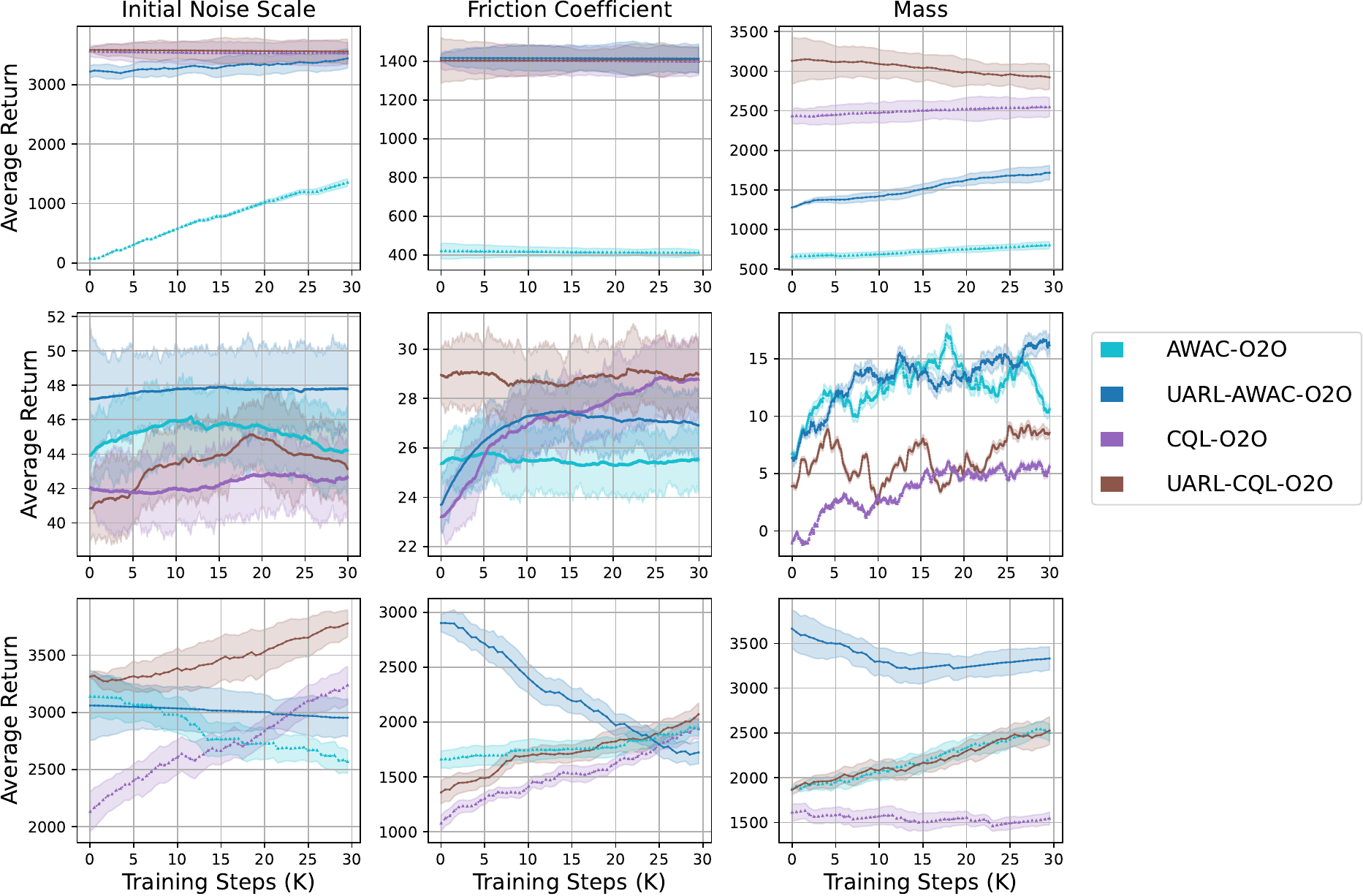}
    \caption{Fine-tuning (\nth{2} iteration) performance in \Hopper (top), \Swimmer (middle), and \Walker (bottom) environments, demonstrating average return during fine-tuning across three randomized hyperparameters.}
    \label{fig:app_perf_it2}
\end{figure}

\autoref{fig:app_perf_it3} presents the same analyses of the fine-tuning phase, but for the \textit{\nth{3} iteration} of \ourMethod. This figure demonstrates the continued effectiveness of our approach over multiple iterations. The results show that \ourMethod maintains its performance advantages and, in many cases, further improves upon the gains observed in the \nth{2} iteration. For instance, in the \Walker environment, \ourMethod-CQL exhibits consistently superior performance across all randomized hyperparameters, showcasing the method's ability to leverage accumulated knowledge effectively. In the \Swimmer environment, we observe that \ourMethod-AWAC continues to outperform the baseline AWAC, particularly when randomizing the agent's mass and initial noise scale. These results underscore the stability and long-term benefits of our approach, indicating that the performance improvements are not transient but persist and potentially amplify over multiple iterations of fine-tuning.

The results in all three figures highlight a key strength of \ourMethod: its ability to enhance the performance of existing offline RL algorithms without compromising their core functionalities. This is achieved through the introduction of diverse critics and the balancing replay buffer mechanism, which together provide more robust policy learning and effective management during fine-tuning of the policy. Furthermore, the consistent performance across different randomized hyperparameters demonstrates \ourMethod's adaptability to various environmental changes. This adaptability is crucial for real-world RL applications, where the ability to handle unexpected variations in the environment is essential for safe and effective deployment.

In summary, these extended results reinforce and expand upon the findings presented in the main paper. They provide strong evidence for the efficacy of \ourMethod across a wide range of locomotion tasks and environmental conditions, highlighting its potential as a robust and versatile approach for offline RL and fine-tuning the policy.

\begin{figure}
    \centering
    \includegraphics[width=0.8\textwidth]{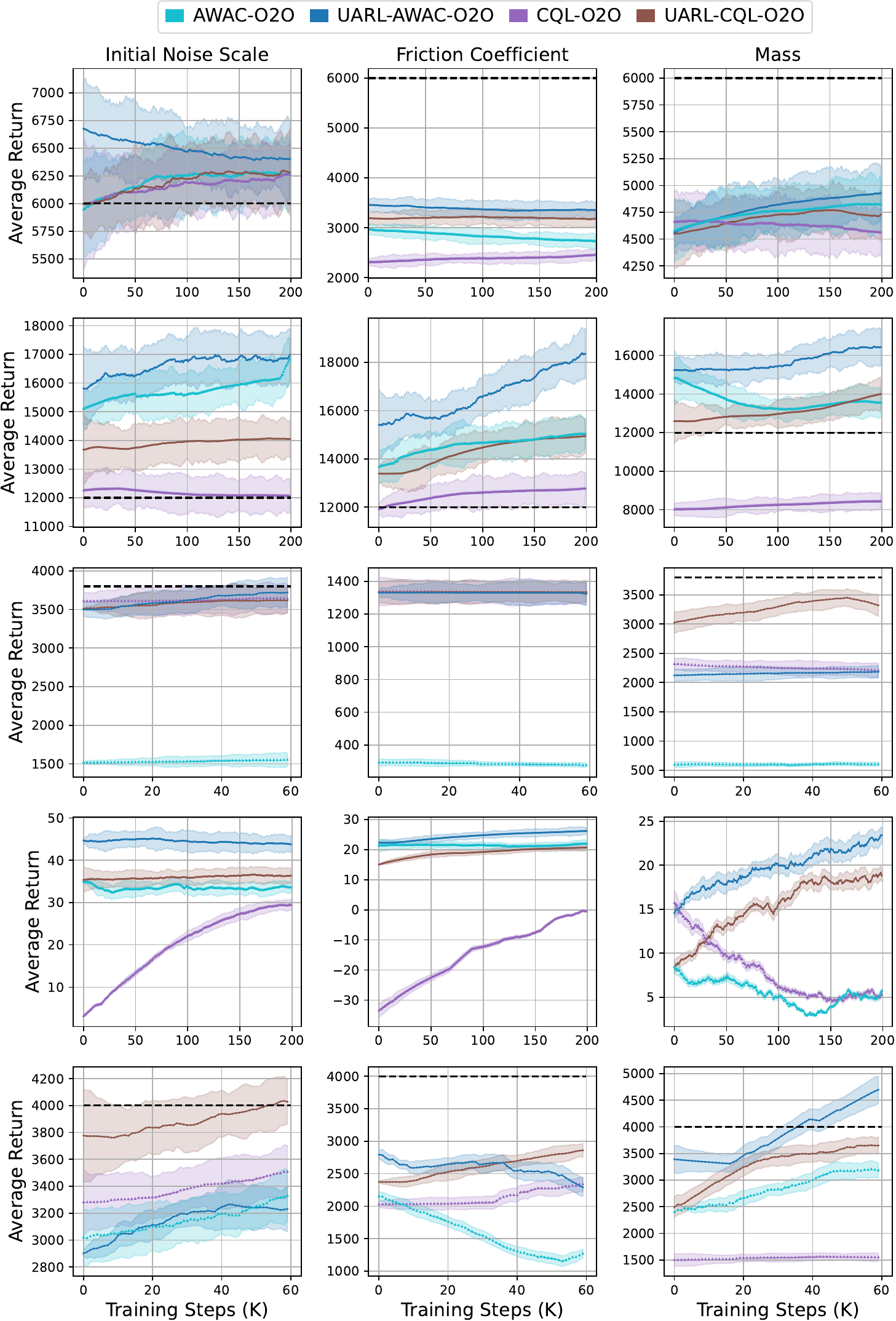}
    \caption{Fine-tuning (\nth{3} iteration) performance in various environments, demonstrating average return during fine-tuning across three randomized hyperparameters. In order, from top to bottom: \Ant, \HalfCheetah, \Hopper, \Swimmer, \Walker.}
    \label{fig:app_perf_it3}
\end{figure}

\clearpage

\subsection{Effect of Balancing Replay Buffer}\label{app:exp_balancing_rb}

Finally, we evaluate the impact of the replay buffer balancing mechanism employed in \ourMethod (\autoref{sec:finetuning}).
\autoref{fig:balancing_rb} shows the effect of removing the balancing replay buffer feature while fine-tuning the policy. In both \Ant and \HalfCheetah, the balancing mechanism consistently outperforms its counterparts across all randomized parameters. \ourMethod with balancing shows faster learning, higher average returns, and less ``unlearning'' (overriding of previously learned behaviors), both in \Ant (especially at the beginning of learning) and \HalfCheetah.
Overall, \ourMethod with balancing demonstrates increasing gains throughout the entire training process.

The results suggest that balancing the replay buffer leads to more stable and efficient learning, particularly crucial during fine-tuning where transitioning from source domain to target domain fine-tuning can be challenging. The consistent improvement across various environmental parameters indicates that the balancing mechanism's benefits are robust to task dynamics variations. In essence, our experiments show that the replay buffer balancing mechanism is a key component in enhancing \ourMethod's performance, accelerating early-stage fine-tuning, and improving overall performance across different environments and task parameters.

\begin{figure}
    \centering
    \includegraphics[width=0.6\linewidth]{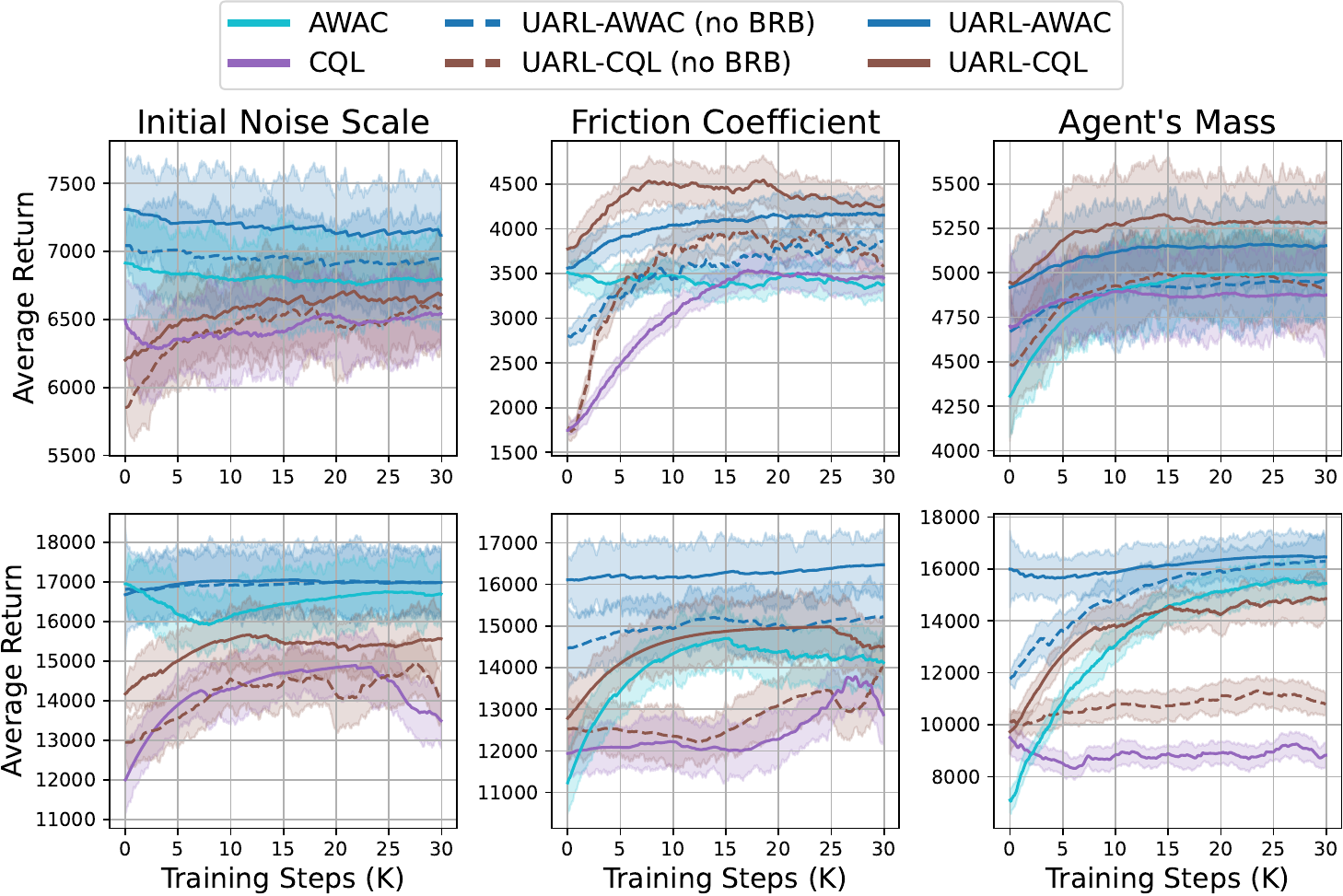}
    \caption{The impact of balancing the replay buffer (BRB) during the initial fine-tuning for all randomized parameters, on \Ant (top) and \HalfCheetah (bottom).}
    \label{fig:balancing_rb}
\end{figure}

\subsection{Sample Efficiency}\label{app:sample_efficiency}
To assess the sample efficiency of \ourMethod, we define convergence based on the number of data points required for the cumulative return to reach an acceptable level, as per the reward thresholds defined by the Gymnasium Environments \citep{towers_gymnasium_2023}. We trained \ourMethod until convergence, allowing it to progressively expand its state space. For a fair comparison, we then trained baselines on this final expanded state space from the outset, ensuring they operate in the same state space that \ourMethod ultimately reached through its iterative process.

\autoref{table:sample_eff} compares sample efficiency between \ourMethod and the baselines across different environments and randomized hyperparameters. The figure shows the difference in the number of samples required for convergence. Positive values indicate that baselines require more samples to converge. The results demonstrate that by starting with a limited state space, \ourMethod generally requires fewer samples to converge while maintaining performance as the state space expands in subsequent iterations. Out of a total of $45$ comparisons across $5$ environments, $3$ baselines, and $3$ randomized hyperparameters, and based on Welch's t-test with a significance level of $0.05$, \ourMethod statistically significantly outperforms the baselines in $35\%$ of cases and achieves non-significant improvements in $49\%$ of cases, resulting in an overall improvement in $84\%$ of comparisons. Notably, a small portion of data from the limited state space (\nth{1} iteration) is often sufficient for \ourMethod to achieve convergence while remaining performant during the fine-tuning process. This finding supports the notion that iteratively incrementing the state space and fine-tuning the policy significantly enhances sample efficiency.

In calculating \ourMethod's sample efficiency, we considered both the nominal and repulsive datasets to ensure a fair comparison with baselines trained on the expanded (nominal) state space. The results in \autoref{table:sample_eff} highlight \ourMethod's ability to learn efficiently in complex environments, potentially reducing the computational resources and time required for training robust policies. This sample efficiency, along with the earlier performance improvements, highlights the practical advantages of our approach in real-world RL applications where data collection is costly or time-consuming.

\begin{table}[h]
\centering
\small
\caption{Sample efficiency comparison: Difference in means and standard deviations (mean $\pm$ std), in thousands of samples needed for convergence (baseline vs. corresponding \ourMethod method: \colorbox{sbBlue025}{AWAC-based}, \colorbox{sbPurple025}{CQL-based}, \colorbox{sbOrange025}{TD3BC-based}). Positive values indicate \ourMethod requires fewer samples.}
\begin{tabular}{cccccc}
\toprule
Hyperparameter & \ant & \halfCheetah & \hopper & \swimmer & \walker \\
\midrule
\multirow{3}{*}{\makecell{Initial\\Noise\\Scale}} 
 & \cellcolor{sbBlue025}$1208 {\scriptstyle {\pm 859}}$ & \cellcolor{sbBlue025}$8745 {\scriptstyle {\pm 1499}}$ & \cellcolor{sbBlue025}$5062 {\scriptstyle {\pm 1986}}$ & \cellcolor{sbBlue025}$5700 {\scriptstyle {\pm 949}}$ & \cellcolor{sbBlue025}$1680 {\scriptstyle {\pm 2163}}$ \\
  & \cellcolor{sbPurple025}$501 {\scriptstyle {\pm 1009}}$ & \cellcolor{sbPurple025}$27112{\scriptstyle {\pm 1104}}$ & \cellcolor{sbPurple025}$3616 {\scriptstyle {\pm 1381}}$ & \cellcolor{sbPurple025}$-228{\scriptstyle {\pm 899}}$ & \cellcolor{sbPurple025}$1579{\scriptstyle {\pm 2440}}$ \\
  & \cellcolor{sbOrange025}$5661 {\scriptstyle {\pm 905}}$ & \cellcolor{sbOrange025}$13799 {\scriptstyle {\pm 939}}$ & \cellcolor{sbOrange025}$10993 {\scriptstyle {\pm 1843}}$ & \cellcolor{sbOrange025}$-114 {\scriptstyle {\pm 837}}$ & \cellcolor{sbOrange025}$-407 {\scriptstyle {\pm 1488}}$ \\
\midrule
\multirow{3}{*}{\makecell{Friction\\Coefficient}} 
 & \cellcolor{sbBlue025}$3857 {\scriptstyle {\pm 2278}}$ & \cellcolor{sbBlue025}$3765 {\scriptstyle {\pm 1419}}$ & \cellcolor{sbBlue025}$2202{\scriptstyle {\pm 1683}}$ & \cellcolor{sbBlue025}$-43{\scriptstyle {\pm 2094}}$ & \cellcolor{sbBlue025}$2357{\scriptstyle {\pm 1711}}$ \\
 & \cellcolor{sbPurple025}$3578 {\scriptstyle {\pm 2083}}$ & \cellcolor{sbPurple025}$17343{\scriptstyle {\pm 1940}}$ & \cellcolor{sbPurple025}$4038 {\scriptstyle {\pm 1897}}$ & \cellcolor{sbPurple025}$-376 {\scriptstyle {\pm 1811}}$ & \cellcolor{sbPurple025}$-930 {\scriptstyle {\pm 2560}}$ \\
  & \cellcolor{sbOrange025}$9401 {\scriptstyle {\pm 1589}}$ & \cellcolor{sbOrange025}$13043 {\scriptstyle {\pm 978}}$ & \cellcolor{sbOrange025}$14991 {\scriptstyle {\pm 1486}}$ & \cellcolor{sbOrange025}$14 {\scriptstyle {\pm 1736}}$ & \cellcolor{sbOrange025}$-437 {\scriptstyle {\pm 2102}}$ \\
\midrule
\multirow{3}{*}{\makecell{Agent's\\Mass}} 
 & \cellcolor{sbBlue025}$5902{\scriptstyle {\pm 1674}}$ & \cellcolor{sbBlue025}$10779{\scriptstyle {\pm 1243}}$ & \cellcolor{sbBlue025}$8138 {\scriptstyle {\pm 1414}}$ & \cellcolor{sbBlue025}$7625{\scriptstyle {\pm 2090}}$ & \cellcolor{sbBlue025}$282{\scriptstyle {\pm 2217}}$ \\
 & \cellcolor{sbPurple025}$8262 {\scriptstyle {\pm 1540}}$ & \cellcolor{sbPurple025}$1210 {\scriptstyle {\pm 2220}}$& \cellcolor{sbPurple025}$2531 {\scriptstyle {\pm 1941}}$ & \cellcolor{sbPurple025}$-8631 {\scriptstyle {\pm 2280}}$ & \cellcolor{sbPurple025}$-2646 {\scriptstyle {\pm 2133}}$ \\ 
 & \cellcolor{sbOrange025}$-1225 {\scriptstyle {\pm 1120}}$ & \cellcolor{sbOrange025}$13309 {\scriptstyle {\pm 2915}}$ & \cellcolor{sbOrange025}$2092 {\scriptstyle {\pm 1387}}$ & \cellcolor{sbOrange025}$ 3343 {\scriptstyle {\pm 1762}}$ & \cellcolor{sbOrange025}$ 1049 {\scriptstyle {\pm 1549}}$ \\
\bottomrule
\end{tabular}
\label{table:sample_eff}
\end{table}

\clearpage

\subsection{Policy Performance Comparison with Off-Dynamics RL}\label{app:off_dyn}

This section evaluates the performance of \ourMethod against off-dynamics RL methods, focusing on their ability to handle distribution shifts and detect OOD events. We compare \ourMethod with three prominent baselines: H2O \citep{niu2022trust}, VGDF \citep{xu2023crossdomain}, and a robust algorithm which leverages offline data, RLPD \citep{ball2023efficient}. These methods were selected because they share some similarities with \ourMethod in terms of leveraging ensemble-based critics while trying to learn a robust policy. However, they differ fundamentally in their assumptions about target domain accessibility and data usage, as discussed in \aref{app:related}.

\begin{figure}[t]
    \centering
    \includegraphics[width=0.4\textwidth]{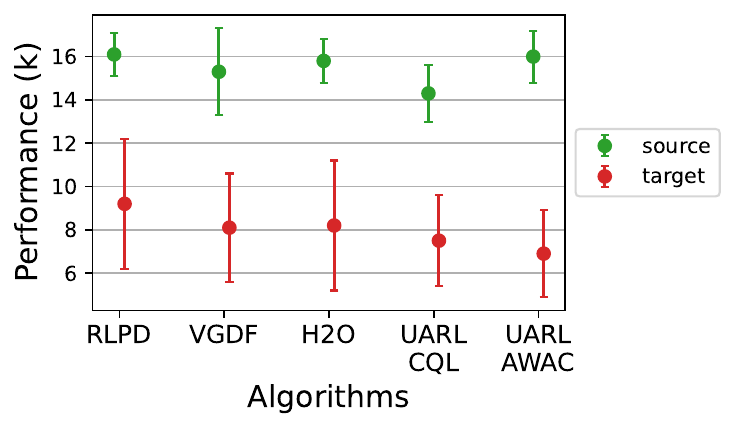}
    \caption{Performance in \HalfCheetah under domain shift. While all methods degrade when transitioning from source (friction coefficient=$0.5$) to target domain (friction coefficient=$0.8$), \ourMethod maintains competitive performance despite training exclusively on source domain data.}
    \label{fig:off_dyn_perf}
\end{figure}

\subsubsection{Experimental Setup}
We conduct experiments in the \HalfCheetah environment, where we simulate a sudden and significant change in the friction coefficient during an episode. This setup allows us to evaluate how well each method adapts to unexpected distribution shifts and detects OOD events. Specifically, the source domain (nominal environment) has a friction coefficient of $0.5$, and we collect an offline dataset $\dataset$ consisting of $999$ rollouts. The repulsive dataset $\dataset'$ is collected with a friction coefficient of $0.6$, similarly for $999$ rollouts. The target domain (real-world environment) has a friction coefficient of $0.8$, and we assume access to a limited dataset $\dataset_t$ of $99$ rollouts. 

\ourMethod is trained using only $\dataset$ and $\dataset'$, without any access to $\dataset_t$ during training. In contrast, the baselines (H2O, VGDF, and RLPD) are trained using all available data, including $\dataset_t$, as they require target domain data for policy refinement. This distinction highlights a key advantage of \ourMethod: it avoids the need for extensive target domain data, making it more practical for safety-critical applications where such data is scarce or risky to collect.

\subsubsection{Performance Analysis}
\autoref{fig:off_dyn_perf} compares the performance of all methods in the source and target domains. As expected, all methods experience a performance drop when transitioning to the target domain due to the significant change in the friction coefficient (see \aref{app:random_params} for details). However, the baselines exhibit a less severe degradation because they are explicitly trained on target domain data ($\dataset_t$). For instance, H2O fills half of its replay buffer with target domain data during training: \href{https://github.com/t6-thu/H2O/blob/main/SimpleSAC/sim2real_sac_main.py#L185-L186}{H2O's official implementation}. RLPD similarly relies on target domain data for balancing offline and online learning.

Despite not using $\dataset_t$ during training, \ourMethod achieves competitive performance, demonstrating its ability to generalize to the target domain through uncertainty-aware adaptation and iterative fine-tuning. This result underscores the efficiency of \ourMethod in leveraging limited data and its suitability for real-world applications where target domain data is scarce.

\subsubsection{OOD Detection Capabilities}
A critical advantage of \ourMethod is its ability to detect OOD events without direct interaction with the target domain. To evaluate this capability, we simulate an OOD event by abruptly changing the friction coefficient from $0.5$ to $0.8$ at a random time step between $600$ and $800$ during evaluation. We then measure the critic variance for each method to assess its ability to detect this change.

\autoref{fig:off_dyn_detect} shows the critic variance across episodes. Only \ourMethod-based approaches exhibit a sharp increase in critic variance when the friction coefficient changes, effectively detecting the OOD event. In contrast, the baselines fail to reliably detect the change, as their critic variances remain within the range of ID parts of the episode. This result highlights \ourMethod's superior OOD detection capabilities, which are crucial for ensuring safety in real-world deployments.

\begin{figure}[t]
    \centering
    \includegraphics[width=\linewidth]{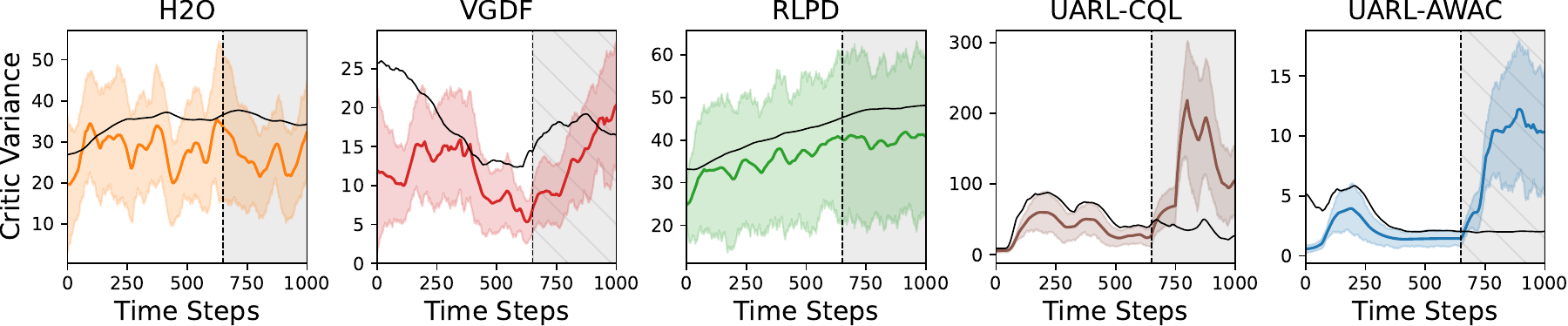}
    \caption{OOD detection in \HalfCheetah. When friction coefficient changes abruptly at step $650$ and stays (grey area), only \ourMethod reliably detects the distribution shift through critic variance spikes that exceed the $95\%$ ID CI (black lines). Baselines fail to recognize the OOD dynamics.}
    \label{fig:off_dyn_detect}
\end{figure}

The experiments demonstrate that while off-dynamics RL methods like H2O, VGDF, and RLPD can achieve robust performance within their training domains, they rely heavily on target domain data and lack explicit mechanisms for OOD detection. In contrast, \ourMethod achieves competitive performance without requiring target domain data during training and excels at detecting OOD events. These advantages make \ourMethod particularly well-suited for safety-critical applications, where detecting and responding to novel situations is paramount.

Furthermore, the reliance of off-dynamics RL methods on extensive target domain data limits their practicality in real-world scenarios. For example, H2O and RLPD require half of their replay buffers to be filled with target domain data, which may not be feasible in settings where such data is limited or expensive to collect. \ourMethod addresses this limitation by using target domain data only for validation, making it more data-efficient and scalable.

In summary, these experiments highlight the unique strengths of \ourMethod in handling distribution shifts and detecting OOD events, while also demonstrating its practical advantages over off-dynamics RL methods in terms of data efficiency and safety.

\clearpage

\subsection{\ourMethod Hyperparameter Sensitivity Analysis}\label{app:hyperparams_ablation}
This subsection examines the sensitivity of \ourMethod to its key hyperparameters: the diversity coefficient $\lambda$ and the diversity scale $\delta$. These hyperparameters balance the standard RL objective with the goal of promoting diversity among critics. We evaluate various combinations of $\lambda \in \{ 1\%, 5\%, 10\%, 15\%, 20\% \}$ and $\delta \in \{ 10^{-3}, 10^{-2}, 10^{-1} \}$ across two settings: \Ant, focusing on the agent's mass hyperparameter, and \HalfCheetah, focusing on the initial noise scale. We tested these combinations with three baseline algorithms (AWAC, CQL, and TD3BC), using $5$ random seeds for each configuration, resulting in a total of $450$ experimental runs.

The hyperparameter selection process is grounded in a systematic exploration of the trade-off between diversity regularization and policy optimization. By varying the diversity coefficient $\lambda$ and scale $\delta$, we aim to understand how these parameters influence the learning dynamics of \ourMethod. The chosen ranges reflect a careful consideration of the potential impact of diversity-promoting mechanisms on the RL objective.

\autoref{fig:app_hyperparam_ant} and \autoref{fig:app_hyperparam_cheetah} demonstrate the influence of varying $\lambda$ and $\delta$ values on agent performance during training, on \Ant's mass and \HalfCheetah's initial noise scale, respectively. While our chosen \textit{default} configuration performs well, the results indicate the potential for further performance enhancement through careful hyperparameter tuning. The configurations with the most extreme values, specifically $\lambda \in \{\textcolor[HTML]{9c8e11}{15\%}, \textcolor[HTML]{9847b5}{20\%} \}$ and $\delta = 10^{-1}$, tend to have the most detrimental effect on performance. At higher diversity coefficients, the introduced regularization becomes increasingly aggressive, potentially introducing noise that disrupts the learning process. This suggests an inherent trade-off where excessive diversity constraints can impede the algorithm's ability to converge to optimal behavior. This suggests that moderate values for these hyperparameters generally yield better results, with room for fine-tuning within this range to optimize performance for specific environments or baseline algorithms. The observed performance degradation under extreme configurations can be attributed to an amplified diversity loss, which overshadows the RL objective, thereby reducing the \ourMethod's capacity to effectively optimize its policy.

This analysis highlights the robustness of \ourMethod across a range of hyperparameter values while also indicating opportunities for optimization in specific environments or with particular baseline algorithms. The varied performance across different hyperparameter combinations suggests that fine-tuning these hyperparameters could lead to enhanced results in certain scenarios, though the default configuration provides a strong baseline performance across the tested environments and algorithms.

\begin{figure}[t]
    \centering
    \includegraphics[width=\textwidth]{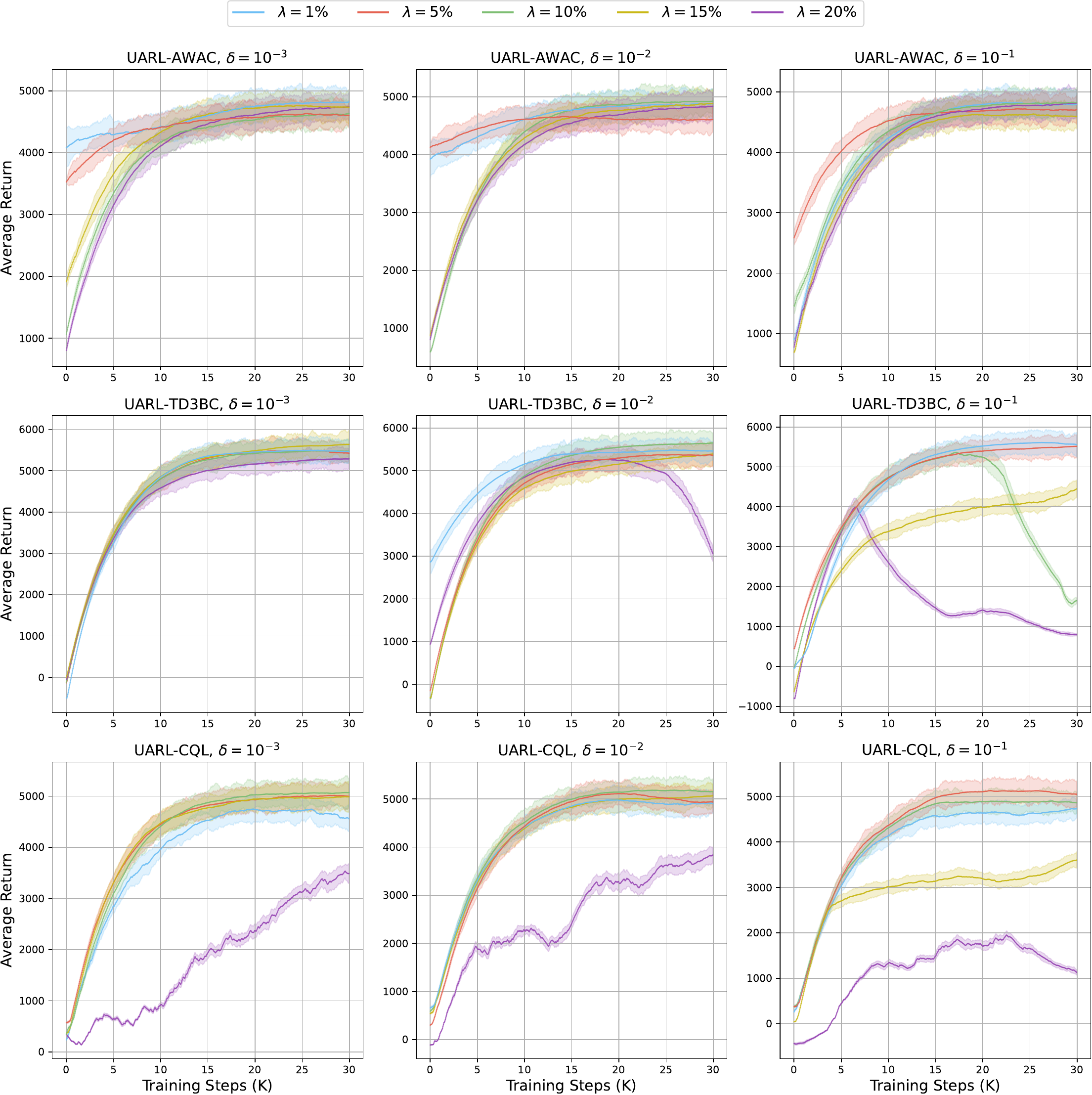}
    \caption{
    Performance sensitivity to hyperparameters $\lambda$ and $\delta$ during training in the \Ant environment, with a focus on the agent's mass hyperparameter. Each row represents a baseline algorithm (AWAC, TD3BC, CQL), while each column corresponds to a fixed value of $\delta$ as indicated. The \textit{default} configuration used in the main paper ($\lambda = 10\%$ and $\delta = 10^{-2}$) is depicted in the middle column, highlighted by the \textcolor[HTML]{7dbe72}{green line}.}
    \label{fig:app_hyperparam_ant}
\end{figure}

\begin{figure}[t]
    \centering
    \includegraphics[width=\textwidth]{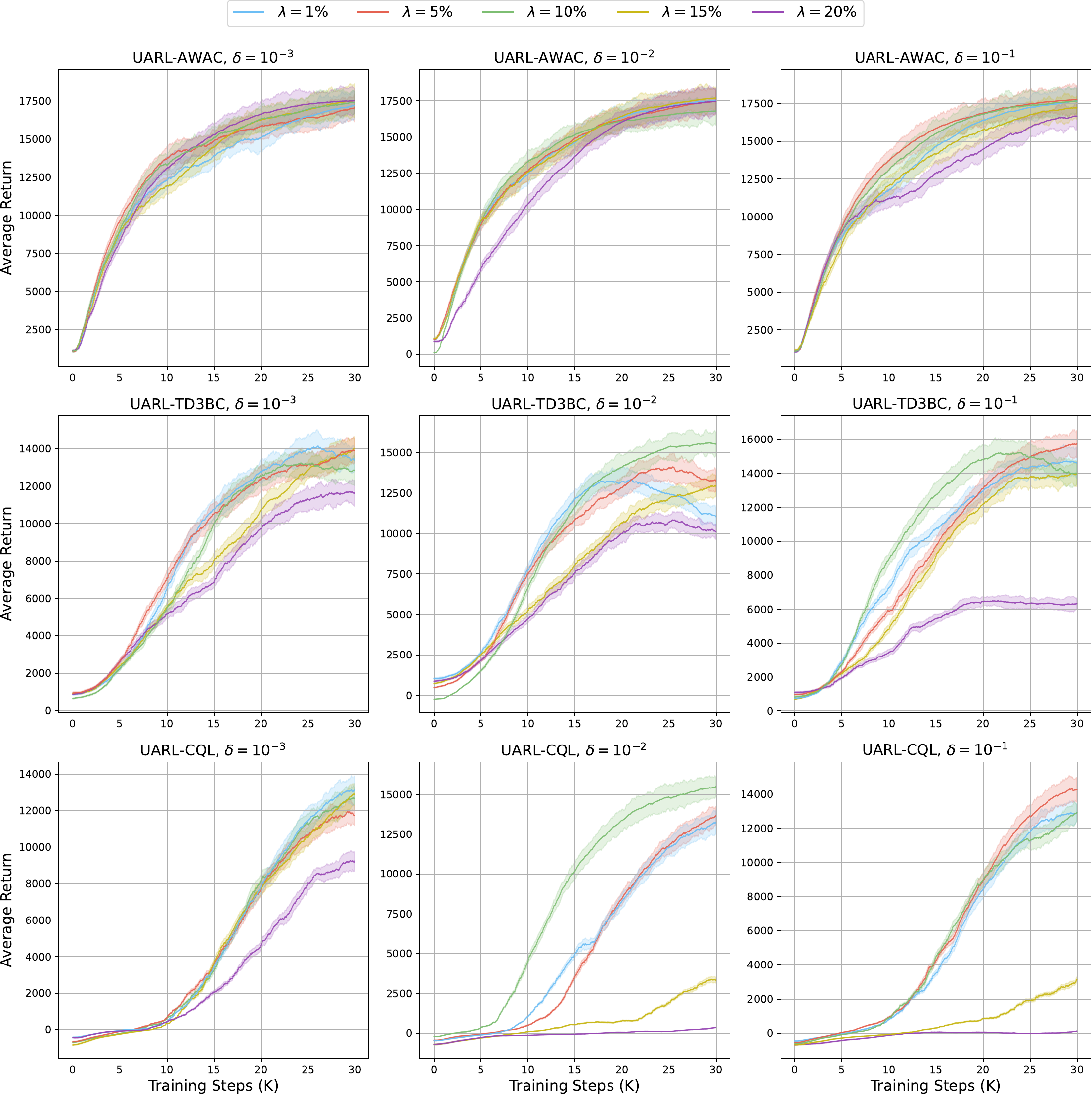}
    \caption{
    Performance sensitivity to hyperparameters $\lambda$ and $\delta$ during training in the \HalfCheetah environment, with a focus on the initial noise scale hyperparameter. Each row represents a baseline algorithm (AWAC, TD3BC, CQL), while each column corresponds to a fixed value of $\delta$ as indicated. The \textit{default} configuration used in the main paper ($\lambda = 10\%$ and $\delta = 10^{-2}$) is depicted in the middle column, highlighted by the \textcolor[HTML]{7dbe72}{green line}.}
    \label{fig:app_hyperparam_cheetah}
\end{figure}

\clearpage

\subsection{Additional ANYmal Details}
\label{app:anymal_exps}

Thus far we have evaluated \ourMethod on MuJoCo tasks with low-dimensional legs and modest dynamics discrepancies. To demonstrate scalability to modern field robots, we now consider the
ANYmal-D quadruped \citep{7758092}, a 12-DoF robot that must
remain safe under hardware constraints such as motor current,
temperature, and large exogenous pushes.
We employ NVIDIA Isaac Gym \citep{rudin2022learning} to run $4096$ parallel instances of ANYmal (flat terrain, no obstacles) on a single NVIDIA RTX 4090 GPU (\autoref{fig:legged_gym}).  Observations are the standard proprioceptive state comprising base linear and angular velocities, gravity vector in the robot frame, previous command, and joint positions and velocities used in \citep{rudin2022learning}; the policy outputs desired joint positions that are converted to torques by a learned actuator network \citep{hwangbo2019learning}. The reward combines policy command tracking, orientation stabilisation, energy penalties and joint smoothness, exactly the default in \citep{rudin2022learning}.

This expanded experimental setup highlights how \ourMethod seamlessly extends to a new domain and real-world robotics, guiding systematic domain randomization and uncertainty-aware adaptation for robust quadruped locomotion.

\begin{figure}[t]
    \centering
    \includegraphics[width=0.7\textwidth]{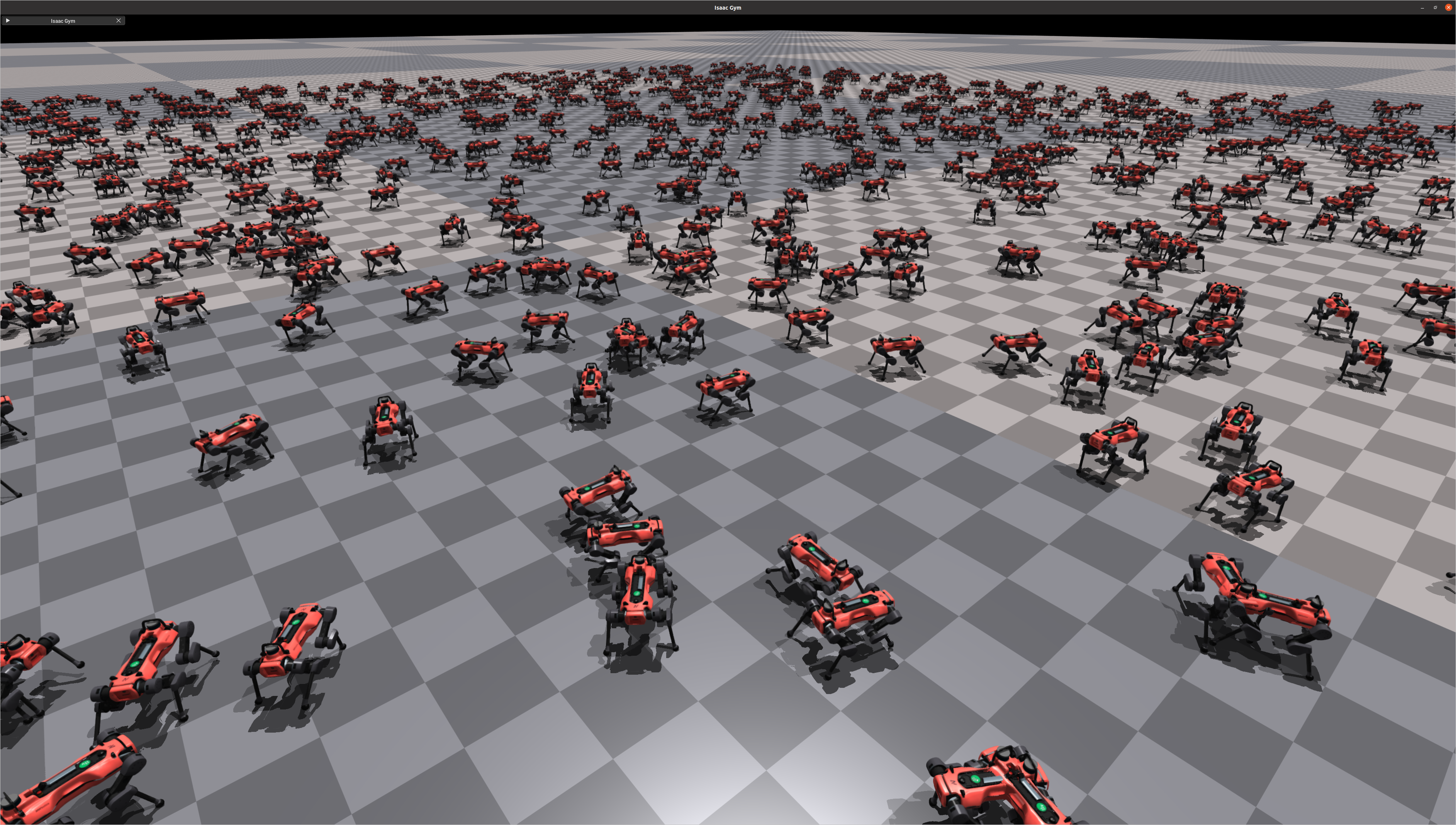}
    \caption{
    Isaac Gym running $4096$ parallel ANYmal simulations.
    Each agent receives randomized dynamics for domain randomization
    curriculum, enabling massive state transitions' on one GPU.
    }
    \label{fig:legged_gym}
\end{figure}

\subsubsection{Progressive Domain Randomization}\label{app:anymal_dr}
We randomize two key parameters, one at a time, and let critic variance decide when to expand the interval:
\begin{itemize}
\item Ground friction initial range $[1.5, 2.0]$ $\rightarrow$ expand by $\Delta=-0.5$ each side in each iteration.
\item Added base mass initial range $[-1,1]\mathrm{kg}$ $\rightarrow$ expand symmetrically by
  $\pm2\mathrm{kg}$ in each iteration.
\end{itemize}

For each parameter we treat the current interval as the \emph{nominal} domain and the first ring outside it as the \emph{repulsive} domain, exactly matching the notation $\dataset, \dataset'$ in \autoref{sec:uarl} (source domain).
When the ensemble variance on the target domain dataset $\dataset_t$ (target domain) exceeds the deployment threshold $\tau$ (\autoref{sec:exp_uncertainty}), we enlarge the nominal interval by one $\Delta$ step and resume training.

\subsubsection{Algorithmic Details}\label{app:anymal_algo}
Because Isaac Gym generates on-policy roll-outs, we swap our offline back-bones for PPO \citep{schulman2017proximal} (RSL-RL implementation \citep{rudin2022learning}) and make the minimal changes needed to inject \ourMethod:
\begin{itemize}
\item Two independent critics share the actor policy; we
  clip the TD target by minimum of Q values to curb over-estimation, and make our method applicable.
\item We append the diversity regulariser   $\mathcal L_{\text{div}}^{\text{RL}}$ (\eqref{eq:denn_rl}) to PPO's value loss (coefficient $\lambda$ chosen exactly as in MuJoCo: keep it $\approx10\%$ of total).
\item PPO's entropy bonus and clipping hyperparameters are left  unchanged; replay-buffer weighting is \emph{not} used because the   data are on-policy.
\end{itemize}

We train with $4096$ parallel workers for $10^3$ training steps. Except mentioned otherwise, the rest of the experiment setup is the same as descibed in \autoref{sec:exp}, i.e. using the same number of seeds for training and reporting $95\%$ CI.

To guide the evaluation, we proceed in four steps. First, we confirm that \ourMethod does not degrade raw locomotion performance relative to PPO (\aref{app:anymal_perf}). Next, we demonstrate our method's online OOD detection capability in simulation (\aref{app:anymal_ood}). We then discuss the safety implications of ignoring uncertainty in real-world deployments (\aref{app:anymal_safety}). 

\subsubsection{Locomotion Performance}\label{app:anymal_perf}

\autoref{fig:anymal_performance} compares vanilla PPO with \ourMethod-PPO on the nominal domain during the full training and fine-tuning phases on $100$ evaluation episodes.
Both agents reach the cumulative reward threshold of $20$ and plateau at virtually identical performance; no degradation is observed when the diversity loss is added.  This confirms that \ourMethod's uncertainty term does not hamper raw locomotion quality.

\begin{figure}[t]
  \centering
  \includegraphics[width=\textwidth]{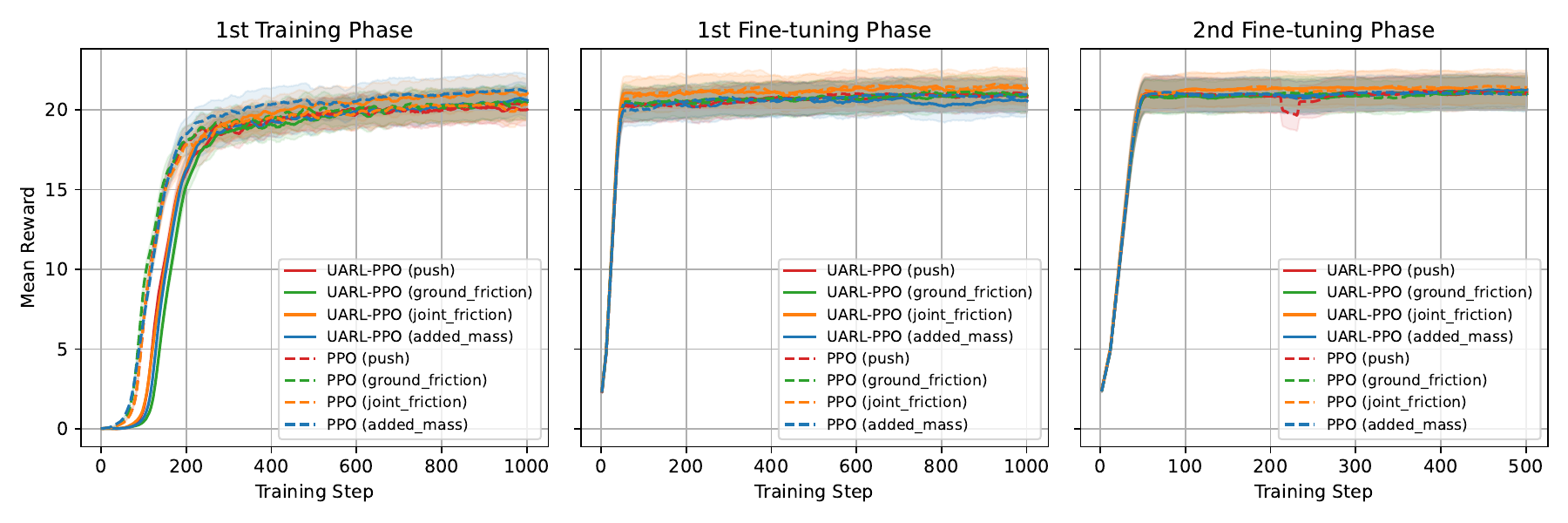}
  \caption{\small
    Mean episode return (smoothed) for PPO and \ourMethod-PPO on ANYmal, across different randomized parameters and iterations. Shaded bands are $95\%$ CI over five seeds.}
  \label{fig:anymal_performance}
\end{figure}

\subsubsection{Online OOD Detection}\label{app:anymal_ood}
We now stress the trained agents in two unseen conditions, each pushing \emph{one} domain parameter outside the final nominal range, following the protocol of \autoref{sec:exp_uncertainty}.
A timestep is flagged as OOD when the critic variance $\sigma^2(s_t,a_t)$ exceeds $\tau$, the $95\%$ envelope of ID variance. \autoref{fig:ood_anymal_mass} demonstrates the OOD detection performance concerning the mass added to the base of the robot, and \autoref{fig:ood_anymal_ground_friction} demonstrates the OOD detection performance concerning the reduced ground friction (slippery floor). Both results tell the same story: \ourMethod-PPO produces a clean separation between \colorbox{sbBlue025}{ID} and \colorbox{sbRed025}{OOD} curves, while baseline PPO exhibits heavy overlap and thus cannot be relied upon for safety monitoring. These results confirm that our uncertainty-aware diversity regularizer provides a robust, online signal for OOD monitoring in quadruped locomotion.

\begin{figure}[t]
    \centering
    \includegraphics[width=\textwidth]{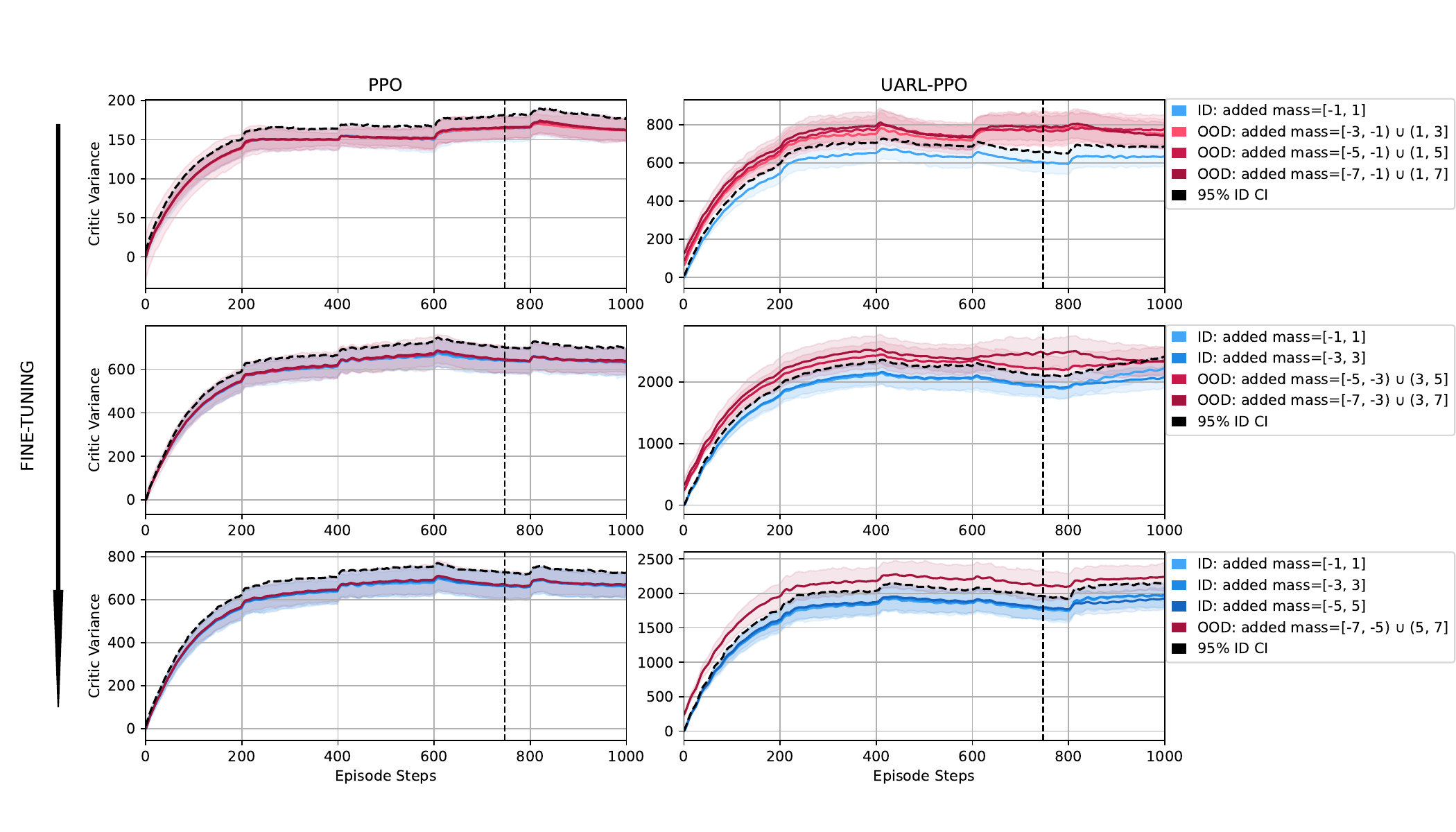}
    \caption{Critic variance across $100$ rollouts in the ANYmal environment for PPO and \ourMethod-PPO methods. The randomized hyperparameter is the added base mass. Each column represents an algorithm, and each row represents a fine-tuning iteration with an expanded ID range shown in the plot's legend. The black dashed line indicates the $95\%$ CI of critic variances for ID samples, serving as an OOD detection threshold. Ideally, a method that is appropriately ``OOD-aware'' should have all the blue-shade curves below the black dashed curve (low uncertainty on ID episodes) and all the red-shade curves above (high uncertainty on OOD episodes). \ourMethod-PPO consistently distinguishes ID from OOD samples, while baseline struggles to do so.
    }

    \label{fig:ood_anymal_mass}
\end{figure}

\begin{figure}[t]
    \centering
    \includegraphics[width=\textwidth]{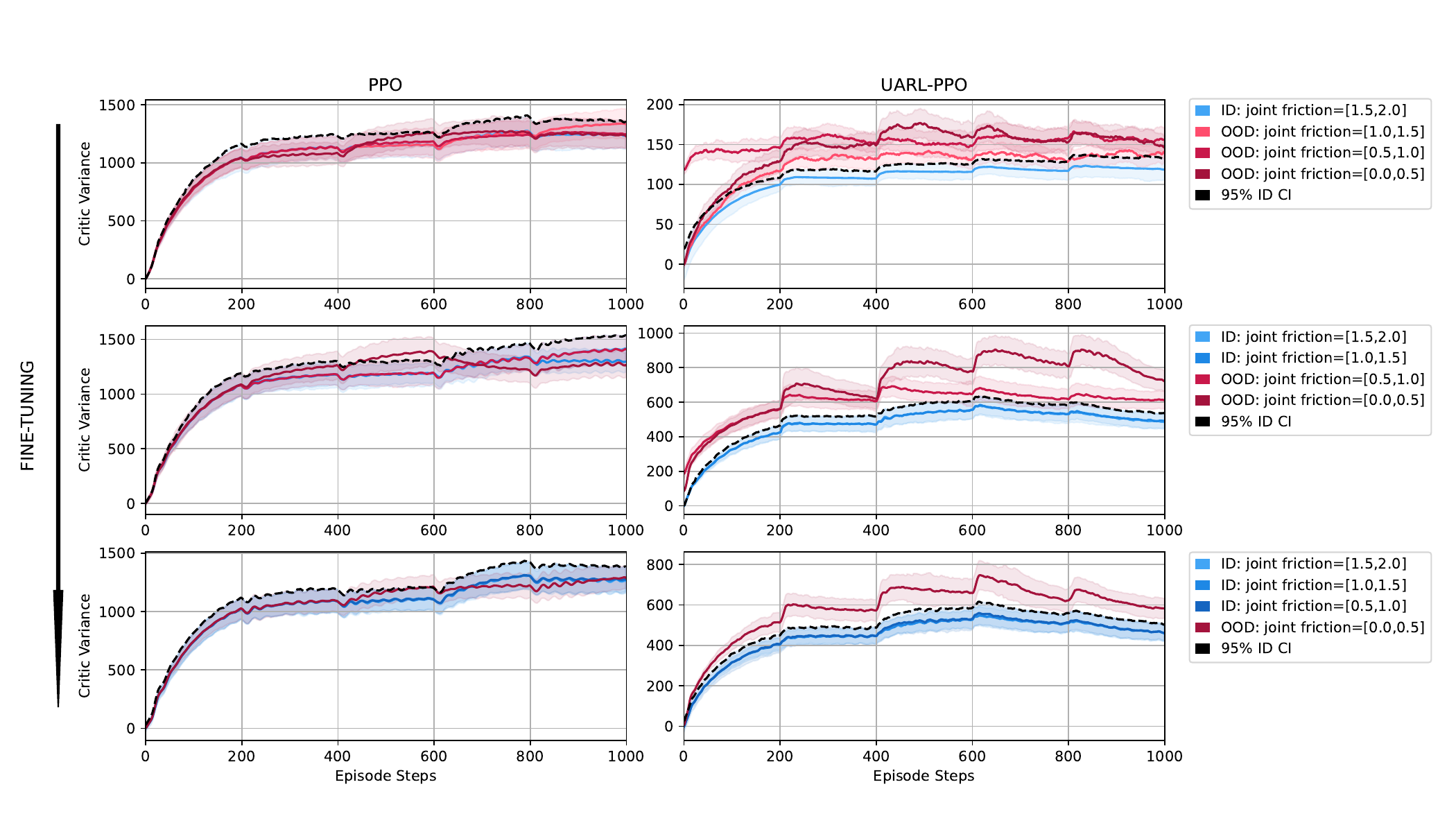}
    \caption{Critic variance across $100$ rollouts in the ANYmal environment for PPO and \ourMethod-PPO methods. The randomized hyperparameter is the ground friction. Each column represents an algorithm, and each row represents a fine-tuning iteration with an expanded ID range shown in the plot's legend. The black dashed line indicates the $95\%$ CI of critic variances for ID samples, serving as an OOD detection threshold. Ideally, a method that is appropriately ``OOD-aware'' should have all the blue-shade curves below the black dashed curve (low uncertainty on ID episodes) and all the red-shade curves above (high uncertainty on OOD episodes). \ourMethod-PPO consistently distinguishes ID from OOD samples, while baseline struggles to do so.}

    \label{fig:ood_anymal_ground_friction}
\end{figure}

\subsubsection{Safety Implications of Ignoring Policy Uncertainty}\label{app:anymal_safety}
While raw locomotion performance (e.g., forward velocity, command following) is often the primary focus in simulation, deploying an ``optimized'' policy without any measure of epistemic uncertainty can be hazardous on real hardware. In particular, imperfect domain randomization, due to mis-specified initial parameter bounds, leads to biased environmental estimates that manifest as unsafe behaviors:
\begin{itemize}
  \item Over-estimated mass range: If the simulated payload mass is randomized to values higher than the true robot mass, the learned policy may adopt ``over-lift'' gaits: the base is held unrealistically high, shifting the center of mass beyond the robot's stability margin and causing tip-overs (\autoref{fig:over_under} (a)).
  \item Under-estimated mass range: Conversely, randomizing mass to values below the true mass drives the policy to crouch excessively close to the ground. Such low stances increase the risk of foot collisions, excessive joint torques, and premature mechanical wear (\autoref{fig:over_under} (b)).
  \item Over-estimated friction: If ground friction is randomized toward overly high values, the policy learns to rely on traction that does not exist on hardware. When confronted with the true, lower friction surface, even nominal gaits can slip catastrophically, leading to falls (\autoref{fig:over_under} (c)).
\end{itemize}

These failure modes are invisible to standard return-based validation in simulation, since the policy ``walks'' despite operating under incorrect dynamics. By contrast, our uncertainty-aware deployment gate (\autoref{sec:uarl}) uses critic-ensemble variance on proxy real-world data as an explicit safety check, ensuring that only policies whose epistemic uncertainty falls below the measured threshold are deployed to the actual robot. This mechanism prevents over/under confident behaviors that could compromise balance, damage hardware, or endanger bystanders.

\begin{figure}[htbp]
    \centering
    \begin{subfigure}{}
        \centering
        \includegraphics[width=0.25\textwidth]{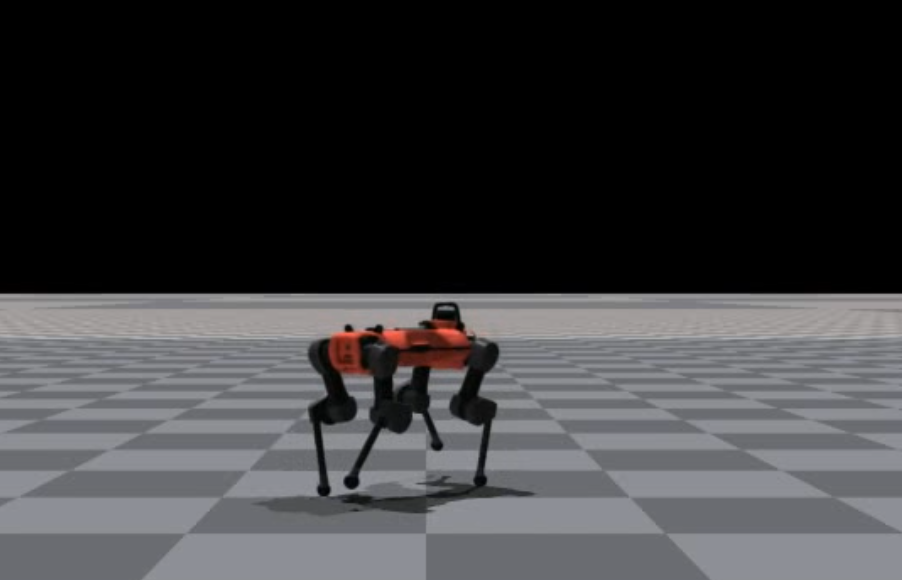}
    \end{subfigure}
    \begin{subfigure}{}
        \centering
        \includegraphics[width=0.25\textwidth]{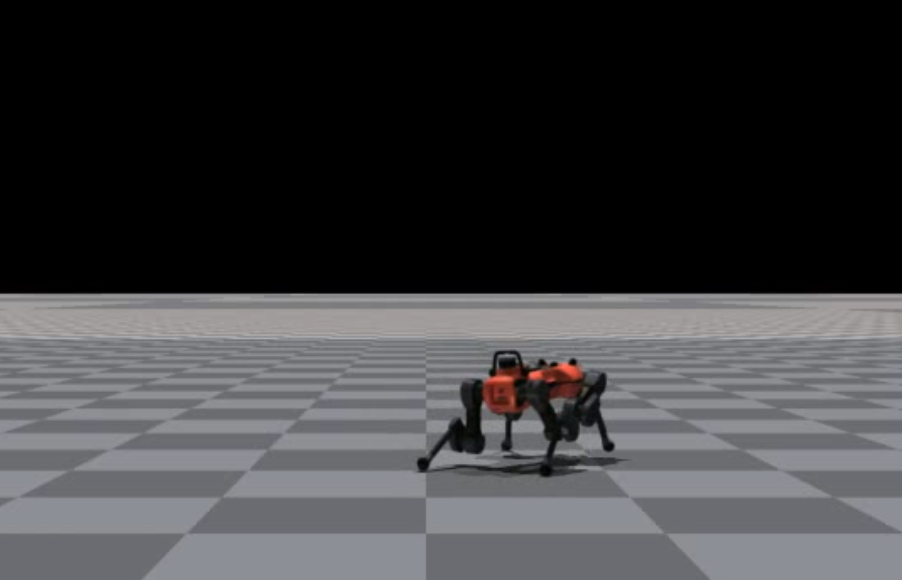}
    \end{subfigure}
    \begin{subfigure}{}
        \centering
        \includegraphics[width=0.25\textwidth]{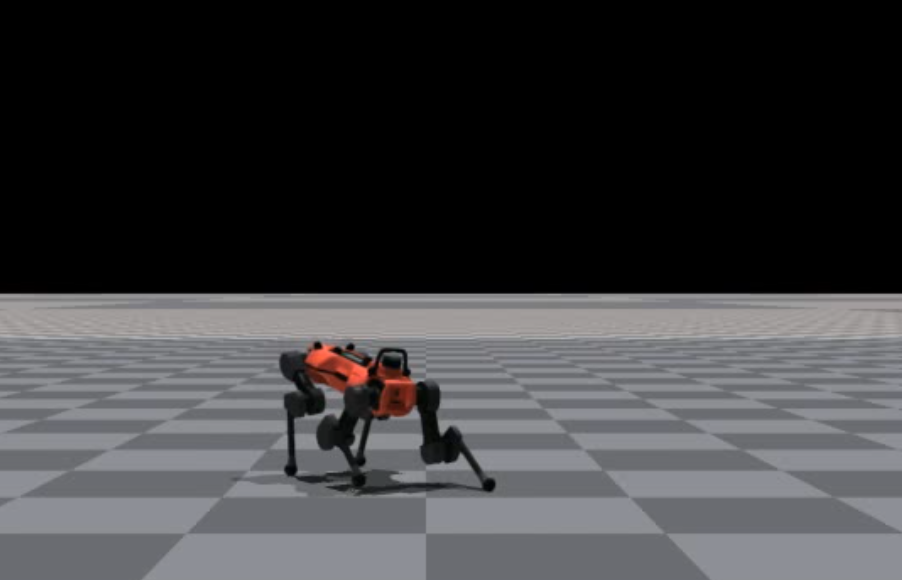}
    \end{subfigure}
    \caption{Safety-critical failure modes resulting from mis-specified domain randomization bounds: (a) over-estimated payload mass induces ``over‐lift'' gaits, shifting the robot's center of mass beyond its stability margin and causing tip-overs; (b) under-estimated payload mass drives excessively low stances, increasing the risk of foot collisions, high joint torques, and mechanical wear; (c) over-estimated ground friction leads to reliance on traction that does not exist on hardware, resulting in catastrophic slips and falls.}
    \label{fig:over_under}
\end{figure}

\clearpage
\newpage

\section*{NeurIPS Paper Checklist}

\begin{enumerate}

\item {\bf Claims}
    \item[] Question: Do the main claims made in the abstract and introduction accurately reflect the paper's contributions and scope?
    \item[] Answer: \answerYes{} 
    \item[] Justification: See \autoref{sec:uarl} and \autoref{sec:exp}.
    \item[] Guidelines:
    \begin{itemize}
        \item The answer NA means that the abstract and introduction do not include the claims made in the paper.
        \item The abstract and/or introduction should clearly state the claims made, including the contributions made in the paper and important assumptions and limitations. A No or NA answer to this question will not be perceived well by the reviewers. 
        \item The claims made should match theoretical and experimental results, and reflect how much the results can be expected to generalize to other settings. 
        \item It is fine to include aspirational goals as motivation as long as it is clear that these goals are not attained by the paper. 
    \end{itemize}

\item {\bf Limitations}
    \item[] Question: Does the paper discuss the limitations of the work performed by the authors?
    \item[] Answer: \answerYes{} 
    \item[] Justification: See \autoref{sec:conc_limit}, \aref{app:limits}, and \aref{app:theory}.
    \item[] Guidelines:
    \begin{itemize}
        \item The answer NA means that the paper has no limitation while the answer No means that the paper has limitations, but those are not discussed in the paper. 
        \item The authors are encouraged to create a separate "Limitations" section in their paper.
        \item The paper should point out any strong assumptions and how robust the results are to violations of these assumptions (e.g., independence assumptions, noiseless settings, model well-specification, asymptotic approximations only holding locally). The authors should reflect on how these assumptions might be violated in practice and what the implications would be.
        \item The authors should reflect on the scope of the claims made, e.g., if the approach was only tested on a few datasets or with a few runs. In general, empirical results often depend on implicit assumptions, which should be articulated.
        \item The authors should reflect on the factors that influence the performance of the approach. For example, a facial recognition algorithm may perform poorly when image resolution is low or images are taken in low lighting. Or a speech-to-text system might not be used reliably to provide closed captions for online lectures because it fails to handle technical jargon.
        \item The authors should discuss the computational efficiency of the proposed algorithms and how they scale with dataset size.
        \item If applicable, the authors should discuss possible limitations of their approach to address problems of privacy and fairness.
        \item While the authors might fear that complete honesty about limitations might be used by reviewers as grounds for rejection, a worse outcome might be that reviewers discover limitations that aren't acknowledged in the paper. The authors should use their best judgment and recognize that individual actions in favor of transparency play an important role in developing norms that preserve the integrity of the community. Reviewers will be specifically instructed to not penalize honesty concerning limitations.
    \end{itemize}

\item {\bf Theory assumptions and proofs}
    \item[] Question: For each theoretical result, does the paper provide the full set of assumptions and a complete (and correct) proof?
    \item[] Answer: \answerYes{} 
    \item[] Justification: See \aref{app:theory}.
    \item[] Guidelines:
    \begin{itemize}
        \item The answer NA means that the paper does not include theoretical results. 
        \item All the theorems, formulas, and proofs in the paper should be numbered and cross-referenced.
        \item All assumptions should be clearly stated or referenced in the statement of any theorems.
        \item The proofs can either appear in the main paper or the supplemental material, but if they appear in the supplemental material, the authors are encouraged to provide a short proof sketch to provide intuition. 
        \item Inversely, any informal proof provided in the core of the paper should be complemented by formal proofs provided in appendix or supplemental material.
        \item Theorems and Lemmas that the proof relies upon should be properly referenced. 
    \end{itemize}

    \item {\bf Experimental result reproducibility}
    \item[] Question: Does the paper fully disclose all the information needed to reproduce the main experimental results of the paper to the extent that it affects the main claims and/or conclusions of the paper (regardless of whether the code and data are provided or not)?
    \item[] Answer: \answerYes{} 
    \item[] Justification: See \autoref{sec:exp}, \aref{app:exp_setup}, and \aref{app:ablation}.
    \item[] Guidelines:
    \begin{itemize}
        \item The answer NA means that the paper does not include experiments.
        \item If the paper includes experiments, a No answer to this question will not be perceived well by the reviewers: Making the paper reproducible is important, regardless of whether the code and data are provided or not.
        \item If the contribution is a dataset and/or model, the authors should describe the steps taken to make their results reproducible or verifiable. 
        \item Depending on the contribution, reproducibility can be accomplished in various ways. For example, if the contribution is a novel architecture, describing the architecture fully might suffice, or if the contribution is a specific model and empirical evaluation, it may be necessary to either make it possible for others to replicate the model with the same dataset, or provide access to the model. In general. releasing code and data is often one good way to accomplish this, but reproducibility can also be provided via detailed instructions for how to replicate the results, access to a hosted model (e.g., in the case of a large language model), releasing of a model checkpoint, or other means that are appropriate to the research performed.
        \item While NeurIPS does not require releasing code, the conference does require all submissions to provide some reasonable avenue for reproducibility, which may depend on the nature of the contribution. For example
        \begin{enumerate}
            \item If the contribution is primarily a new algorithm, the paper should make it clear how to reproduce that algorithm.
            \item If the contribution is primarily a new model architecture, the paper should describe the architecture clearly and fully.
            \item If the contribution is a new model (e.g., a large language model), then there should either be a way to access this model for reproducing the results or a way to reproduce the model (e.g., with an open-source dataset or instructions for how to construct the dataset).
            \item We recognize that reproducibility may be tricky in some cases, in which case authors are welcome to describe the particular way they provide for reproducibility. In the case of closed-source models, it may be that access to the model is limited in some way (e.g., to registered users), but it should be possible for other researchers to have some path to reproducing or verifying the results.
        \end{enumerate}
    \end{itemize}

\item {\bf Open access to data and code}
    \item[] Question: Does the paper provide open access to the data and code, with sufficient instructions to faithfully reproduce the main experimental results, as described in supplemental material?
    \item[] Answer: \answerYes{} 
    \item[] Justification: See the last paragraph of \autoref{sec:intro}.
    \item[] Guidelines:
    \begin{itemize}
        \item The answer NA means that paper does not include experiments requiring code.
        \item Please see the NeurIPS code and data submission guidelines (\url{https://nips.cc/public/guides/CodeSubmissionPolicy}) for more details.
        \item While we encourage the release of code and data, we understand that this might not be possible, so “No” is an acceptable answer. Papers cannot be rejected simply for not including code, unless this is central to the contribution (e.g., for a new open-source benchmark).
        \item The instructions should contain the exact command and environment needed to run to reproduce the results. See the NeurIPS code and data submission guidelines (\url{https://nips.cc/public/guides/CodeSubmissionPolicy}) for more details.
        \item The authors should provide instructions on data access and preparation, including how to access the raw data, preprocessed data, intermediate data, and generated data, etc.
        \item The authors should provide scripts to reproduce all experimental results for the new proposed method and baselines. If only a subset of experiments are reproducible, they should state which ones are omitted from the script and why.
        \item At submission time, to preserve anonymity, the authors should release anonymized versions (if applicable).
        \item Providing as much information as possible in supplemental material (appended to the paper) is recommended, but including URLs to data and code is permitted.
    \end{itemize}

\item {\bf Experimental setting/details}
    \item[] Question: Does the paper specify all the training and test details (e.g., data splits, hyperparameters, how they were chosen, type of optimizer, etc.) necessary to understand the results?
    \item[] Answer: \answerYes{} 
    \item[] Justification: See \autoref{sec:exp}, \aref{app:exp_setup}, and \aref{app:ablation}.
    \item[] Guidelines:
    \begin{itemize}
        \item The answer NA means that the paper does not include experiments.
        \item The experimental setting should be presented in the core of the paper to a level of detail that is necessary to appreciate the results and make sense of them.
        \item The full details can be provided either with the code, in appendix, or as supplemental material.
    \end{itemize}

\item {\bf Experiment statistical significance}
    \item[] Question: Does the paper report error bars suitably and correctly defined or other appropriate information about the statistical significance of the experiments?
    \item[] Answer: \answerYes{} 
    \item[] Justification: See \autoref{sec:exp} and \aref{app:ablation}.
    \item[] Guidelines:
    \begin{itemize}
        \item The answer NA means that the paper does not include experiments.
        \item The authors should answer "Yes" if the results are accompanied by error bars, confidence intervals, or statistical significance tests, at least for the experiments that support the main claims of the paper.
        \item The factors of variability that the error bars are capturing should be clearly stated (for example, train/test split, initialization, random drawing of some parameter, or overall run with given experimental conditions).
        \item The method for calculating the error bars should be explained (closed form formula, call to a library function, bootstrap, etc.)
        \item The assumptions made should be given (e.g., Normally distributed errors).
        \item It should be clear whether the error bar is the standard deviation or the standard error of the mean.
        \item It is OK to report 1-sigma error bars, but one should state it. The authors should preferably report a 2-sigma error bar than state that they have a 96\% CI, if the hypothesis of Normality of errors is not verified.
        \item For asymmetric distributions, the authors should be careful not to show in tables or figures symmetric error bars that would yield results that are out of range (e.g. negative error rates).
        \item If error bars are reported in tables or plots, The authors should explain in the text how they were calculated and reference the corresponding figures or tables in the text.
    \end{itemize}

\item {\bf Experiments compute resources}
    \item[] Question: For each experiment, does the paper provide sufficient information on the computer resources (type of compute workers, memory, time of execution) needed to reproduce the experiments?
    \item[] Answer: \answerYes{} 
    \item[] Justification: See \aref{app:limits}.
    \item[] Guidelines:
    \begin{itemize}
        \item The answer NA means that the paper does not include experiments.
        \item The paper should indicate the type of compute workers CPU or GPU, internal cluster, or cloud provider, including relevant memory and storage.
        \item The paper should provide the amount of compute required for each of the individual experimental runs as well as estimate the total compute. 
        \item The paper should disclose whether the full research project required more compute than the experiments reported in the paper (e.g., preliminary or failed experiments that didn't make it into the paper). 
    \end{itemize}
    
\item {\bf Code of ethics}
    \item[] Question: Does the research conducted in the paper conform, in every respect, with the NeurIPS Code of Ethics \url{https://neurips.cc/public/EthicsGuidelines}?
    \item[] Answer: \answerYes{} 
    \item[] Justification: \justificationTODO{}
    \item[] Guidelines:
    \begin{itemize}
        \item The answer NA means that the authors have not reviewed the NeurIPS Code of Ethics.
        \item If the authors answer No, they should explain the special circumstances that require a deviation from the Code of Ethics.
        \item The authors should make sure to preserve anonymity (e.g., if there is a special consideration due to laws or regulations in their jurisdiction).
    \end{itemize}

\item {\bf Broader impacts}
    \item[] Question: Does the paper discuss both potential positive societal impacts and negative societal impacts of the work performed?
    \item[] Answer: \answerNA{} 
    \item[] Justification: \justificationTODO{}
    \item[] Guidelines:
    \begin{itemize}
        \item The answer NA means that there is no societal impact of the work performed.
        \item If the authors answer NA or No, they should explain why their work has no societal impact or why the paper does not address societal impact.
        \item Examples of negative societal impacts include potential malicious or unintended uses (e.g., disinformation, generating fake profiles, surveillance), fairness considerations (e.g., deployment of technologies that could make decisions that unfairly impact specific groups), privacy considerations, and security considerations.
        \item The conference expects that many papers will be foundational research and not tied to particular applications, let alone deployments. However, if there is a direct path to any negative applications, the authors should point it out. For example, it is legitimate to point out that an improvement in the quality of generative models could be used to generate deepfakes for disinformation. On the other hand, it is not needed to point out that a generic algorithm for optimizing neural networks could enable people to train models that generate Deepfakes faster.
        \item The authors should consider possible harms that could arise when the technology is being used as intended and functioning correctly, harms that could arise when the technology is being used as intended but gives incorrect results, and harms following from (intentional or unintentional) misuse of the technology.
        \item If there are negative societal impacts, the authors could also discuss possible mitigation strategies (e.g., gated release of models, providing defenses in addition to attacks, mechanisms for monitoring misuse, mechanisms to monitor how a system learns from feedback over time, improving the efficiency and accessibility of ML).
    \end{itemize}
    
\item {\bf Safeguards}
    \item[] Question: Does the paper describe safeguards that have been put in place for responsible release of data or models that have a high risk for misuse (e.g., pretrained language models, image generators, or scraped datasets)?
    \item[] Answer: \answerNA{} 
    \item[] Justification: \justificationTODO{}
    \item[] Guidelines:
    \begin{itemize}
        \item The answer NA means that the paper poses no such risks.
        \item Released models that have a high risk for misuse or dual-use should be released with necessary safeguards to allow for controlled use of the model, for example by requiring that users adhere to usage guidelines or restrictions to access the model or implementing safety filters. 
        \item Datasets that have been scraped from the Internet could pose safety risks. The authors should describe how they avoided releasing unsafe images.
        \item We recognize that providing effective safeguards is challenging, and many papers do not require this, but we encourage authors to take this into account and make a best faith effort.
    \end{itemize}

\item {\bf Licenses for existing assets}
    \item[] Question: Are the creators or original owners of assets (e.g., code, data, models), used in the paper, properly credited and are the license and terms of use explicitly mentioned and properly respected?
    \item[] Answer: \answerYes{} 
    \item[] Justification: \justificationTODO{}
    \item[] Guidelines:
    \begin{itemize}
        \item The answer NA means that the paper does not use existing assets.
        \item The authors should cite the original paper that produced the code package or dataset.
        \item The authors should state which version of the asset is used and, if possible, include a URL.
        \item The name of the license (e.g., CC-BY 4.0) should be included for each asset.
        \item For scraped data from a particular source (e.g., website), the copyright and terms of service of that source should be provided.
        \item If assets are released, the license, copyright information, and terms of use in the package should be provided. For popular datasets, \url{paperswithcode.com/datasets} has curated licenses for some datasets. Their licensing guide can help determine the license of a dataset.
        \item For existing datasets that are re-packaged, both the original license and the license of the derived asset (if it has changed) should be provided.
        \item If this information is not available online, the authors are encouraged to reach out to the asset's creators.
    \end{itemize}

\item {\bf New assets}
    \item[] Question: Are new assets introduced in the paper well documented and is the documentation provided alongside the assets?
    \item[] Answer: \answerNA{} 
    \item[] Justification: \justificationTODO{}
    \item[] Guidelines:
    \begin{itemize}
        \item The answer NA means that the paper does not release new assets.
        \item Researchers should communicate the details of the dataset/code/model as part of their submissions via structured templates. This includes details about training, license, limitations, etc. 
        \item The paper should discuss whether and how consent was obtained from people whose asset is used.
        \item At submission time, remember to anonymize your assets (if applicable). You can either create an anonymized URL or include an anonymized zip file.
    \end{itemize}

\item {\bf Crowdsourcing and research with human subjects}
    \item[] Question: For crowdsourcing experiments and research with human subjects, does the paper include the full text of instructions given to participants and screenshots, if applicable, as well as details about compensation (if any)? 
    \item[] Answer: \answerNA{} 
    \item[] Justification: \justificationTODO{}
    \item[] Guidelines:
    \begin{itemize}
        \item The answer NA means that the paper does not involve crowdsourcing nor research with human subjects.
        \item Including this information in the supplemental material is fine, but if the main contribution of the paper involves human subjects, then as much detail as possible should be included in the main paper. 
        \item According to the NeurIPS Code of Ethics, workers involved in data collection, curation, or other labor should be paid at least the minimum wage in the country of the data collector. 
    \end{itemize}

\item {\bf Institutional review board (IRB) approvals or equivalent for research with human subjects}
    \item[] Question: Does the paper describe potential risks incurred by study participants, whether such risks were disclosed to the subjects, and whether Institutional Review Board (IRB) approvals (or an equivalent approval/review based on the requirements of your country or institution) were obtained?
    \item[] Answer: \answerNA{} 
    \item[] Justification: \justificationTODO{}
    \item[] Guidelines:
    \begin{itemize}
        \item The answer NA means that the paper does not involve crowdsourcing nor research with human subjects.
        \item Depending on the country in which research is conducted, IRB approval (or equivalent) may be required for any human subjects research. If you obtained IRB approval, you should clearly state this in the paper. 
        \item We recognize that the procedures for this may vary significantly between institutions and locations, and we expect authors to adhere to the NeurIPS Code of Ethics and the guidelines for their institution. 
        \item For initial submissions, do not include any information that would break anonymity (if applicable), such as the institution conducting the review.
    \end{itemize}

\item {\bf Declaration of LLM usage}
    \item[] Question: Does the paper describe the usage of LLMs if it is an important, original, or non-standard component of the core methods in this research? Note that if the LLM is used only for writing, editing, or formatting purposes and does not impact the core methodology, scientific rigorousness, or originality of the research, declaration is not required.
    \item[] Answer: \answerNA{} 
    \item[] Justification: \justificationTODO{}
    \item[] Guidelines:
    \begin{itemize}
        \item The answer NA means that the core method development in this research does not involve LLMs as any important, original, or non-standard components.
        \item Please refer to our LLM policy (\url{https://neurips.cc/Conferences/2025/LLM}) for what should or should not be described.
    \end{itemize}

\end{enumerate}

\end{document}